\definecolor{LightCyan}{rgb}{1.0, 0.8, 0.8}
\newcommand{\algo}{\texttt{RCDB}}
\newcommand{\update}[1]{\textcolor{blue}{#1}}
\newcommand{\alglinelabel}{%
  \addtocounter{ALC@line}{-1}
  \refstepcounter{ALC@line}
  \label
}
\icmltitlerunning{Nearly Optimal Algorithms for Contextual
Dueling Bandits from Adversarial Feedback}
\begin{document}

\twocolumn[
\icmltitle{Nearly Optimal Algorithms for Contextual
Dueling Bandits from Adversarial Feedback}



\icmlsetsymbol{equal}{*}

\begin{icmlauthorlist}
\icmlauthor{Qiwei Di}{ucla}
\icmlauthor{Jiafan He}{ucla}
\icmlauthor{Quanquan Gu}{ucla}
\end{icmlauthorlist}

\icmlaffiliation{ucla}{Department of Computer Science,  University of California, Los Angeles, CA 90095, USA}

\icmlcorrespondingauthor{Quanquan Gu}{qgu@cs.ucla.edu}

\icmlkeywords{Machine Learning, ICML}

\vskip 0.3in
]



\printAffiliationsAndNotice{}  

\begin{abstract}
  Learning from human feedback plays an important role in aligning generative models, such as large language models (LLM). However, the effectiveness of this approach can be influenced by adversaries, who may intentionally provide misleading preferences to manipulate the output in an undesirable or harmful direction.
To tackle this challenge, we study a specific model within this problem domain--contextual dueling bandits with adversarial feedback, where the true preference label can be flipped by an adversary. We propose an algorithm namely robust contextual dueling bandits (\algo), which is based on uncertainty-weighted maximum likelihood estimation.  Our algorithm achieves an $\tilde O(d\sqrt{T}/\kappa+dC/\kappa)$ regret bound, where $T$ is the number of rounds, $d$ is the dimension of the context, $\kappa$ is the lower bound of the derivative of the link function, and $  0 \le C \le T$ is the total number of adversarial feedback. 
We also prove a lower bound to show that our regret bound is nearly optimal, both in scenarios with and without ($C=0$) adversarial feedback. Our work is the first to achieve nearly minimax optimal regret for dueling bandits in the presence of adversarial preference feedback. 
Additionally, for the sigmoid link function, we develop a novel algorithm that takes into account the effect of local derivatives into maximum likelihood estimation (MLE) analysis through a refined method for estimating the link function's derivative. This method helps us to eliminate the $\kappa$ dependence in the leading term with respect to $T$, which reduces the exponential dependence on the parameter radius $B$ to a polynomial dependence. We conduct experiments to evaluate our proposed algorithm \algo \ against various types of adversarial feedback. Experimental results demonstrate its superiority over the state-of-the-art dueling bandit algorithms in the presence of adversarial feedback.
\end{abstract}

\section{Introduction}


Acquiring an appropriate reward proves challenging in numerous real-world applications, often necessitating intricate instrumentation \citep{zhu2020ingredients} and time-consuming calibration \citep{yu2020meta} to achieve satisfactory levels of sample efficiency. For instance, in training large language models (LLM) using reinforcement learning from human feedback (RLHF), the diverse values and perspectives of humans can lead to uncalibrated and noisy rewards \citep{ouyang2022training}.
In contrast, preference-based data, which involves comparing or ranking various actions, is a more straightforward method for capturing human judgments and decisions. In this context, the dueling bandit model \citep{yue2012k} provides a problem framework that focuses on optimal decision-making through pairwise comparisons, rather than relying on the absolute reward for each action.

However, human feedback may not always be reliable. In real-world applications, human feedback is particularly vulnerable to manipulation through preference label flip. Adversarial feedback can significantly increase the risk of misleading a large language model (LLM) into erroneously prioritizing harmful content, under the false belief that it reflects human preference.

Despite the significant influence of adversarial feedback, there is limited existing research on the impact of adversarial feedback specifically within the context of dueling bandits. A notable exception is \citet{agarwal2021stochastic}, which studies dueling bandits when an adversary can flip some of the preference labels received by the learner. They proposed an algorithm that is agnostic to the amount of adversarial feedback introduced by the adversary. 
However, their setting has the following two limitations.
First, their study was confined to a finite-armed setting, which renders their results less applicable to modern applications such as RLHF. Second, their adversarial feedback is defined on the whole comparison matrix. In each round, the adversary observes the outcomes of all pairwise comparisons and then decides to corrupt some of the pairs before the agent selects the actions.
  This assumption does not align well with the real-world scenario, where the adversary often flips the preference label based on the information of the selected actions.
  
In this paper, to address the above challenge, we aim to develop contextual dueling bandit algorithms that are robust to adversarial feedback. This enables us to effectively tackle problems involving a large number of actions while also taking advantage of contextual information. We specifically consider a scenario where the adversary knows the selected action pair and the true preference of their comparison. In this setting, the adversary's only decision is whether to flip the preference label or not. We highlight our contributions as follows:
 
 \begin{itemize}[leftmargin=*]
     \item We propose a new algorithm called robust contextual dueling bandits (\algo), which integrates uncertainty-dependent weights into the Maximum Likelihood Estimator (MLE). Intuitively, our choice of weight is designed to induce a higher degree of skepticism about potentially ``untrustworthy'' feedback. The agent is encouraged to focus more on feedback that is more likely to be genuine, effectively diminishing the impact of any adversarial feedback.
     \item We analyze the regret of our algorithm under at most $C$ number of adversarial feedback. For known adversarial level, our result consists of two terms: a $C$-independent term $\tilde O(d\sqrt{T}/\kappa)$,  and a $C$-dependent term $\tilde O(dC/\kappa)$, where $\kappa$ is the lower bound of the derivative of the link function. Additionally, we establish a lower bound for dueling bandits with adversarial feedback, which demonstrates that our algorithm achieves optimal regret both in settings with and without adversarial feedback.
     \item When the adversarial level is unknown, we conduct our algorithm with an optimistic estimator of the number of adversarial feedback and prove the optimality of our result in case of a strong adversary. To the best of our knowledge, our work is the first to achieve nearly minimax optimal regret for dueling bandits in the presence of adversarial preference feedback, regardless of whether the amount of adversarial feedback is known.
     \item For the sigmoid link function, we develop a novel algorithm called Robust Contextual Dueling Bandit for Sigmoid link function (\texttt{RCDB-S}). Rather than using the uniform lower bound $\kappa$, we introduce local derivatives in maximum likelihood estimation (MLE) analysis through a refined estimation method of the link function's derivative. Our theoretical analysis establishes that \texttt{RCDB-S} achieves a regret bound of $\tilde O\big({dB^{1.5}\sqrt{T}} + {dBC}/{\kappa})$, where we eliminate the $\kappa$ dependence in the leading term with respect to $T$. This represents a significant improvement over prior works, reducing the exponential dependence on the parameter radius $B$ to a polynomial dependence.
     \item We conduct experiments to validate the effectiveness of our algorithm \algo \ (See Appendix \ref{sec:experiment}). To further assess \algo's robustness against adversarial feedback, we evaluate its performance under various types of adversarial feedback and compare the results with state-of-the-art dueling bandit algorithms. Experimental results demonstrate the superiority of our algorithm in the presence of adversarial feedback, which corroborate our theoretical analysis.
 \end{itemize}
 
 \newcolumntype{g}{>{\columncolor{LightCyan}}c}
\begin{table*}[t!]
\caption{Comparison of algorithms for robust bandits and dueling bandits. }\label{table:11}
\centering
\renewcommand{\arraystretch}{0.91}
\resizebox{2\columnwidth}{!}{%
\begin{tabular}{cgggg}
\toprule
\rowcolor{white}
Model & Algorithm & Setting & Regret 
 \\
\midrule
\rowcolor{white}
  & Multi-layer
Active Arm Elimination Race  &    \\
\rowcolor{white}
 &\small{\citep{lykouris2018stochastic}} & \multirow{-2}{*}{$K$-armed Bandits}& \multirow{-2}{*}{$\tilde  O\big(K^{1.5}C\sqrt{T}\big)$} \\
 
 \rowcolor{white}
 &BARBAR  &     \\
 \rowcolor{white}
 &\small{\citep{gupta2019better}}& \multirow{-2}{*}{$K$-armed Bandits} & \multirow{-2}{*}{$\tilde  O\big(\sqrt{KT} + KC\big)$}\\
  \rowcolor{white}
\rowcolor{white}  & SBE  &   \\
 \rowcolor{white}
&\small{\citep{li2019stochastic}}& \multirow{-2}{*}{Linear Bandits} & \multirow{-2}{*}{$\tilde  O\big(d^{2}C/\Delta + d^5/\Delta^2\big)$}\\
\rowcolor{white}

 \multirow{-4}{*}{Bandits}
  &{Robust Phased Elimination } &    \\
   \rowcolor{white}
 &\small{\citep{bogunovic2021stochastic}}& \multirow{-2}{*}{Linear Bandits} &\multirow{-2}{*}{ $\tilde O\big(\sqrt{dT} + d^{1.5}C + C^2\big)$} \\

 \rowcolor{white}
  &{Robust weighted OFUL } &    \\
   \rowcolor{white}
 &\small{\citep{zhao2021linear}}& \multirow{-2}{*}{Linear Contextual Bandits} &\multirow{-2}{*}{ $\tilde O\big(dC\sqrt{T}\big)$} \\

 \rowcolor{white}
  &{CW-OFUL } &    \\
   \rowcolor{white}
 &\small{\citep{he2022nearly}}& \multirow{-2}{*}{Linear Contextual Bandits} &\multirow{-2}{*}{ $\tilde O\big(d\sqrt{T} + dC\big)$} \\
 
 \midrule

 \rowcolor{white}
&WIWR &  \\
\rowcolor{white}
 &\small{\citep{agarwal2021stochastic}}& \multirow{-2}{*}{$K$-armed Dueling Bandits}  & \multirow{-2}{*}{$\tilde  O\big(K^2 C/ \Delta_{\text{min}} + \sum_{i\neq i^*} K^2/\Delta_i^2\big)$}  \\

 \rowcolor{white}
 &{ Versatile-DB } &    \\
   \rowcolor{white}
  \multirow{-2}{*}{Dueling Bandits} &\small{\citep{saha2022versatile}}& \multirow{-2}{*}{$K$-armed Dueling Bandits}  &\multirow{-2}{*}{ $\tilde  O\big(C + \sum_{i\neq i^*} 1/\Delta_i + \sqrt{K}\big)$} \\
 
 &RCDB  & &   \\
&(\textbf{Our work})
 & \multirow{-2}{*}{Contextual Dueling Bandits}&\multirow{-2}{*}{ $\tilde  O\big(d\sqrt{T} + dC $\big)} \\

\bottomrule
\end{tabular}
}
\end{table*}\noindent\textbf{Notation.}
 In this paper, we use plain letters such as $x$ to denote scalars, lowercase bold letters such as $\xb$ to denote vectors and uppercase bold letters such as $\Xb$ to denote matrices. For a vector $\xb$, $\|\xb\|_2$ denotes its $\ell_2$-norm. The weighted $\ell_2$-norm associated with a positive-definite matrix $\Ab$ is defined as $\|\xb\|_{\Ab} = \sqrt{\xb^{\top} \Ab \xb}$. For two symmetric matrices $\Ab$ and $\Bb$, we use $\Ab \succeq \Bb$ to denote $\Ab-\Bb$ is positive semidefinite. We use $\ind$ to denote the indicator function and $\textbf{0}$ to denote the zero vector. For two actions $a$, $b$, we use $a \succ b$ to denote $a$ is more preferable to $b$.  For a postive integer $N$, we use $[N]$ to denote $\{1,2,\ldots,N\}$. We use standard asymptotic notations including $O(\cdot), \Omega(\cdot), \Theta(\cdot)$, and $\tilde O(\cdot),\tilde\Omega(\cdot), \tilde\Theta(\cdot)$ will hide logarithmic factors.
\section{Related Work}
\noindent\textbf{Bandits with Adversarial Reward.}
The multi-armed bandit problem, involving an agent making sequential decisions among multiple arms, has been studied with both stochastic rewards \citep{lai1985asymptotically,lai1987adaptive,auer2002using,Auer2002FinitetimeAO,kalyanakrishnan2012pac,Lattimore2020BanditA,agrawal2012analysis}, and adversarial rewards \citep{auer2002nonstochastic,bubeck2012regret}. 
Moreover, a line of works focuses on designing algorithms that can achieve near-optimal regret bounds for both
stochastic bandits and adversarial bandits simultaneously \citep{bubeck2012best,seldin2014one,auer2016algorithm,seldin2017improved,zimmert2019optimal,lee2021achieving}, which is known as ``the best of both worlds” guarantee.
Distinct from fully stochastic and fully adversarial models, \citet{lykouris2018stochastic} studied a setting, where only a portion of the rewards is subject to corruption.  They proposed an algorithm with a regret dependent on the 
corruption level $C$, defined as the cumulative sum of the corruption magnitudes in each round. Their result is $C$ times worse than the regret without corruption. \citet{gupta2019better} improved the result by providing a regret guarantee comprising two terms, a corruption-independent term that matches the regret lower bound without corruption, and a corruption-dependent term that is linear in $C$. In addition, \citet{gupta2019better}  proved a lower bound demonstrating the optimality of the linear dependency on $C$. 

\noindent\textbf{Contextual Bandits with Corruption.}
\citet{li2019stochastic} studied
stochastic linear bandits with corruption and presented an instance-dependent regret bound linearly dependent on the corruption level $C$.  \citet{bogunovic2021stochastic} studied the same problem and proposed an algorithm with near-optimal regret in the non-corrupted case. \citet{lee2021achieving} studied this problem in a different setting, where the adversarial corruptions are generated through the inner product of a corrupted vector and the context vector.
For linear contextual bandits, \citet{bogunovic2021stochastic} proved that under an additional context diversity assumption, the regret of a simple greedy algorithm is
nearly optimal with an additive
corruption term. \citet{zhao2021linear} and \citet{ding2022robust} extended the OFUL algorithm \citep{abbasi2011improved} and proved a regret with a corruption term polynomially dependent on the total number of rounds $T$. 
\citet{he2022nearly} proposed an algorithm for known corruption level $C$ to remove the polynomial dependency on $T$ in the corruption term, which only has a linear dependency on $C$. They also proved a lower bound showing the optimality of linear dependency on $C$ for linear contextual bandits with a known corruption level. Additionally, \citet{he2022nearly} extended the proposed algorithm to an unknown corruption level and provided a near-optimal performance guarantee that matches the lower bound.
For more extensions, \citet{kuroki2023best} studied best-of-both-worlds algorithms for linear contextual bandits. \citet{ye2023corruption} proposed a corruption robust algorithm for nonlinear contextual bandits.

\noindent\textbf{Dueling Bandits and Logistic Bandits.}
The dueling bandit model was first proposed in \citet{yue2012k}. Compared with bandits, the agent will select two arms and receive the preference feedback between the two arms from the environment. For general preference, there may not exist the ``best'' arm that always wins in the pairwise comparison. Therefore, various alternative winners are considered, including Condorcet winner \citep{zoghi2014relative,komiyama2015regret}, Copeland winner \citep{zoghi2015copeland,wu2016double,komiyama2016copeland}, Borda winner \citep{jamieson2015sparse,falahatgar2017maxing,heckel2018approximate, saha2021adversarial,wu2023borda} and von Neumann winner \citep{ramamohan2016dueling, dudik2015contextual, balsubramani2016instance}, along with their corresponding performance metrics. 
To handle potentially large action space or context information, 
\citet{saha2021optimal} studied a structured contextual dueling bandit setting. In this setting, each arm possesses an unknown intrinsic reward. The comparison  is determined based on a logistic function of the relative rewards. In a similar setting, \citet{bengs2022stochastic} studied contextual linear stochastic transitivity model  with contextualized utilities. \citet{di2023variance} proposed a layered algorithm with variance aware regret bound. Another line of works does not make the reward assumption. Instead, they assume the preference feedback can be represented by a function class. \citet{saha2022efficient} designed an algorithm that
achieves the optimal regret for $K$-armed contextual dueling bandit problem. \citet{sekhari2023contextual} studied contextual dueling bandits in a more general setting and proposed an algorithm the provides guarantees for both regret and the number of queries. Another related area of research is the logistic bandits, where the agent selects one arm in each round and receives a Bernoulli reward. \citet{faury2020improved} studied the dependency with respect to the degree of non-linearity of the logistic function $\kappa$. They proposed an algorithm with no dependency in $\kappa$. \citet{abeille2021instance} further improved the dependency on $\kappa$ and proved a problem dependent lower bound. \citet{faury2022jointly} proposed a computationally efficient algorithm with regret performance still
matching the lower-bound proved in \citet{abeille2021instance}.

\noindent\textbf{Dueling Bandits with Adversarial Feedback.}
A line of work has focused on dueling bandits with adversarial feedback or corruption. \citet{gajane2015relative} studied a fully adversarial utility-based version of dueling bandits, which was proposed in \citet{ailon2014reducing}. \citet{saha2021adversarial} considered the Borda regret for adversarial dueling bandits without the assumption of utility. In a setting parallel to that in \citet{lykouris2018stochastic,gupta2019better}, \citet{agarwal2021stochastic} studied $K$-armed dueling
bandits in a scenario where an adversary has the capability to corrupt part of the feedback received by the learner. They designed an algorithm whose regret comprises two terms: one that is optimal in uncorrupted scenarios, and another that is linearly dependent on the total times of adversarial feedback $C$.
Later on, \citet{saha2022versatile} achieved ``best-of-both world'' result for noncontextual dueling
bandits and improved the adversarial term of \citet{agarwal2021stochastic} in the same setting. For contextual dueling bandits, \citet{wu2023borda} proposed an EXP3-type algorithm for the adversarial linear setting using Borda regret. For a comparison of the most related works for robust bandits and dueling bandits, please refer to Table \ref{table:11}. In this paper, we study the influence of adversarial feedback within contextual dueling bandits, particularly in a setting where only a minority of the feedback is adversarial. Compared to previous studies, most studies have focused on the multi-armed dueling bandit framework without integrating context information. The notable exception is \citet{wu2023borda}; however, this study does not provide guarantees regarding the dependency on the number of adversarial feedback instances.
\section{Preliminaries}
In this work, we study linear
contextual dueling bandits with adversarial feedback.
In each round $t \in [T]$, the agent observes the context
information $x_t$ from a  
context set $\cX$ and the corresponding
action set $\cA$. Utilizing this context information, the agent selects two actions,
$a_t$ and $b_t$. Subsequently, the environment will 
generate a binary feedback (i.e., preference label) $l_t = \ind (a_t \succ b_t)\in \{0,1\}$ indicating the preferable action. We assume the existence
of a reward function $r^*(x,a)$ dependent 
on the context information $x$ and action $a$, and a monotonically increasing link function $\sigma$ satisfying $\sigma(x) + \sigma(-x) = 1$.
The preference probability will be determined by the link function and the difference between the rewards of the selected arms, i.e.,
\begin{align}\label{eq:preference}
    \PP(a \succ b| x) &= \sigma\big(r^*(x,a)-r^*(x,b)\big).
\end{align}
We assume that the reward function is linear with respect to some known feature map $\phi(x,a)$. To be more specific, we make the following assumption:
\begin{assumption}\label{assumption1}
    Let $\bphi:\cX \times \cA \rightarrow \RR^d$ be a known feature map, with $\|\bm \phi(x,a)\|_2 \le 1$ for any $(x,a) \in \cX \times \cA$. We define the reward function $r_{\bm\theta}$ parameterized by $\bm\theta \in \Theta$, with $r_{\bm\theta}(x,a) = \la \bm\theta, \bm\phi(x,a)\ra$. Moreover, there exists $\bm\theta^*$ satisfying $r_{\bm\theta^*} = r^*$. For all  with $\bm \theta \in \Theta, \|\bm\theta\|_2 \le B$.

\end{assumption}
Similar linear assumptions have been made in the literature of dueling bandits \citep{saha2021optimal, bengs2022stochastic,xiong2023gibbs}.
We also make an assumption on the derivative of the link function, which is common in the study of generalized linear models for bandits \citep{filippi2010parametric}.
\begin{assumption}\label{assumption2}
    The link function $\sigma$ is differentiable. Furthermore, its first-order derivative satisfies that there exists a constant $\kappa > 0$ such that
    \begin{align*}
    \dot\sigma\big(\la \bphi(x,a)-\bphi(x,b), \btheta \ra\big)  \ge \kappa,
    \end{align*}
    for all $x \in \cX, a,b \in \cA,\btheta \in \Theta$.
\end{assumption}
    In our setting, however, the agent does not directly observe the true binary feedback. Instead, an adversary will see both the choice of the agent and the true feedback. Based on the information, the adversary can decide whether to corrupt the binary feedback or not. {Such adversary is referred to as strong adversary \citep{he2022nearly}, compared with the weak adversary who cannot obtain the information before the decision.} We represent the adversary's decision in round $t$ by an adversarial indicator $c_t$, which takes values from the set $\{0,1\}$. If the adversary chooses not to corrupt the result, we have $c_t=0$. Otherwise, we have $c_t = 1$, which means adversarial feedback in this round. As a result, the agent will observe a flipped preference label, i.e., the observation $o_t = 1 - l_t$. We define $C$ as the total level of adversarial feedback, i.e.,
   \begin{align*}
       \textstyle \sum_{t=1}^{T}c_t \le C.
   \end{align*}
\begin{remark}
    There are two commonly used corruption models for bandits. One is the total budget model \citep{lykouris2018stochastic},  where in each round $t$, the agent selects an action $a_t$ and the environment generates a numerical reward $r_t(a_t)$. The adversary observes the reward and returns a corrupted reward $\bar r_t$. The corruption level $C$ is defined by $\sum_{t=1}^T |r_t(a_t)-\bar r_t| \le C$. Another considers the number of corrupted rounds \citep{zhang2021robust}. In our setting, we consider the label-flipping attack. Thus, the magnitude of adversarial feedback is always 1 and these two types of corruption models are equivalent.
    
    Moreover, adversarial feedback in our setting involves comparing two arms, whereas in bandits it pertains to the reward of a single arm. 
    The only previous work that studied label-flipping is \citep{agarwal2021stochastic}, where the adversary cannot observe the action selected by the agent. In contrast, our setting focuses on scenarios where this information is available to adversaries, which is common in many real-life applications. We use the term “adversarial feedback” to differentiate our work from prior studies on corrupted or adversarial reward settings.
\end{remark}

As the context is changing, the optimal action is different in each round, denoted by $a^*_t = \argmax_{a \in \cA} r^{*}(x_t,a)$. The goal of our algorithm is to minimize the cumulative gap between the rewards of both selected actions and the optimal action
\begin{align*}
        \text{Regret}(T) = \textstyle{\sum}_{t=1}^T 2 r^*(x_t,a^*_t) - r^*(x_t,a_t) - r^*(x_t,b_t).
    \end{align*}
This regret definition is the same as that in \citet{saha2021optimal} and the average regret defined in \citet{bengs2022stochastic}. It is typically stronger than weak regret defined in \citet{bengs2022stochastic}, which only considers the reward gap of the better action.

\section{Algorithm}\label{sec: alg1}
In this section, we present our new algorithm \algo, designed for learning contextual linear dueling bandits. The main algorithm is illustrated in Algorithm \ref{alg:main}. At a high level, we incorporate uncertainty-dependent weighting into the Maximum Likelihood Estimator (MLE) to counter adversarial feedback. Specifically, in each round $t\in[T]$, we construct the estimator of parameter $\btheta$ by solving the following equation:
\begin{align}\label{eq:mle}
    \lambda\btheta + \textstyle{\sum}_{i=1}^{t-1}w_i\big(\sigma(\bphi_i^{\top}\btheta)-o_i\big)\bphi_i = \mathbf{0},
\end{align}
where we denote $\bphi_i=\bphi(x_i,a_i)-\bphi(x_i,b_i)$ for simplicity, $w_i$ is the uncertainty weight we are going to choose. 

To obtain an intuitive understanding of our weight, we consider any action-observation sequence $(x_1,a_1,b_1,o_1,x_2,a_2,b_2,o_2,\ldots,x_t,a_t,b_t, o_t)$ up to round $t$. For simplicity, we denote $\cF_t = \sigma(x_1,a_1,b_1,o_1,x_2,a_2,b_2,o_2,\ldots,x_t,a_t,b_t)$ as the filtration. Suppose the estimated parameter $\btheta_t$ is the solution to the unweighted version equation of \eqref{eq:mle}, i.e.,
\begin{align}\label{eq:xxx}
    \lambda\btheta_t + \textstyle{\sum}_{i=1}^{t}\big(\sigma(\bphi_i^{\top}\btheta_t)-o_i\big)\bphi_i = \mathbf{0}.
\end{align}
When we receive $\bphi_t = \bphi(x_t,a_t)- \bphi(x_t,b_t)$, the probability of receiving $l_t = 1$ can be estimated by $\sigma(\bphi_t^\top \btheta_t)$. We consider the conditional variance of the estimated probability $\sigma(\bphi_t^\top \btheta_t)$ in round $t$, i.e.,$
    \text{Var}\big[\sigma(\bphi_t^\top \btheta_t) | \cF_t\big]$, involving a posterior estimate of the prediction's variance.
    Intuitively, even without the weighting, we can show that the solution of \eqref{eq:xxx}, i.e., $\bm \theta_t$, will approach $\btheta^*$, using the arguments similar to Lemma \ref{lemma:MLE}, what we will present next. This inspires us to consider the approximation of Taylor's expansion:
\begin{align*}
    &\EE \big[ \sigma(\bphi_t^\top \btheta_t) | \cF_t\big] 
     \approx \EE\big[ \underbrace{\sigma(\bphi_t^\top \btheta^*) - \sigma'(\bphi_t^\top \btheta^*) \bphi_t^\top\btheta^*}_{\cF_t-\text{measurable}}| \cF_t\big] \\
    & \qquad + \EE\big[\sigma'(\bphi_t^\top \btheta^*) \bphi_t^\top \btheta_t | \cF_t\big].
\end{align*}
Moreover, using the Taylor's expansion to \eqref{eq:xxx}, we have
\begin{align*}
    &\mathbf{0} = \lambda\btheta_t + \textstyle{\sum}_{i=1}^{t}\big(\sigma(\bphi_i^{\top}\btheta_t)-o_i\big)\bphi_i \\
    & \approx \Big(\lambda \Ib + \textstyle{\sum}_{i=1}^{t} \sigma'(\bphi_i^\top \btheta^*)\bphi_i \bphi_i^\top\Big) \btheta_t\\
    & \qquad + \sum_{i=1}^{t}\big(\sigma(\bphi_i^{\top}\btheta^*)-o_i\big)\bphi_i - \sum_{i=1}^{t} \sigma'(\bphi_i^\top \btheta^*)\bphi_i \bphi_i^\top \btheta^*.
\end{align*}
Let $\bLambda_t = \lambda \Ib + \sum_{i=1}^{t} \sigma'(\bphi_i^\top \btheta^*)\bphi_i \bphi_i^\top$, we have
\begin{align*}
    \btheta_t 
    & \approx o_t \bLambda_t^{-1} \bphi_t + \bLambda_t^{-1} \Big[\textstyle{\sum}_{i=1}^{t} \sigma'(\bphi_i^\top \btheta^*)\bphi_i \bphi_i^\top \btheta^* \\
    & \qquad - \sum_{i=1}^{t-1}\big(\sigma(\bphi_i^{\top}\btheta^*)-o_i\big)\bphi_i - \sigma(\bphi_t^\top \btheta^*)\Big].
\end{align*}
Therefore, applying the pulling-out-known-factor property of the conditional expectation, the $\cF_t$-measurable part will cancel out when calculating the conditional variance. Then, we can approximate the variance of the estimated preference probability by
\begin{align*}
    &\text{Var}\big[\sigma(\bphi_t^\top \btheta_t) | \cF_t\big] 
    \\
    & \approx \EE\Big[\Big(\EE\big[o_t\sigma'(\bphi_t^\top \btheta^*) \bphi_t^\top \bLambda_t^{-1}\bphi_t | \cF_t\big]\Big)^2 \Big| \cF_t\Big]\\
    &\le [\sigma'(\bphi_t^\top \btheta^*)]^2 \|\bphi_t\|^2_{\bLambda_t^{-1}},
\end{align*}
where the last inequality holds due to $\EE[o_t |\cF_t] \le 1$. Using $\kappa \le \sigma'(\bphi_t^\top \btheta^*) \le 1$, $\bLambda_t \succeq  \bSigma_{t+1} \succeq  \bSigma_{t}$, where $\bSigma_t$ is defined in Line \ref{algo：line3} of Algorithm \ref{alg:main}, we can see that $\text{Var}\big[\sigma(\bphi_t^\top \btheta_t) | \cF_t\big] \le \|\bphi_t\|^2_{\bSigma_{t}^{-1}}$. Since higher variance leads to larger uncertainty, which harms the credibility of the data, it is natural to assign a smaller weight to the data with high uncertainty. Thus, we choose the weight to cancel out the uncertainty as follows 
\begin{align}\label{eq:weight}
    w_i = \min\{1, \alpha/\|\bphi_i\|_{\bSigma_{i}^{-1}}\},
\end{align}
where $\alpha/\|\bphi_i\|_{\bSigma_{i}^{-1}}$ normalizes the variance of the estimated probability. To prevent excessively large weights, we apply truncation to this value. A similar weight has been used in \citet{he2022nearly} for linear contextual bandits under corruption. Different from their setting where the weight is an estimate of the variance of the linear model, our weight is an estimate of a generalized linear model.

Furthermore, by selecting a proper threshold parameter, e.g., $\alpha=\sqrt{d}/C$, the weighted MLE shares the same confidence radius with that of the no-adversary scenario.
\begin{remark}
    Here, we use approximations to illustrate the motivation of our uncertainty-based weight. Rigorous proof for the algorithm's performance is presented in Section \ref{sec:proof of theorem}, which relies solely on our specific choice of weights and does not use the approximation.
\end{remark}

After constructing the estimator $\btheta_t$ from the weighted MLE, the sum of the estimated reward for each duel $(a,b)$ can be calculated as $\big(\bphi(x_t,a)+\bphi(x_t,b)\big)^{\top} \btheta_t$.
To encourage the exploration of duel $(a,b)$ with high uncertainty during the learning process, we 
introduce an exploration bonus with the following $\beta\big\|\bphi(x_t,a) - \bphi(x_t,b)\big\|_{ \bSigma_{t}^{-1}}$, which follows a similar spirit to the bonus term in the context of linear bandit problems \citep{abbasi2011improved}. However, the reward term and the bonus term exhibit different combinations of the feature maps $\bphi(x_t,a)$ and $\bphi(x_t,b)$, which is the key difference between bandits and dueling bandits. The selection of action pairs $(a,b)$ is subsequently determined by maximizing the estimated reward with the exploration bonus term, i.e.,
\begin{align}
  \big(\bphi(x_t,a) + \bphi(x_t,b)\big)^\top \btheta_{t} + \beta\big\|\bphi(x_t,a) - \bphi(x_t,b)\big\|_{ \bSigma_{t}^{-1}}.\notag
\end{align}
As the arm-selection rules involving the selection of two arms has already been studied and is not the central contribution of our algorithm, we refer the readers to Appendix C of \citet{di2023variance} for a detailed discussion.\\
\textbf{Computational Complexity.}
We assume there is a computation oracle to solve the optimization problems of the action selection over $\mathcal{A}$. A similar oracle is implicitly assumed in almost all existing works for solving standard linear bandit problems with infinite arms (e.g., \citep{abbasi2011improved,he2022nearly}). 
In the special case where the decision set is finite, we can iterate across all actions, resulting in $O(k^2d^2)$ complexity for each iteration, where $k$ is the number of actions, and $d$ is the feature dimension.

\begin{algorithm}[t!]
\caption{Robust Contextual Dueling Bandit (\algo)}
\label{alg:main}
    \begin{algorithmic}[1]
        \STATE \textbf{Require:} $\alpha > 0$, regularization parameter $\lambda$, derivative lower bound $\kappa$, confidence radius $\beta$.
        \FOR{$t = 1, \ldots, T$}
        \STATE \label{algo：line3}Compute 
        $\bSigma_{t} = \lambda \Ib + \textstyle{\sum}_{i=1}^{t-1} w_i\kappa\big(\bphi(x_i,a_i)-\bphi(x_i,b_i)\big) \big(\bphi(x_i,a_i)-\bphi(x_i,b_i)\big)^\top. 
        $
        \STATE Calculate the MLE $\btheta_{t}$ by solving the following equation:
        \begin{align}
            \notag&\lambda\btheta + \textstyle{\sum}_{i=1}^{t-1}w_i\Big[ \sigma\Big(\big(\bphi(x_i,a_i)-\bphi(x_i,b_i)\big)^\top \btheta\Big)-o_i\Big] \\\label{eq:MLE}
            &\big(\bphi(x_i,a_i)-\bphi(x_i,b_i)\big) = \textbf{0}. 
        \end{align}
        \STATE Observe the context vector $x_t$.
        \STATE Choose $a_t,b_t = \argmax_{a,b} \Big\{\big(\bphi(x_t,a) + \bphi(x_t,b)\big)^\top \btheta_{t} + \beta\big\|\bphi(x_t,a) - \bphi(x_t,b)\big\|_{ \bSigma_{t}^{-1}}\Big\}.$ 
        \STATE
         The adversary sees the feedback $l_t = \ind (a_t \succ b_t)$ and decides the indicator $c_t$.
         Observe $o_t = l_t$ when $c_t=0$, otherwise observe $o_t=1-l_t$.
         \STATE Set weight $w_t$ as \eqref{eq:weight}.
        \ENDFOR
    \end{algorithmic}
\end{algorithm}
\section{Main Results}\label{sec:main}

\subsection{Known Number of Adversarial Feedback}

At the center of our algorithm design is the uncertainty-weighted MLE. When faced with adversarial feedback, the estimation error of the weighted MLE $\btheta_t$ can be characterized by the following lemma.
\begin{lemma}\label{lemma:MLE}
    If we set $\beta = \sqrt{\lambda}B + \alpha C + \sqrt{{d\log((1 + 2T/\lambda)/\delta)}/{\kappa}}$, then with probability at least $1-\delta$, for any $t\in [T]$, we have
    \begin{align*}   \big\|\btheta_{t} - \btheta^*\big\|_{\bSigma_{t}} \leq \beta.  
    \end{align*}
\end{lemma}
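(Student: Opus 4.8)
Write $\bphi_i=\bphi(x_i,a_i)-\bphi(x_i,b_i)$, let $l_i\in\{0,1\}$ be the \emph{true} (uncorrupted) preference label, and set $\epsilon_i:=l_i-\sigma(\bphi_i^\top\btheta^*)$. By \eqref{eq:preference} we have $\EE[\epsilon_i\mid\cF_i]=0$ and $|\epsilon_i|\le 1$, so $(\epsilon_i)$ is a bounded martingale difference sequence, while $\bphi_i$, $\bSigma_i$, and hence $w_i$, are $\cF_i$-measurable. The label-flip rule gives $o_i-l_i\in\{-1,0,1\}$ with $|o_i-l_i|=c_i$. The plan follows the standard route for weighted-MLE confidence sets of generalized linear models: linearize the estimating equation \eqref{eq:MLE}, invert it against the design matrix $\bSigma_t$, and split the error vector into a regularization part, a martingale part, and a corruption part, bounding the last one deterministically via the truncated weights.

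Evaluating the left-hand side of \eqref{eq:MLE} at $\btheta^*$ and subtracting it from \eqref{eq:MLE} itself (which is that left-hand side at $\btheta_t$, equal to $\mathbf 0$), and using the mean value theorem $\sigma(\bphi_i^\top\btheta_t)-\sigma(\bphi_i^\top\btheta^*)=\bar\sigma_i\,\bphi_i^\top(\btheta_t-\btheta^*)$ with $\bar\sigma_i=\int_0^1\dot\sigma\big(\bphi_i^\top(\btheta^*+s(\btheta_t-\btheta^*))\big)\,\mathrm ds\ge\kappa$ (Assumption~\ref{assumption2}), I obtain
\[
\bar{\bLambda}_t(\btheta_t-\btheta^*)=-\lambda\btheta^*+\sum_{i=1}^{t-1}w_i\epsilon_i\bphi_i+\sum_{i=1}^{t-1}w_i(o_i-l_i)\bphi_i=:\bm{v}_t ,
\]
where $\bar{\bLambda}_t:=\lambda\Ib+\sum_{i=1}^{t-1}w_i\bar\sigma_i\bphi_i\bphi_i^\top\succeq\bSigma_t$ because $\bar\sigma_i\ge\kappa$. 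Since $\bar{\bLambda}_t\succeq\bSigma_t$ gives $\bar{\bLambda}_t^{-1}\preceq\bSigma_t^{-1}$, we get $\|\btheta_t-\btheta^*\|_{\bSigma_t}^2\le\|\btheta_t-\btheta^*\|_{\bar{\bLambda}_t}^2=\bm{v}_t^\top\bar{\bLambda}_t^{-1}\bm{v}_t\le\|\bm{v}_t\|_{\bSigma_t^{-1}}^2$, so it remains to bound $\|\bm{v}_t\|_{\bSigma_t^{-1}}$ term by term.

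For the regularization term, $\bSigma_t\succeq\lambda\Ib$ and Assumption~\ref{assumption1} give $\|\lambda\btheta^*\|_{\bSigma_t^{-1}}\le\sqrt\lambda\,\|\btheta^*\|_2\le\sqrt\lambda B$. For the corruption term, $\bSigma_i\preceq\bSigma_t$ for $i<t$ together with the weight choice \eqref{eq:weight}, which forces $w_i\|\bphi_i\|_{\bSigma_i^{-1}}\le\alpha$, yields $\big\|\sum_{i<t}w_i(o_i-l_i)\bphi_i\big\|_{\bSigma_t^{-1}}\le\sum_{i<t}c_i\,w_i\|\bphi_i\|_{\bSigma_i^{-1}}\le\alpha\sum_{i<t}c_i\le\alpha C$; this is precisely why the uncertainty weighting is introduced. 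For the stochastic term, $w_i\le 1$ and $\kappa\le 1$ give $\bSigma_t\succeq\kappa\,\bm{Z}_t$ with $\bm{Z}_t:=\lambda\Ib+\sum_{i<t}w_i^2\bphi_i\bphi_i^\top$, hence $\big\|\sum_{i<t}w_i\epsilon_i\bphi_i\big\|_{\bSigma_t^{-1}}^2\le\kappa^{-1}\big\|\sum_{i<t}w_i\epsilon_i\bphi_i\big\|_{\bm{Z}_t^{-1}}^2$, and the self-normalized vector martingale bound (Theorem~1 of \citet{abbasi2011improved}, applied with increments $w_i\bphi_i$, bounded noise $\epsilon_i$, and regularizer $\lambda\Ib$, and uniform over $t$) controls the last quantity by $O\big(d\log((1+2T/\lambda)/\delta)\big)$ with probability at least $1-\delta$. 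Summing the three bounds gives $\|\btheta_t-\btheta^*\|_{\bSigma_t}\le\sqrt\lambda B+\alpha C+\sqrt{d\log((1+2T/\lambda)/\delta)/\kappa}=\beta$, up to absorbing numerical constants into the logarithm.

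The delicate points are: (i) the decomposition must peel off the \emph{clean} noise $\epsilon_i=l_i-\sigma(\bphi_i^\top\btheta^*)$, which is conditionally mean zero, from the corruption $o_i-l_i$, since the strong adversary chooses $c_i$ after seeing $l_i$ so $o_i$ is not $\cF_i$-adapted; the corruption cannot be martingale-controlled and must instead be absorbed deterministically by the truncated weights, which is what produces the $\alpha C$ term. (ii) Reconciling the $w_i\kappa$-weighting that defines $\bSigma_t$ with the $w_i^2$-weighting that the self-normalized inequality naturally uses is exactly where the single factor $1/\kappa$ in the stochastic part of $\beta$ comes from. (iii) The mean value theorem requires $\dot\sigma\ge\kappa$ along the whole segment $[\btheta^*,\btheta_t]$, so one should first confirm that the (unconstrained) solution $\btheta_t$ of \eqref{eq:MLE}, which is unique since $\sigma$ is increasing, lies in a set on which Assumption~\ref{assumption2} applies---e.g.\ via a projection of the MLE or by positing the derivative lower bound on a slightly enlarged ball. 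I expect (i)--(ii) to be the conceptual crux and (iii), together with pinning down the exact log-factor, to be the routine bookkeeping.
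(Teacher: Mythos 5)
Your proposal is correct and follows essentially the same route as the paper's proof: linearize the estimating equation via the mean value theorem, dominate the resulting Hessian by $\bSigma_t$ using $\dot\sigma\ge\kappa$, and split the error into a regularization term ($\le\sqrt\lambda B$), a clean martingale term controlled by a self-normalized concentration bound, and a corruption term absorbed deterministically by the truncated weights ($\le\alpha C$). The only cosmetic differences are how the $1/\kappa$ factor is extracted for the stochastic term (you use $\bSigma_t\succeq\kappa\bm{Z}_t$; the paper rescales the increments by $\sqrt{w_i\kappa}$) and your explicit flagging of the segment condition for Assumption~\ref{assumption2}, which the paper also leaves implicit.
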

The proof of this lemma is postponed to Section \ref{sec:proof of lemma}.
\begin{remark}
    If we set $\alpha = (\sqrt{d}+\sqrt{\lambda}B)/C$, then the confidence radius $\beta$ has no direct dependency on the number of adversarial feedback $C$. This observation plays a key role in proving the adversarial term in the regret without polynomial dependence on the total number of rounds $T$.
\end{remark}
With Lemma \ref{lemma:MLE}, we can present the following regret guarantee of our algorithm {\algo} in the dueling bandit framework.
\begin{theorem}\label{main thm}
Under Assumption \ref{assumption1} and \ref{assumption2}, let $0 < \delta < 1$, the total number of adversarial feedback be $C$. If we set the confidence radius to be
\begin{align*}
\beta = \sqrt{\lambda}B + \alpha C + \sqrt{{d\log((1 + 2T/\lambda)/\delta)}/{\kappa}},
\end{align*}
$\lambda = 1/B^2$, $\alpha = \sqrt{d}/(C\sqrt{\kappa})$,
then with probability at least $1-\delta$, 

the regret of Algorithm \ref{alg:main} in the first $T$ rounds can be upper bounded by
\begin{align*}
    \text{Regret}(T) = \tilde O\big({d\sqrt{T}}/{\kappa} + {dC}/{\kappa}\big).
\end{align*}
\end{theorem}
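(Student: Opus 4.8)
The plan is to bound the per-round regret by the exploration bonus and then sum it up using an elliptical-potential-type argument, while carefully tracking the extra contribution coming from the $C$ adversarial rounds through the truncated weights. First I would use the optimism of the arm-selection rule: since $(a_t,b_t)$ maximizes $\big(\bphi(x_t,a)+\bphi(x_t,b)\big)^\top\btheta_t + \beta\|\bphi(x_t,a)-\bphi(x_t,b)\|_{\bSigma_t^{-1}}$, comparing against the pair $(a_t^*,a_t^*)$ and invoking Lemma \ref{lemma:MLE} (which gives $\|\btheta_t-\btheta^*\|_{\bSigma_t}\le\beta$, hence $|\langle\btheta_t-\btheta^*,\bphi(x_t,a)-\bphi(x_t,b)\rangle|\le\beta\|\bphi(x_t,a)-\bphi(x_t,b)\|_{\bSigma_t^{-1}}$ by Cauchy–Schwarz) yields the instantaneous regret bound
\begin{align*}
2r^*(x_t,a_t^*)-r^*(x_t,a_t)-r^*(x_t,b_t) \le 2\beta\big\|\bphi(x_t,a_t)-\bphi(x_t,b_t)\big\|_{\bSigma_t^{-1}}.
\end{align*}
So $\text{Regret}(T)\le 2\beta\sum_{t=1}^T\|\bphi_t\|_{\bSigma_t^{-1}}$, where $\bphi_t=\bphi(x_t,a_t)-\bphi(x_t,b_t)$.

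Next I would split the sum according to whether the weight $w_t$ was truncated. On rounds where $w_t=1$ (i.e., $\alpha/\|\bphi_t\|_{\bSigma_t^{-1}}\ge 1$, equivalently $\|\bphi_t\|_{\bSigma_t^{-1}}\le\alpha$), we have $w_t=1$ and can write $\|\bphi_t\|_{\bSigma_t^{-1}} = \sqrt{w_t}\|\bphi_t\|_{\bSigma_t^{-1}}$, and since $\bSigma_t$ already contains the factor $\kappa$ in front of the weighted outer products, a standard elliptical potential lemma gives $\sum_{t: w_t=1}\sqrt{w_t\kappa}\|\bphi_t\|_{\bSigma_t^{-1}}\cdot\frac{1}{\sqrt\kappa}=\frac1{\sqrt\kappa}\tilde O(\sqrt{dT})$, so this part contributes $\tilde O(\beta\sqrt{dT}/\sqrt\kappa)$. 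On rounds where $w_t<1$, the truncation gives $w_t=\alpha/\|\bphi_t\|_{\bSigma_t^{-1}}$, hence $\|\bphi_t\|_{\bSigma_t^{-1}}=\alpha/w_t$, and more usefully $w_t\|\bphi_t\|^2_{\bSigma_t^{-1}}=\alpha\|\bphi_t\|_{\bSigma_t^{-1}}$; I would use $\|\bphi_t\|_{\bSigma_t^{-1}}\le 1/\sqrt\lambda\le B$ (since $\|\bphi_t\|_2\le 2$ and $\bSigma_t\succeq\lambda\Ib$) so each such round contributes $O(\alpha/w_t)$... actually the cleaner route is to note the number of truncated rounds can be large, so instead bound $\sum_{t:w_t<1}\|\bphi_t\|_{\bSigma_t^{-1}}$ by again factoring through $\sqrt{w_t}$: $\|\bphi_t\|_{\bSigma_t^{-1}} = \sqrt{w_t}\|\bphi_t\|_{\bSigma_t^{-1}}\cdot\sqrt{\|\bphi_t\|_{\bSigma_t^{-1}}/\alpha}$, and since $\|\bphi_t\|_{\bSigma_t^{-1}}\le B$ this is $\le \sqrt{B/\alpha}\sqrt{w_t}\|\bphi_t\|_{\bSigma_t^{-1}}$, so the elliptical potential lemma again applies and this part is also $\tilde O(\sqrt{B/\alpha}\cdot\sqrt{dT}/\sqrt\kappa)$. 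Plugging $\alpha=\sqrt d/(C\sqrt\kappa)$ and $\beta=\tilde O(\sqrt\lambda B + \alpha C + \sqrt{d/\kappa}) = \tilde O(1 + \sqrt{d}/\sqrt\kappa + \sqrt{d/\kappa}) = \tilde O(\sqrt{d/\kappa})$ gives the claimed $\tilde O(d\sqrt T/\kappa + dC/\kappa)$ after also accounting for the truncated-round term, which I'd bound directly as $\sum_{t:w_t<1}\|\bphi_t\|_{\bSigma_t^{-1}} \le \sum_{t:w_t<1} \alpha/w_t$ combined with $\sum_t (1-w_t)\le$ something controlled — here the key fact is that only $O(C)$ ``effective'' truncation mass matters for the adversarial term.

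I expect the main obstacle to be handling the truncated-weight rounds cleanly: one must show that the rounds with $w_t<1$ together contribute only $\tilde O(\beta\cdot C\sqrt\kappa/\sqrt d\cdot\text{stuff})$... more precisely that the bonus sum over these rounds is $\tilde O(dC/\kappa)$ after multiplying by $\beta$. The subtlety is that $\|\bphi_t\|_{\bSigma_t^{-1}}$ can be as large as $B$ on these rounds and there can be up to $T$ of them, so a naive bound fails; the resolution is that the weighted elliptical potential $\sum_t w_t\kappa\|\bphi_t\|^2_{\bSigma_t^{-1}}$ is still $\tilde O(d)$ regardless, and on truncated rounds $w_t\|\bphi_t\|^2_{\bSigma_t^{-1}} = \alpha\|\bphi_t\|_{\bSigma_t^{-1}}$, so $\sum_{t:w_t<1}\|\bphi_t\|_{\bSigma_t^{-1}} = \frac1\alpha\sum_{t:w_t<1}w_t\|\bphi_t\|^2_{\bSigma_t^{-1}} \le \frac1{\alpha\kappa}\cdot\tilde O(d) = \tilde O(dC/\sqrt d) = \tilde O(C\sqrt d)$, and multiplying by $\beta=\tilde O(\sqrt{d/\kappa})$ yields $\tilde O(dC/\sqrt\kappa)$; a slightly more careful accounting (or using $\beta = \sqrt\lambda B + \alpha C + \sqrt{d/\kappa}$ with the $\alpha C = \sqrt d/\sqrt\kappa$ piece) gives the stated $dC/\kappa$. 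I would double-check the exact powers of $\kappa$ by being careful about whether $\bSigma_t$ carries $\kappa$ or $\kappa^{1/2}$ and whether the $1/\kappa$ in the regret comes from $\beta/\sqrt\kappa$ (one factor $1/\sqrt\kappa$ from $\beta$, one from the potential normalization). Finally I would assemble both pieces, absorb logarithmic factors into $\tilde O(\cdot)$, and state the bound.
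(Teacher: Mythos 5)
Your proposal follows essentially the same route as the paper: bound the per-round regret by $2\beta\|\bphi_t\|_{\bSigma_t^{-1}}$ via optimism and the confidence set, split the sum according to whether $w_t$ is truncated, handle the $w_t=1$ rounds with the elliptical potential lemma, and handle the $w_t<1$ rounds via the identity $w_t\|\bphi_t\|^2_{\bSigma_t^{-1}}=\alpha\|\bphi_t\|_{\bSigma_t^{-1}}$, which converts that sum into $\tfrac{1}{\alpha\kappa}\cdot\tilde O(d)$ and yields the $\tilde O(dC/\kappa)$ term after multiplying by $\beta$. The only loose ends are minor: you should keep the $\min\{1,\cdot\}$ truncation (inherited from $r_t\le 4$) when invoking the elliptical potential lemma, and your intermediate $\kappa$ bookkeeping has a slip ($\tfrac{1}{\alpha\kappa}d = C\sqrt{d}/\sqrt{\kappa}$, not $C\sqrt{d}$), but as you note the final powers come out to $dC/\kappa$ exactly as in the paper.
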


The proof of Theorem \ref{main thm} is postponed to Section \ref{sec:proof of theorem}. 

The following theorem provides a lower bound for the dueling bandit problem and demonstrates that our regret guarantee is optimal for general link functions.
\begin{theorem}\label{thm: kappa}
    For any algorithm $\textbf{Alg}$ and any $d, C, T \ge \max\{ (4d^2)/25,dC\}, \kappa>0$,
    there exists an instance of dueling bandit with link function $\sigma$, 
    satisfying
    \begin{align*}
    \dot\sigma\big(\la \bphi(x,a)-\bphi(x,b), \btheta \ra\big)  \ge \kappa, \forall x \in \cX, a,b \in \cA,\btheta \in \Theta,
    \end{align*}
    such that the regret of $\textbf{Alg}$ over $T$ rounds is $\Omega\big((d\sqrt{T}+ dC)/\kappa\big)$.
\end{theorem}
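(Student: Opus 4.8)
The plan is to combine two separate lower-bound constructions, one giving the $d\sqrt{T}/\kappa$ term and one giving the $dC/\kappa$ term, and then take the maximum (or, with a standard trick, their sum up to constants). Throughout, I would work with the canonical hard instance for linear models: fix a single context $x$ and take the feature differences $\bphi_i = \bphi(x,a)-\bphi(x,b)$ to range over $\{-1/\sqrt{d},+1/\sqrt{d}\}^d$ (or a suitable subset of the hypercube scaled so that $\|\bphi\|_2 \le 1$), with $\btheta^* \in \{-\Delta,+\Delta\}^d$ for a small gap parameter $\Delta$ to be tuned. Since the dueling regret in a round is $2r^*(x,a^*) - r^*(x,a) - r^*(x,b) = \la \btheta^*, \bphi(x,a^*)-\bphi(x,a)\ra + \la \btheta^*, \bphi(x,a^*)-\bphi(x,b)\ra$, the problem decouples coordinatewise, exactly as in the classical $\Omega(d\sqrt{T})$ lower bound for linear bandits, and the per-round regret is proportional to the number of coordinates on which the learner's sign choices disagree with $\mathrm{sign}(\theta^*_j)$.

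For the $\kappa$-dependence, I would choose the link function $\sigma$ so that its derivative is exactly (or within a constant of) $\kappa$ on the relevant range of inputs — e.g. a piecewise-linear link, or the sigmoid rescaled appropriately — which flattens the preference probabilities toward $1/2$ and makes each observation carry only $O(\kappa^2)$ bits of information about the sign of $\theta^*_j$ in a KL/Fisher-information sense. A standard change-of-measure (Pinsker / Bretagnolle--Huber, or Assouad's lemma applied coordinatewise) then shows that to identify all $d$ signs the learner needs $\Omega(d/(\kappa^2 \Delta^2))$ rounds; optimizing $\Delta$ against the horizon $T$ (balancing "samples needed" versus "regret paid per wrong coordinate," where wrong-coordinate regret scales like $\kappa \Delta$ per round because of the link, or $\Delta$ per round with the information deficit absorbing the $\kappa$) yields the $\Omega(d\sqrt{T}/\kappa)$ bound. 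This is the part where I would be most careful: one must track where the $\kappa$'s enter — the regret per misidentified coordinate versus the information per sample — so that the final exponent on $\kappa$ comes out to $-1$ and not $-1/2$ or $-2$; getting this bookkeeping right is the main obstacle.

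For the $dC/\kappa$ term I would use an adversary-based argument in the spirit of \citet{he2022nearly}. Construct two instances with parameters $\btheta^*$ and $\tilde\btheta^*$ differing in a single coordinate by $2\Delta$ with $\Delta = \Theta(\kappa^{-1} C / T)$ chosen so that the total expected ``signal'' the adversary must mask over $T$ rounds, namely $\sum_t |\EE[o_t \mid \btheta^*] - \EE[o_t \mid \tilde\btheta^*]| \approx T \cdot \kappa \Delta$, is at most $C$; then on the rounds where the learner would otherwise distinguish the two instances, the adversary flips the label (at total cost $\le C$) so that the two instances produce statistically identical transcripts. Hence no algorithm can distinguish them, and on at least one of the two it suffers per-round regret $\Omega(\kappa\Delta \cdot d)$ wait — more carefully, $\Omega(\Delta)$ per affected coordinate over the $\Omega(d)$ coordinates, giving cumulative regret $\Omega(d T \Delta) = \Omega(dC/\kappa)$. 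Finally I would note that the two constructions can be run on disjoint blocks of rounds (or embedded on orthogonal coordinate subsets), so the lower bound is $\Omega(d\sqrt{T}/\kappa) + \Omega(dC/\kappa) = \Omega((d\sqrt{T}+dC)/\kappa)$, and the stated conditions $T \ge \max\{4d^2/25, dC\}$ are exactly what is needed to ensure $\Delta \le B$ (validity of the instance under Assumption \ref{assumption1}) and that the $\sqrt{T}$-regime tuning does not overflow into trivial regret.
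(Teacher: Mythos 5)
Your overall architecture (a statistical $\Omega(d\sqrt{T})$-type term plus a corruption term, combined at the end) matches the paper's, but there are two concrete gaps. First, for the $d\sqrt{T}/\kappa$ term you build the slope-$\kappa$ link directly into the Assouad/Pinsker construction and then explicitly leave open whether the $\kappa$-bookkeeping yields exponent $-1$ rather than $-1/2$ or $-2$; that is precisely the step that must be nailed down, and as written it is not. The paper sidesteps this entirely with a clean rescaling trick: it first proves $\Omega(d\sqrt{T}+dC)$ for a piecewise-linear link $\sigma(x)=\tfrac12+x$ (slope $1$), and only at the very end defines $\sigma_\kappa(x)=\sigma(\kappa x)$ and rescales the action set by $1/\kappa$, so that the observation distributions (hence the learner's behavior) are identical while every unit of regret is multiplied by $1/\kappa$. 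If you want to carry $\kappa$ through the information-theoretic argument instead, you must actually do the computation: with $p=\tfrac12+\kappa\la\ab-\bbb,\btheta\ra$ the per-round KL is $O(\kappa^2\Delta^2\cdot(\ab^i-\bbb^i)^2)$, forcing $\Delta=\Theta(\sqrt{d/T}/\kappa)$ and regret $\Theta(\sqrt{d}\,T\Delta)=\Theta(d\sqrt{T}/\kappa)$ --- it does work out, but you also then need $\sqrt{d}\Delta$ small enough to keep $p\in(0,1)$, which imposes a $\kappa$-dependent condition on $T$ that the rescaling route avoids.

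Second, your corruption argument as stated does not deliver the factor of $d$. Masking a single pair of instances that differ by $2\Delta$ in one coordinate, with $\Delta=\Theta(C/(\kappa T))$, yields regret $T\Delta=\Theta(C/\kappa)$, not $dC/\kappa$; the claimed extension ``over the $\Omega(d)$ coordinates'' is asserted but the adversary's budget is not re-accounted for --- if the instances differ in $d$ coordinates the per-round signal the adversary must erase grows, and the gain cancels. The paper instead uses $d$ \emph{separate} instances $\btheta_i=\eb_i/4$ with action set $\{\eb_j\}_{j=1}^d$: the adversary randomizes the first $C$ comparisons involving arm $i$, and a pigeonhole argument shows that after $Cd/4$ rounds at least $d/2$ arms have been compared fewer than $C$ times, so on those instances the transcript is pure noise and the constant gap $1/4$ forces $\Omega(dC)$ regret, which is then divided by $\kappa$ via the same rescaling. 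You would need to replace your two-instance masking step with some such multi-instance counting argument to close this gap.
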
  
For the proof of Theorem~\ref{thm: kappa}, please refer to Appendix~\ref{sec:lowerbound}. Theorem~\ref{thm: kappa} indicates that the regret guarantee $\tilde O((d\sqrt{T} + dC)/\kappa)$ in Theorem \ref{main thm} for dueling bandits with adversarial feedback is optimal not only in $d$, $T$ and $C$ but also in $\kappa$. Notably, the constructed link function has a uniform first-order derivative, with value $\kappa$ even around 0. As discussed in the next section, for the sigmoid function $\sigma(x) = 1/(1+e^{-x})$, whose first-order derivative is constant around 0, the dependency on $\kappa$ can be further improved.

\subsection{Unknown Number of Adversarial Feedback}
In our previous analysis, the selection of parameters depends on having prior knowledge of the total number of adversarial feedback $C$.
In this subsection, we extend our previous result to address the challenge posed by an unknown number of adversarial feedback $C$. Our approach to tackle this uncertainty follows \citet{he2022nearly},
we introduce an adversarial tolerance threshold $\bar C$ for the adversary count. This threshold can be regarded as an optimistic estimator of the actual number of adversarial feedback $C$. Under this situation, the subsequent theorem provides an upper bound for regret of Algorithm \ref{alg:main} in the case of an unknown number of adversarial feedback $C$.
\begin{theorem}\label{thm:unknown}
 Under Assumptions \ref{assumption1} and \ref{assumption2}, if we set the the confidence radius as 
    \begin{align*}
        \textstyle\beta =\sqrt{\lambda}B + \alpha \bar C + \sqrt{{d\log((1 + 2T/\lambda)/\delta)}/{\kappa}},
    \end{align*}
    with the pre-defined adversarial tolerance threshold $\bar C$ and $\lambda = 1/B^2$, $\alpha = \sqrt{d}/(\bar C\sqrt{\kappa})$, then with probability at least $1-\delta$, the regret of Algorithm \ref{alg:main} can be upper bounded as following:
\begin{itemize}[leftmargin = *]
\setlength\itemsep{-0.2em}
    \item If the actual number of adversarial feedback $C$ is smaller than the adversarial tolerance threshold $\bar C$, then we have
\begin{align*}
    \text{Regret}(T) = \tilde O\big({d\sqrt{T}}/{\kappa} + {d\bar C}/{\kappa}\big).
\end{align*}
     \item If the actual number of adversarial feedback $C$ is larger than the adversarial tolerance threshold $\bar C$, then we have $\text{Regret}(T) = O(T)$.
\end{itemize}
\end{theorem}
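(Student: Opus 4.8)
The plan is to split along the two regimes in the statement. When $C\le\bar C$ the proof is essentially a re-run of the proof of Theorem~\ref{main thm} with $\bar C$ substituted for $C$; when $C>\bar C$ we simply fall back on the trivial linear bound on the regret.

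\textbf{Case $C\le\bar C$.} First I would revisit the proof of Lemma~\ref{lemma:MLE} to check that the new choice of $\beta$ remains a valid confidence radius. The only place the adversary enters that argument is through a term of the form $\sum_{t}c_t\,w_t\|\bphi_t\|_{\bSigma_t^{-1}}$, and the truncated weight $w_t=\min\{1,\alpha/\|\bphi_t\|_{\bSigma_t^{-1}}\}$ from \eqref{eq:weight} bounds each summand by $\alpha$, hence the whole term by $\alpha\sum_t c_t\le\alpha C\le\alpha\bar C$, which is exactly the adversarial contribution appearing in the new $\beta$. Thus, on the same probability-$(1-\delta)$ event as in Lemma~\ref{lemma:MLE}, $\|\btheta_t-\btheta^*\|_{\bSigma_t}\le\beta$ for all $t\in[T]$. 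Conditioned on this event, the regret decomposition is identical to that of Theorem~\ref{main thm}: by optimism the instantaneous regret is of order $\beta\|\bphi(x_t,a_t)-\bphi(x_t,b_t)\|_{\bSigma_t^{-1}}$, and I would split the rounds according to whether $w_t=1$ or $w_t<1$. On $\{w_t=1\}$, Cauchy--Schwarz together with the $\kappa$-weighted elliptical potential lemma gives $\sum_{t:\,w_t=1}\|\bphi_t\|_{\bSigma_t^{-1}}\le\sqrt{T\sum_{t=1}^T w_t\|\bphi_t\|_{\bSigma_t^{-1}}^2}=\tilde O(\sqrt{Td/\kappa})$, so this block contributes $\beta\cdot\tilde O(\sqrt{Td/\kappa})=\tilde O(d\sqrt T/\kappa)$, using that $\beta=\tilde O(\sqrt{d/\kappa})$ (which holds because $\alpha\bar C=\sqrt{d/\kappa}$ and $\sqrt\lambda B=1$). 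On $\{w_t<1\}$ one has $\|\bphi_t\|_{\bSigma_t^{-1}}=w_t\|\bphi_t\|_{\bSigma_t^{-1}}^2/\alpha$, so this block contributes $(\beta/\alpha)\sum_t w_t\|\bphi_t\|_{\bSigma_t^{-1}}^2=\tilde O\big((\beta/\alpha)\cdot d/\kappa\big)=\tilde O(d\bar C/\kappa)$, since $\beta/\alpha=\tilde O(\bar C)$. Adding the two blocks yields $\text{Regret}(T)=\tilde O(d\sqrt T/\kappa+d\bar C/\kappa)$, the first bullet.

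\textbf{Case $C>\bar C$.} Here no property of the algorithm is needed: under Assumption~\ref{assumption1}, $|r^*(x,a)|=|\la\btheta^*,\bphi(x,a)\ra|\le\|\btheta^*\|_2\|\bphi(x,a)\|_2\le B$ for every $(x,a)$, so the instantaneous regret $2r^*(x_t,a^*_t)-r^*(x_t,a_t)-r^*(x_t,b_t)$ never exceeds $4B$, whence $\text{Regret}(T)\le 4BT=O(T)$, the second bullet. The purpose of recording this case is only that once the true corruption exceeds the tolerance $\bar C$ the radius $\beta$ may cease to be valid and no nontrivial guarantee can be claimed. There is no genuine obstacle in the argument: the second case is immediate and the first copies the proof of Theorem~\ref{main thm} nearly verbatim. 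The one point that must be checked carefully is that the adversary count $C$ enters the analysis \emph{only} through the additive term $\alpha C$ in the confidence radius of Lemma~\ref{lemma:MLE} --- in particular, the elliptical-potential estimates for $\sum_t w_t\|\bphi_t\|_{\bSigma_t^{-1}}^2$ depend only on $d,\kappa,\lambda,T$ and not on $C$ --- so that replacing the true bound $\alpha C$ by the larger quantity $\alpha\bar C$ (legitimate precisely when $C\le\bar C$) leaves every subsequent step intact.
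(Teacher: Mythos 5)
Your proposal is correct and follows the same route as the paper: for $C\le\bar C$ the threshold $\bar C$ remains a valid upper bound on the adversarial count, so Lemma~\ref{lemma:MLE} and the regret decomposition of Theorem~\ref{main thm} go through verbatim with $\bar C$ in place of $C$, and for $C>\bar C$ one falls back on the trivial $O(T)$ bound. The paper's proof simply cites Theorem~\ref{main thm} for the first case rather than re-deriving it, but the substance — in particular your key observation that $C$ enters the analysis only through the additive $\alpha C\le\alpha\bar C$ term in the confidence radius — is exactly what makes that citation legitimate.
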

\begin{remark}
    The COBE framework \citep{wei2022model} converts any algorithm with the known adversarial level to an algorithm in the unknown case. However, such a framework only works for weak adversaries and does not work in our strong adversary setting. In fact, \citet{he2022nearly} proved that any algorithm cannot simultaneously achieve near-optimal regret when uncorrupted and maintain sublinear regret with corruption level $C = \Omega(\sqrt{T})$. Therefore, there exists a trade-off between robust adversarial defense and near-optimal algorithmic performance, which is very common in dealing with strong adversaries \citep{he2022nearly, ye2023corruption}. Our algorithm achieves the same nearly optimal $\tilde O(d\sqrt{T})$ regret as the no-adversary case even when $C = \Theta(\sqrt{T})$, which indicates that our results are optimal in the presence of an unknown number of adversarial feedback.
\end{remark}

\section{Improved Results for Sigmoid Link Function}
While Algorithm \ref{alg:main} can handle all link functions and achieves optimal worst-case regret guarantees, it can be improved for some specific link functions. In this section, we demonstrate this improvement for the sigmoid link functions.

We begin by calculating the coefficient $\kappa$ for the sigmoid link function and analyzing the corresponding regret guarantee in Theorem \ref{main thm}. For the sigmoid function $\sigma(x) = {1}/{(1+e^{-x})}$, its derivative is given by $\dot \sigma(x)  ={e^{-x}}/{(1+e^{-x})^2}$. Under Assumption~\ref{assumption1}, we have
\begin{align*}
\big|\la \bphi(x,a)-\bphi(x,b), \btheta \ra\big| \le 2B, \forall x \in \cX, a,b \in \cA,\btheta \in \Theta.
\end{align*}
In the worst case, we have
\begin{align*}
\min_{x,a,b,\btheta} \dot\sigma\big(\la \bphi(x,a)-\bphi(x,b), \btheta \ra\big)  = \frac{1}{2+e^{-2B} + e^{2B}},
\end{align*}
which implies $\kappa\leq {1}/{(2+e^{-2B} + e^{2B})}$. Therefore, the previous regret bound exhibits a polynomial dependence on $1/\kappa$, which translates to an exponential dependence on $B$. This exponential dependence on $B$ is frequently observed in the literature of dueling bandits and RLHF \citep{zhu2023principled, xiong2023gibbs,li2024feel}.

To improve the dependency on $\kappa$, we present a new algorithm that exploits the structure of the sigmoid link function. In Section \ref{sec: key}, we outline the key ideas behind our algorithm design. Subsequently, in Section \ref{sec:theory}, we provide a theoretical analysis to establish the regret guarantee of our algorithm.

 \subsection{Key Ideas of the Algorithm}\label{sec: key}

In this section, we present the key innovation of our algorithm that improves upon Algorithm \ref{alg:main}. Let $\bphi_i = \bphi(x_i,a_i) - \bphi(x_i,b_i)$ for notational simplicity. Following the standard (weighted) MLE analysis \citep{li2017provably}, we introduce an auxiliary function:
\begin{align*}
    &G_{t}(\btheta) = \lambda\btheta + \sum_{i = 1}^{t-1}w_i\Big[\sigma\big(\bphi_i^\top \btheta\big)-\sigma\big(\bphi_i^\top \btheta^*\big)\Big]\bphi_i.
    \end{align*}
Using the mean-value theorem, $G_t(\btheta_t)-G_t(\btheta^*)$ can be represented as:
\begin{align}
    \notag&G_t(\btheta_t) - G_t(\btheta^*) \\\label{eq:0001}
    &= \Big[ \lambda\Ib + \sum_{i=1}^{t-1}w_i\dot\sigma\big(\bphi_i^\top \bar\btheta\big)\bphi_i\bphi_i^\top\Big](\btheta_t - \btheta^*),
\end{align}
where $\bar \btheta = m\btheta_t + (1-m)\btheta^*$ for some $m\in [0,1]$. In Algorithm \ref{alg:main},  we use the unified lower bound $\kappa$ to replace the local derivative $\dot \sigma(\bphi_i^\top \bar\btheta)$. Thus, we define $\bSigma_t = \lambda \Ib + \sum_{i=1}^{t-1} w_i\kappa \bphi_i \bphi_i^\top$, and the following inequality holds: $\|\btheta_t - \btheta^*\|_{\bSigma_t} \le \|G_t(\btheta_t)-G_t(\btheta^*)\|_{\bSigma_t}$.

Relaxing the local derivative $\dot \sigma(\bphi_i^\top \bar\btheta)$ to the unified lower bound $\kappa$ works well for smooth link functions where the derivative has little variation (e.g., the instance in Theorem \ref{thm: kappa}). However, this approach becomes less effective for the sigmoid link function, whose derivative $\dot \sigma(x)  ={e^{-x}}/{(1+e^{-x})^2}$ exhibits exponential decay with respect to $x$. In this case, the local derivative can significantly differ from $\kappa$, particularly when both actions $a_t$ and $b_t$ are near-optimal. In such situations, $\bphi_i^\top \bar\btheta\approx 0$ and consequently $\dot \sigma(\bphi_i^\top \bar\btheta) \approx 1/4$, a constant value. To address this limitation, our new algorithm introduces a refined method to estimate the local derivative $\dot\sigma(\bphi_i^\top \bar \btheta)$ and directly uses it as weight in the matrix.

Following \citet{abeille2021instance}, we begin by expressing $\dot\sigma(\bphi_i^\top \bar \btheta)$ in integral form:
\begin{align*}
\dot\sigma(\bphi_i^\top \bar \btheta) = \bigg[\int_{0}^1 \dot \sigma \Big(\bphi_i^\top \big(\btheta_t + v(\btheta^* - \btheta_t)\big)\Big)\mathrm{d}v\bigg].
\end{align*}
By applying Lemma \ref{lemma:ab}, we have:
\begin{align*}
    \dot\sigma(\bphi_i^\top \bar \btheta) \ge \frac{\dot\sigma(\bphi_i^\top \btheta^*)}{1+|\bphi_i^\top (\btheta_t -\btheta^*)|} \ge \frac{\dot\sigma(\bphi_i^\top \btheta^*)}{1+4B}.
\end{align*}
Based on this inequality, we can estimate the local derivative by constructing a lower bound of $\dot\sigma(\bphi_i^\top \btheta^*)$.
To this end, we construct a confidence set for $\btheta^*$:
\begin{align*}
    \cE_1 = \big\{\btheta: \|\btheta-\btheta^*\|_{\bSigma_t} \le \beta_t, \forall t\big\}.
\end{align*}
Supposing $\cE_1$ holds, then for any $i$, the following inequality holds:
\begin{align*}
    |\bphi_i^\top  \btheta^*| \le |\bphi_i^\top \btheta_i| + \beta_i \|\bphi_i\|_{\bSigma^{-1}_i} \overset{\Delta}{=} \hat\Delta_i.
\end{align*}
By defining $v_i = \max \{\kappa, \dot \sigma(\hat \Delta_i)\}$, we have $\dot\sigma(\bphi_i^\top \btheta^*) \ge v_i$ and consequently $\dot\sigma(\bphi_i^\top \bar\btheta) \ge v_i/(1+4B)$.
Based on this result, we use $v_i$ as a optimistic estimator of local derivative $\dot\sigma(\bphi_i^\top \bar \btheta)$ to define a refined covariance matrix $\bLambda_t = \lambda \Ib + \sum_{i=1}^{t-1} w_iv_i \bphi_i \bphi_i^\top$ (Line \ref{alg2:line6}). For action selection, we apply the pairwise selection rule from Section \ref{sec: alg1} using $\bLambda_t$:
\begin{align*}
    a_t,b_t &= \argmax_{a,b} \Big\{\big(\bphi(x_t,a) + \bphi(x_t,b)\big)^\top \btheta_{t} \\
    &\qquad + \tilde \beta_t\big\|\bphi(x_t,a) - \bphi(x_t,b)\big\|_{ \bLambda_{t}^{-1}}\Big\}.
\end{align*}
In Algorithm \ref{alg:new}, we highlight the differences from Algorithm \ref{alg:main} with red color for clarity.

\subsection{Theoretical Results}\label{sec:theory}
In this section, we provide the upper bound of regret for our algorithm with the sigmoid link function.
\begin{theorem}\label{thm: new}
For the sigmoid link function $\sigma(x) = 1/(1+e^{-x})$, under Assumption \ref{assumption1}, let $0 < \delta < 1$ and the total number of adversarial feedback be $C$. If we set the confidence radius to be
\begin{align*}
        \beta_t &= \sqrt{\lambda} B + \frac{1}{\sqrt{\kappa}}\sqrt{d\log(2(1 + 2t/\lambda)/\delta)} + \alpha C,\\
        \tilde \beta_t &= (1+4B) \bigg[\sqrt{\lambda} B +\frac{2}{\sqrt{\lambda}}d \log \bigg(\frac{d\lambda + 2t}{d\lambda \delta}\bigg) + {\alpha C}\bigg],
    \end{align*}
and $\lambda = d/B$, $\alpha = (\sqrt{d} + \sqrt{\lambda}B) / C$,
then with probability at least $1-\delta$, the regret of Algorithm \ref{alg:new} in the first $T$ rounds can be upper bounded by
\begin{align*}
    \text{Regret}(T) &= \tilde O\big({dB^{1.5}\sqrt{T}} + {dBC}/{\kappa} \\
    & + d^{2}B^{2}(1/\kappa^2 + B/\kappa)\big).
\end{align*}
\end{theorem}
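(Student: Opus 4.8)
The plan is to follow the template of the proof of Theorem~\ref{main thm}, but replacing the uniform lower bound $\kappa$ with the data-dependent derivative estimates $v_i$, and carefully tracking how the bonus $\tilde\beta_t\|\bphi(x_t,a)-\bphi(x_t,b)\|_{\bLambda_t^{-1}}$ enters the regret. First I would establish a confidence-set lemma analogous to Lemma~\ref{lemma:MLE}: on the good event $\cE_1$, for all $t$ we have $\|\btheta_t-\btheta^*\|_{\bSigma_t}\le\beta_t$ and, crucially, $\|\btheta_t-\btheta^*\|_{\bLambda_t}\le\tilde\beta_t$. The second bound is the new ingredient. Starting from the mean-value identity \eqref{eq:0001}, one has $\|\btheta_t-\btheta^*\|_{\bar\bLambda_t}\le\|G_t(\btheta_t)-G_t(\btheta^*)\|_{\bar\bLambda_t^{-1}}$ where $\bar\bLambda_t=\lambda\Ib+\sum_i w_i\dot\sigma(\bphi_i^\top\bar\btheta)\bphi_i\bphi_i^\top$; the key step is the self-bounding argument from \citet{abeille2021instance}: using $\dot\sigma(\bphi_i^\top\bar\btheta)\ge v_i/(1+4B)$ established in Section~\ref{sec: key}, we get $\bar\bLambda_t\succeq\bLambda_t/(1+4B)$, so $\|\btheta_t-\btheta^*\|_{\bLambda_t}\le\sqrt{1+4B}\,\|\btheta_t-\btheta^*\|_{\bar\bLambda_t}$, and the right-hand side is controlled by a concentration bound on the weighted noise plus the regularization term $\sqrt{\lambda}B$ plus the adversarial term $\alpha C$ (the latter because $\sum_i w_i\mathbf 1\{c_i=1\}\|\bphi_i\|\le\alpha C$ by the truncated weights). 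This yields the stated $\tilde\beta_t=(1+4B)[\sqrt\lambda B+\tfrac{2}{\sqrt\lambda}d\log(\cdot)+\alpha C]$ form, where the $d\log$ (rather than $\sqrt{d\log}$) arises because the concentration must be done in the $\bLambda_t$-norm, which itself depends on the $v_i$'s — this is the standard price for the self-concordant analysis.

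Next I would turn to the regret decomposition. By the optimistic action-selection rule, for the chosen pair $(a_t,b_t)$,
\[
2r^*(x_t,a_t^*)-r^*(x_t,a_t)-r^*(x_t,b_t)\le 2\tilde\beta_t\,\|\bphi(x_t,a_t)-\bphi(x_t,b_t)\|_{\bLambda_t^{-1}}
\]
via the usual argument (the estimated-reward-plus-bonus at $(a_t,b_t)$ dominates that at $(a_t^*,a_t^*)$, whose bonus is zero, and the reward-estimation error at each arm is bounded by the bonus using $\|\btheta_t-\btheta^*\|_{\bLambda_t}\le\tilde\beta_t$ and Cauchy–Schwarz). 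So $\mathrm{Regret}(T)\le 2\sum_t\tilde\beta_t\|\bphi_t\|_{\bLambda_t^{-1}}$ with $\bphi_t=\bphi(x_t,a_t)-\bphi(x_t,b_t)$. Splitting the weighted elliptical potential: on rounds where $w_t=1$ (equivalently $\|\bphi_t\|_{\bSigma_t^{-1}}\le\alpha$) I would relate $\|\bphi_t\|_{\bLambda_t^{-1}}$ to $w_t v_t\|\bphi_t\|_{\bLambda_t^{-1}}^2$ times $1/(w_tv_t)$ and apply the elliptical potential lemma to $\sum_t w_tv_t\|\bphi_t\|_{\bLambda_t^{-1}}^2\le 2d\log(1+2T/(d\lambda))$; on the truncated rounds ($w_t<1$) there are at most $\tilde O(d\alpha^{-2})=\tilde O(dC^2/d)$ of them (by the standard counting argument for how often $\|\bphi_t\|_{\bSigma_t^{-1}}>\alpha$ can occur), each contributing $O(\tilde\beta_t/\sqrt{\lambda})$. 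The adversarial contribution is handled by noting that on corrupted rounds the bonus overestimation costs an extra $\tilde\beta_t\cdot\alpha$ per round, totalling $\tilde O(\tilde\beta_T\alpha C)$. Collecting: the leading term is $\tilde\beta_T\sqrt{dT}\approx (1+4B)\cdot\tfrac{d}{\sqrt\lambda}\cdot\sqrt{dT}$, and with $\lambda=d/B$ this is $\tilde O(dB^{1.5}\sqrt T)$; the $C$-linear term picks up the $1/\kappa$ through $v_i\ge\kappa$ (worst-case local derivative), giving $\tilde O(dBC/\kappa)$; and the lower-order $d^2B^2(1/\kappa^2+B/\kappa)$ term comes from the truncated-round count times $\tilde\beta_T/\sqrt\lambda$, with the $1/\kappa$ factors entering because bounding $\|\bphi_t\|_{\bLambda_t^{-1}}\le\|\bphi_t\|_{\bSigma_t^{-1}}/\sqrt\kappa$ is the only available control when $v_t$ may be as small as $\kappa$.

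The main obstacle I anticipate is the circularity between $\bLambda_t$ and the confidence radius: the estimates $v_i$ (hence $\bLambda_t$) depend on $\hat\Delta_i=|\bphi_i^\top\btheta_i|+\beta_i\|\bphi_i\|_{\bSigma_i^{-1}}$, which already presupposes a valid confidence bound in the $\bSigma_i$-norm, while the tighter regret bound needs a confidence bound in the $\bLambda_t$-norm. Untangling this requires first proving the $\bSigma_t$-confidence bound (which only uses $\kappa$ and is essentially Lemma~\ref{lemma:MLE}), then using it to certify $\dot\sigma(\bphi_i^\top\btheta^*)\ge v_i$ deterministically on $\cE_1$ via Lemma~\ref{lemma:ab}, and only then bootstrapping to the $\bLambda_t$-norm bound. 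A secondary subtlety is that the self-concordance constant enters multiplicatively as $(1+4B)$ in $\tilde\beta_t$ but the elliptical potential contributes another $\sqrt{d}$-type factor, so one must be careful that the $B$-dependence in the leading term lands exactly at $B^{1.5}$ and not $B^2$ — this hinges on the choice $\lambda=d/B$ balancing $\sqrt\lambda B=\sqrt{dB}$ against $\tfrac{d}{\sqrt\lambda}=\sqrt{dB}$.
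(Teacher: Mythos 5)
Your outline gets the architecture right: the two-stage confidence argument (first the $\bSigma_t$-norm bound, then certifying $v_i\le\dot\sigma(\bphi_i^\top\btheta^*)$ on $\cE_1$, then bootstrapping to the $\bLambda_t$-norm bound via Lemma~\ref{lemma:ab} and the variance-weighted concentration of \citet{faury2020improved}), the regret bound $r_t\le 2\tilde\beta_t\|\bphi_t\|_{\bLambda_t^{-1}}$, and the Cauchy--Schwarz split $\sum_t\|\bphi_t\|_{\bLambda_t^{-1}}\le\sqrt{\sum_t 1/v_t}\cdot\sqrt{\sum_t v_t\|\bphi_t\|^2_{\bLambda_t^{-1}}}$ with the elliptical potential controlling the second factor. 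This all matches the paper.

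However, there is a genuine gap at the single most important step. You assert that ``the leading term is $\tilde\beta_T\sqrt{dT}$,'' which implicitly requires $\sum_{t:w_t=1}1/v_t = O(T)$ up to lower-order terms. But $v_t=\max\{\kappa,\dot\sigma(\hat\Delta_t)\}$ can equal $\kappa$ on many rounds, and the trivial bound $1/v_t\le 1/\kappa$ gives $\sum_t 1/v_t\le T/\kappa$, which puts the $1/\sqrt{\kappa}$ right back into the leading term and defeats the purpose of the theorem. The paper's proof of this step is the heart of the argument and is entirely absent from your proposal: it splits the rounds into $\cL_1$ (where $v_t=\kappa$) and $\cL_2$ (where $v_t=\dot\sigma(\hat\Delta_t)$); for $\cL_1$ it uses $\dot\sigma(|x|)\ge e^{-|x|}/4$ to deduce $\hat\Delta_t\ge\log(1/4\kappa)$ on those rounds, so $|\cL_1|\le\frac{1}{\log(1/4\kappa)}\sum_t\hat\Delta_t$; for $\cL_2$ it uses the convexity bound $e^{|\hat\Delta_t|}\le 1+\frac{1/\kappa-1}{\log(1/\kappa)}|\hat\Delta_t|$ to get $\sum_{\cL_2}1/v_t\le 4T+\frac{O(1)}{\kappa\log(1/\kappa)}\sum_t|\hat\Delta_t|$; and then it observes (equation~\eqref{eq:RR}) that $\sum_t|\bphi_t^\top\btheta^*|\le\mathrm{Regret}(T)$, producing a self-bounding inequality of the form $\mathrm{Regret}(T)\le a\sqrt{\mathrm{Regret}(T)}+b$ that must be solved to finish. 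Without this chain, the claimed $\kappa$-free $dB^{1.5}\sqrt{T}$ term is not established. Separately, your accounting for the truncated rounds ($w_t<1$) by counting their number as $\tilde O(d\alpha^{-2})$ and charging $\tilde\beta_t/\sqrt{\lambda}$ each would yield a term quadratic in $C$; the paper instead bounds $\sum_{w_t<1}\|\bphi_t\|_{\bLambda_t^{-1}}\le\frac{1}{\alpha\kappa}\sum_t\|\sqrt{w_t\kappa}\,\bphi_t\|^2_{\bSigma_t^{-1}}\le\frac{2d\log(1+2T/\lambda)}{\alpha\kappa}$, which is exactly where the linear-in-$C$ term $dBC/\kappa$ comes from after substituting $\alpha=(\sqrt d+\sqrt\lambda B)/C$.
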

\begin{remark}
Compared to the result in Theorem \ref{main thm}, our new regret bound has a leading term of $\tilde O(dB^{1.5}\sqrt{T})$ with respect to $T$, which eliminates the polynomial dependency on $1/\kappa$ by utilizing the local derivative rather than the uniform lower bound $\kappa$. This implies a reduction from an exponential dependence to a polynomial dependence on $B$. To the best of our knowledge, this is the first work to achieve such a reduction in regret for contextual dueling bandits. However, there are two scenarios where $\kappa$ dependency remains: during the warm-up period when selected actions are far from optimal, the local derivative may approach $\kappa$, resulting in a $\kappa$-dependent constant factor; And in the corruption term, since the strong adversary, who can observe the actions selected by the agent, can strategically flip labels anytime, particularly when the local derivative is close to $\kappa$. This leads to a polynomial dependency on $\kappa$ in the corruption term.
\end{remark}

\begin{algorithm}[t!]
\caption{Robust Contextual Dueling Bandit for Sigmoid link function (\texttt{RCDB-S})}
\label{alg:new}
    \begin{algorithmic}[1]
        \STATE \textbf{Require:} $\alpha > 0$, regularization parameter $\lambda$, derivative lower bound $\kappa$, confidence radius $\beta_t$, $\tilde \beta_t$.
        \STATE Set $\hat \Delta_0 = 0$
        \FOR{$t = 1, \ldots, T$}
        \STATE Compute 
        $\bSigma_{t} = \lambda \Ib + \textstyle{\sum}_{i=1}^{t-1} w_i\kappa\big(\bphi(x_i,a_i)-\bphi(x_i,b_i)\big) \big(\bphi(x_i,a_i)-\bphi(x_i,b_i)\big)^\top. 
        $
        \update{\STATE // Construct a new weighted covariance matrix}
        {\color{red}\STATE
        Compute 
        $\bLambda_{t} = \lambda \Ib + \textstyle{\sum}_{i=1}^{t-1} w_i v_i\big(\bphi(x_i,a_i)-\bphi(x_i,b_i)\big) \big(\bphi(x_i,a_i)-\bphi(x_i,b_i)\big)^\top. 
        $} \alglinelabel{alg2:line6}
        \STATE Calculate the MLE $\btheta_{t}$ by solving the following equation:
        \begin{align}
            \notag&\lambda\btheta + \textstyle{\sum}_{i=1}^{t-1}w_i\Big[ \sigma\Big(\big(\bphi(x_i,a_i)-\bphi(x_i,b_i)\big)^\top \btheta\Big)-o_i\Big]\\
            \label{eq:MLE2}&\qquad \big(\bphi(x_i,a_i)-\bphi(x_i,b_i)\big) = \textbf{0}. 
        \end{align}
        \STATE Observe the context vector $x_t$.
        \update{\STATE // Use $\bLambda_t$ to do exploration}
        \STATE Choose $a_t,b_t = \argmax_{a,b} \Big\{\big(\bphi(x_t,a) + \bphi(x_t,b)\big)^\top \btheta_{t} + \tilde \beta_t\big\|\bphi(x_t,a) - \bphi(x_t,b)\big\|_{ {\color{red}\bLambda_{t}^{-1}}}\Big\}.$ 
        \STATE
         The adversary sees the feedback $l_t = \ind (a_t \succ b_t)$ and decides the indicator $c_t$.
         Observe $o_t = l_t$ when $c_t=0$, otherwise observe $o_t=1-l_t$.
         \STATE Set weight $w_t$ as \eqref{eq:weight}.
         \update{\STATE // Calculate the weight $v_t$}
         {\color{red}\STATE $\hat \Delta_t = \big|\big[\bphi(x_t,a_t) - \bphi(x_t,b_t)\big]^\top \btheta_t\big| + \beta_t \big\|\bphi(x_t,a_t) - \bphi(x_t,b_t)\|_{\bSigma_t^{-1}}$
         
         \STATE Set $v_t = \max \{\kappa, \dot \sigma(\hat \Delta_t)\}$}
        \ENDFOR
    \end{algorithmic}
\end{algorithm}
\section{Conclusion}
In this paper, we focus on the contextual dueling bandit problem from adversarial feedback. We introduce a novel algorithm, {\algo}, which utilizes an uncertainty-weighted Maximum Likelihood Estimator (MLE) approach. This algorithm not only achieves optimal theoretical results in scenarios with and without adversarial feedback but also demonstrates superior performance with synthetic data. For the sigmoid link function, we develop a novel algorithm called Robust Contextual Dueling Bandit for Sigmoid link function (\texttt{RCDB-S}). Through a refined estimation method of the link function's derivative, \texttt{RCDB-S} achieves a regret bound of $\tilde O\big({dB^{1.5}\sqrt{T}} + {dBC}/{\kappa})$, where we eliminate the $\kappa$ dependence in the leading term with respect to $T$.

\noindent\textbf{Limitations and Future Works.}
We assume that the reward is linear with respect to some known feature maps. Although this setting is common in the literature, we observe that some recent works on dueling bandits can deal with nonlinear rewards \citep{li2024feel,verma2024neural}. {Recently, \citet{verma2024neural} studied the problem of approximating reward models using neural networks, addressing nonlinear rewards for dueling bandits. It is an interesting future direction to design robust algorithms for nonlinear reward functions, such as with neural networks.} Another promising direction is to design a more computationally efficient algorithm that avoids computing the MLE in each round \citep{jun2017scalable,zhang2024online,sawarni2024generalized}.

\section*{Impact Statement}
This paper studies contextual dueling bandits with adversarial feedback. Our primary objective is to propel advancements in bandit theory by introducing a more robust algorithm backed by solid theoretical guarantees. The uncertainty-weighted approach we have developed for dueling bandits holds significant potential to address the issue of adversarial feedback in preference-based data, which could be instrumental in enhancing the robustness of generative models against adversarial attacks. Moreover, our study on dueling bandits with a logistic link function suggests that the leading term of the regret can remain unaffected by the derivative lower bound $\kappa$. This novel result provides valuable insights and may positively impact the study of machine learning theory and its applications.
\section*{Acknowledgements}
We thank the anonymous reviewers for their helpful comments. JH is supported by the UCLA Dissertation Year Fellowship. QD and QG are supported in part by the National Science Foundation CPS-2312094. The views and conclusions contained in this paper are those of the authors and should not be interpreted as representing any funding agencies.
\bibliography{reference}
\bibliographystyle{icml2025}

\newpage
\appendix
\onecolumn

\section{Proof of Theorems in Section \ref{sec:main}}
\subsection{Proof of Theorem \ref{main thm}}\label{sec:proof of theorem}
In this subsection, we provide the proof of Theorem \ref{main thm}. We condition on the high-probability event in Lemma \ref{lemma:MLE}
\begin{align*}
    \cE = \Big\{\big\|\btheta_{t} - \btheta^*\big\|_{\bSigma_{t}} \leq \beta, \forall t \in [T]\Big\},
\end{align*}
where $\beta = \sqrt{\lambda}B + \alpha C + \sqrt{{d\log((1 + 2T/\lambda)/\delta)}/{\kappa}}$, $\lambda = 1/B^2$.

Let $r_t = 2 r^*(x_t,a^*_t) - r^*(x_t,a_t) - r^*(x_t,b_t)$ be the regret incurred in round $t$. The following lemma provides the upper bound of $r_t$.
\begin{lemma}\label{lem:r}
    Suppose the event $\cE$ holds. If we set $\beta = \sqrt{\lambda}B + \alpha C + \sqrt{{d\log((1 + 2T/\lambda)/\delta)}/{\kappa}}$, the regret of Algorithm \ref{alg:main} incurred in round $t$ can be upper bounded by
    \begin{align*}
        r_t \le \min \Big\{4,2\beta\|\bphi(x_t,a_t)-\bphi(x_t,b_t)\|_{ \bSigma_{t}^{-1}}\Big\}.
    \end{align*}
Moreover, the regret can be upper bounded by 
\begin{align*}
    \text{Regret}(T) \le \sum_{t=1}^T \min \Big\{4,2\beta\|\bphi(x_t,a_t)-\bphi(x_t,b_t)\|_{ \bSigma_{t}^{-1}}\Big\}.
\end{align*}
\end{lemma}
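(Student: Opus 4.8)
\textbf{Proof proposal for Lemma \ref{lem:r}.}

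The plan is to bound the per-round regret $r_t = 2r^*(x_t,a_t^*) - r^*(x_t,a_t) - r^*(x_t,b_t)$ by exploiting the optimality of the chosen pair $(a_t,b_t)$ under the bonus-augmented objective together with the confidence bound from Lemma \ref{lemma:MLE}. First I would write $r_t = \langle \btheta^*, 2\bphi(x_t,a_t^*) - \bphi(x_t,a_t) - \bphi(x_t,b_t)\rangle$. The trivial bound $r_t \le 4$ follows immediately from $\|\btheta^*\|_2 \le B$, $\|\bphi\|_2\le 1$ — wait, that gives $4B$; instead one should note that each reward $r^*(x,a) = \langle \btheta^*,\bphi(x,a)\rangle$ and by the link-function normalization the preference probabilities live in $[0,1]$, but the cleanest route is to observe $r^*(x_t,a_t^*) - r^*(x_t,a_t) \ge 0$ is not bounded by $1$ a priori, so the factor $4$ must come from a structural bound; I expect the paper intends something like each reward gap being at most $2$ (since differences of rewards feed into $\sigma$ and the relevant range is controlled), or more simply from re-scaling. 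I would track this carefully, but the essential content is the second term.

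For the main term, I would use the selection rule: since $(a_t,b_t)$ maximizes $\big(\bphi(x_t,a)+\bphi(x_t,b)\big)^\top\btheta_t + \beta\|\bphi(x_t,a)-\bphi(x_t,b)\|_{\bSigma_t^{-1}}$, plugging in the pair $(a_t^*,a_t^*)$ gives
\begin{align*}
2\bphi(x_t,a_t^*)^\top\btheta_t \le \big(\bphi(x_t,a_t)+\bphi(x_t,b_t)\big)^\top\btheta_t + \beta\|\bphi(x_t,a_t)-\bphi(x_t,b_t)\|_{\bSigma_t^{-1}}.
\end{align*}
Then I would replace $\btheta_t$ by $\btheta^*$ on both sides, paying the price $\big|\langle \btheta_t - \btheta^*, \psi\rangle\big| \le \|\btheta_t-\btheta^*\|_{\bSigma_t}\|\psi\|_{\bSigma_t^{-1}} \le \beta\|\psi\|_{\bSigma_t^{-1}}$ by Cauchy–Schwarz and event $\cE$, applied to $\psi = 2\bphi(x_t,a_t^*) - \bphi(x_t,a_t) - \bphi(x_t,b_t)$ and to the individual feature vectors as needed. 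The key cancellation is that the bonus term $\beta\|\bphi(x_t,a_t)-\bphi(x_t,b_t)\|_{\bSigma_t^{-1}}$ absorbs precisely the estimation-error contribution of the chosen pair, while the optimal-action side contributes a term that is also controlled because evaluating the objective at $(a_t^*,a_t^*)$ has zero bonus. Collecting terms yields $r_t \le 2\beta\|\bphi(x_t,a_t)-\bphi(x_t,b_t)\|_{\bSigma_t^{-1}}$; combined with the $4$ bound this gives the min. Summing over $t\in[T]$ gives the stated bound on $\text{Regret}(T)$ directly.

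The main obstacle I anticipate is handling the asymmetry between the reward term (which uses $\bphi(x_t,a)+\bphi(x_t,b)$) and the bonus term (which uses $\bphi(x_t,a)-\bphi(x_t,b)$): the standard linear-bandit telescoping argument has to be adapted so that the "sum" combination in the reward does not break the Cauchy–Schwarz step, and one must be careful that the optimal pair is $(a_t^*,a_t^*)$ (a degenerate duel) so that its bonus vanishes and its reward contribution is exactly $2r_{\btheta_t}(x_t,a_t^*)$. A secondary technical point is pinning down the constant $4$ in the clipping bound; I would derive it from $|r^*(x,a) - r^*(x,b)| = |\langle\btheta^*,\bphi(x,a)-\bphi(x,b)\rangle|\le \|\btheta^*\|_2\cdot\|\bphi(x,a)-\bphi(x,b)\|_2 \le 2B$ and then observe the relevant normalization in the paper makes $2B$-type quantities at most $2$, so $r_t = $ (gap at $a_t$) $+$ (gap at $b_t$) $\le 4$; if the paper's convention differs I would adjust, but this does not affect the dominant term.
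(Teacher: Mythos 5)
There is a genuine gap in the central step: your choice of comparison pair in the optimality argument is the wrong one, and the cancellation you claim does not happen. Plugging the degenerate pair $(a_t^*,a_t^*)$ into the selection rule only controls $\la 2\bphi(x_t,a_t^*)-\bphi(x_t,a_t)-\bphi(x_t,b_t),\btheta_t\ra \le \beta\|\bphi(x_t,a_t)-\bphi(x_t,b_t)\|_{\bSigma_t^{-1}}$. When you then swap $\btheta_t$ for $\btheta^*$, Cauchy--Schwarz charges you $\beta\|2\bphi(x_t,a_t^*)-\bphi(x_t,a_t)-\bphi(x_t,b_t)\|_{\bSigma_t^{-1}}$, and this quantity is \emph{not} bounded by the chosen pair's bonus: it involves the direction of $\bphi(x_t,a_t^*)$, in which the algorithm has accumulated no information, so it can be as large as order $1/\sqrt{\lambda}$. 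The fact that $(a_t^*,a_t^*)$ has zero bonus works against you, not for you: the bonus of the \emph{candidate} pair is exactly what must sit on the favorable side of the optimality inequality to absorb the estimation error in the candidate's direction. The paper's proof instead compares the chosen pair against $(a_t^*,a_t)$ and against $(a_t^*,b_t)$ separately; those two optimality inequalities each carry a bonus $\beta\|\bphi(x_t,a_t^*)-\bphi(x_t,a_t)\|_{\bSigma_t^{-1}}$ (resp.\ with $b_t$) on the left-hand side, which precisely offsets the Cauchy--Schwarz cost of the decomposition $r_t=\la\bphi(x_t,a_t^*)-\bphi(x_t,a_t),\btheta^*-\btheta_t\ra+\la\bphi(x_t,a_t^*)-\bphi(x_t,b_t),\btheta^*-\btheta_t\ra+\la 2\bphi(x_t,a_t^*)-\bphi(x_t,a_t)-\bphi(x_t,b_t),\btheta_t\ra$, and adding the two inequalities leaves exactly $2\beta\|\bphi(x_t,a_t)-\bphi(x_t,b_t)\|_{\bSigma_t^{-1}}$. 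Your argument as written cannot reach that bound.

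On the secondary point, your suspicion about the clipping constant is fair: the elementary bound is $r_t\le 4B$ (two reward gaps, each at most $2B$), and the paper's ``$r_t\le 4$'' implicitly normalizes so that each gap is at most $2$. This only affects the constant inside the $\min\{\cdot,\cdot\}$ and is immaterial to the leading term, but you are right that it does not follow from Assumption \ref{assumption1} alone without that normalization.
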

With Lemma \ref{lem:r}, we can provide the proof of Theorem \ref{main thm}.
\begin{proof}[Proof of Theorem \ref{main thm}]
    Using Lemma \ref{lem:r},
    the total regret can be upper bounded by
\begin{align*}
\text{Regret}(T) \le \sum_{t=1}^T \min \Big\{4,2\beta\|\bphi(x_t,a_t)-\bphi(x_t,b_t)\|_{ \bSigma_{t}^{-1}}\Big\}.
\end{align*}
Our weight $w_t$ has two possible values. We decompose the summation based on the two cases separately. We have
\begin{align*}
    \text{Regret}(T) &\le \underbrace{\sum_{w_t=1} \min \Big\{4,2\beta\|\bphi(x_t,a_t)-\bphi(x_t,b_t)\|_{ \bSigma_{t}^{-1}}\Big\}}_{J_1} \\
    &\qquad+ \underbrace{\sum_{w_t < 1} \min \Big\{4,2\beta\|\bphi(x_t,a_t)-\bphi(x_t,b_t)\|_{ \bSigma_{t}^{-1}}\Big\}}_{J_2}.
\end{align*}
For the term $J_1$, we consider a partial summation in rounds when $w_t=1$. Let $\bLambda_t = \lambda \Ib + \sum_{i \le k -1, w_i = 1} \kappa\big(\bphi(x_i,a_i)-\bphi(x_i,b_i)\big)\big(\bphi(x_i,a_i)-\bphi(x_i,b_i)\big)^\top$. Then we have
\begin{align}
\notag
    J_1 &\le \frac{4\beta}{\sqrt{\kappa}} \sum_{t:w_t=1}\min\Big\{1, \|\sqrt{\kappa}(\bphi(x_t,a_t)-\bphi(x_t,b_t))\|_{ \bSigma_{t}^{-1}}\Big\}\\\notag
    & \le \frac{4\beta}{\sqrt{\kappa}} \sum_{t:w_t=1}\min\Big\{1, \|\sqrt{\kappa}(\bphi(x_t,a_t)-\bphi(x_t,b_t))\|_{ \bLambda_{t}^{-1}}\Big\}\\\notag
    & \le \frac{4\beta}{\sqrt{\kappa}} \sqrt{T \sum_{t:w_t=1}\min\big\{1, \|\sqrt{\kappa}(\bphi(x_t,a_t)-\bphi(x_t,b_t))\|^2_{ \bLambda_{t}^{-1}}\big\}}\\\label{eq:J1}
    & \le \frac{4\beta}{\sqrt{\kappa}} \sqrt{dT \log (1 + 2T/\lambda )},
\end{align}
where the second inequality holds due to $\bSigma_t \succeq \bLambda_t$. The third inequality holds due to the Cauchy-Schwartz inequality, The last inequality holds due to Lemma \ref{lem:abbasi}. 

For the term $J_2$, the weight in this summation satisfies $w_t < 1$, and therefore $w_t = \alpha/\|\bphi(x_t,a_t)-\bphi(x_t,b_t)\|_{\bSigma_{t}^{-1}}$. Then we have
\begin{align}
\notag
    J_2 &= \sum_{w_t < 1} \min \Big\{4,2\beta \|\bphi(x_t,a_t)-\bphi(x_t,b_t)\|_{ \bSigma_{t}^{-1}}w_t\|\bphi(x_t,a_t)-\bphi(x_t,b_t)\|_{ \bSigma_{t}^{-1}}/\alpha\Big\}\\\notag
    & \le \sum_{t=1}^{T} \min \Big\{4,2\beta/\alpha \|\sqrt{w_t}(\bphi(x_t,a_t)-\bphi(x_t,b_t))\|^2_{ \bSigma_{t}^{-1}}\Big\}\\\notag
    & \le \sum_{t=1}^{T} \frac{4\beta}{\alpha \kappa}\min \Big\{1, \|\sqrt{w_t\kappa}(\bphi(x_t,a_t)-\bphi(x_t,b_t))\|^2_{ \bSigma_{t}^{-1}}\Big\}\\\label{eq:J2}
    & \le 
    \frac{4d\beta \log (1+2T/\lambda)}{\alpha \kappa},
\end{align}
where the first equality holds due to the choice of $w_t$. The first inequality holds because each term in the summation is positive. The last inequality holds due to Lemma \ref{lem:abbasi}.
Combining \eqref{eq:J1} and \eqref{eq:J2}, we have
\begin{align*}
    \text{Regret}(T) \le \frac{4\beta}{\sqrt{\kappa}} \sqrt{dT \log (1 + 2T/\lambda )} + \frac{4d\beta \log (1+2T/\lambda)}{\alpha \kappa}.
\end{align*}
By setting
    $\beta = \sqrt{\lambda}B + \alpha C + \sqrt{{d\log((1 + 2T/\lambda)/\delta)}/{\kappa}}$, $\lambda = 1/B^2$, $\alpha = \sqrt{d}/(C\sqrt{\kappa})$,
we complete the proof of Theorem~\ref{main thm}.
\end{proof}
\subsection{Proof of Theorem \ref{thm: kappa}}\label{sec:lowerbound}
\begin{proof}[Proof of Theorem \ref{thm: kappa}]
We consider the following link function 
\begin{align}
    \sigma(x) =\begin{cases} 
    0&\text{if }x < -\frac{1}{2}\\
\frac{1}{2} + x, & \text{if }x\in [-\frac{1}{2},\frac{1}{2}] \\
1, & \text{if }x> \frac{1}{2}.
\end{cases}\label{eq:0014}
\end{align}

Firstly, we prove a minimax lower bound. We follow the proof techniques in \citet{li2024feel}. The difference lies in that they consider the sigmoid link function, while we consider another. Consider the parameter set $\Theta \in \{-\Delta,\Delta\}^d$, where $\Delta > 0$ is a constant to be determined. Let the action set be $\cA = \{\xb \in \RR^d: \|\xb\|_2 \le 1\}$. We assume the feature map $\phi(x,\ab) = \ab$. For any algorithm, denote the trajectories of actions by $\{\ab_t,\bbb_t\}_{t=1}^T$. We require that for any action $\xb \in \cA$, $\btheta \in \Theta$, we have $\xb^\top \btheta \le 1/8$. For this reason, we assume $\sqrt{d}\Delta \le 1/8$.

We fix $i \in [d]$. Define 
\begin{align}
    \tau_i = T\land\min\bigg\{\tau:\sum_{t=1}^\tau\big[(\ab_t^{i})^2+(\bbb_t^{i})^2\big] \ge \frac{2T}{d}\bigg\},\label{eq:0013}
\end{align}
where $a\land b=\min\{a,b\}$ and $\ab_t^i$ means the $i$-th coordinate of $\ab_t$. Intuitively, $\tau_i$ acts as the first time in which the amount of information gathered for coordinate $i$ until time $\tau$ (quantified as the sum of squares of the 
$i$-th coordinates of the chosen arms) exceeds the average amount of information expected, which is $2T/d$. 
For any $\btheta\in \Theta$, we denote $\PP_{\btheta}$ and $\EE_{\btheta}$ as the distribution and expectation over the trajectories. We define the following function
\begin{align*}
    U_{\btheta,i}(x) = \EE_{\btheta}\bigg[\sum_{t=1}^{\tau_i}\Big(\frac{1}{\sqrt{d}} - \ab_t^i x\Big)^2 + \sum_{t=1}^{\tau_i}\Big(\frac{1}{\sqrt{d}} - \bbb_t^i x\Big)^2\bigg],
\end{align*}
with $x \in \{-1,+1\}$, which can be understood 
as the lower bounding term that pops up when lower bounding the average dueling regret. We greatly appreciate this insightful explanation above from the autonomous reviewer. Moreover, we define $\btheta'$ to be a vector whose $i$-th coordinate is different from $\btheta$, while the other coordinates are the same. Denote the $i$-th coordinate of $\btheta$ by $\theta_i$. We consider the following term:
\begin{align}
    U_{\btheta,i}(\text{sign}(\theta_i)) + U_{\btheta',i}(\text{sign}(\theta_i')) = \underbrace{U_{\btheta,i}(\text{sign}(\theta_i)) - U_{\btheta',i}(\text{sign}(\theta_i))}_{I_1} + \underbrace{U_{\btheta',i}(\text{sign}(\theta_i)) + U_{\btheta',i}(\text{sign}(\theta_i'))}_{I_2}.\label{eq:0018}
\end{align}
First, we have
\begin{align*}
    \sum_{t=1}^{\tau_i}\Big(\frac{1}{\sqrt{d}} - \ab_t^i\ \text{sign}(\theta_i)\Big)^2 + \sum_{t=1}^{\tau_i}\Big(\frac{1}{\sqrt{d}} - \bbb_t^i \ \text{sign}(\theta_i)\Big)^2 &\le \sum_{t=1}^{\tau_i} \Big[\frac{2}{d} + 2(\ab_t^i)^2\Big] + \sum_{t=1}^{\tau_i} \Big[\frac{2}{d} + 2(\bbb_t^i)^2\Big]\\
    & = \frac{4\tau
    _i}{d} + 2\sum_{t=1}^{\tau_i}\Big[(\ab_t^i)^2+(\bbb_t^i)^2\Big]\\
    &\le \frac{4T}{d} + \frac{4T}{d} + 4\\
    & =\frac{8T}{d} +4,
\end{align*}
where the first inequality holds due to $(a-b)^2 \le 2a^2+2b^2$. The second inequality holds due to \eqref{eq:0013}. Therefore, we have
\begin{align}
\notag
    I_1 &= \EE_{\btheta}\bigg[\sum_{t=1}^{\tau_i}\Big(\frac{1}{\sqrt{d}} - \ab_t^i\ \text{sign}(\theta_i)\Big)^2 + \sum_{t=1}^{\tau_i}\Big(\frac{1}{\sqrt{d}} - \bbb_t^i \ \text{sign}(\theta_i)\Big)^2\bigg]\\\notag
    &\qquad - \EE_{\btheta'}\bigg[\sum_{t=1}^{\tau_i}\Big(\frac{1}{\sqrt{d}} - \ab_t^i\ \text{sign}(\theta_i)\Big)^2 + \sum_{t=1}^{\tau_i}\Big(\frac{1}{\sqrt{d}} - \bbb_t^i \ \text{sign}(\theta_i)\Big)^2\bigg]\\\label{eq:0015}
    & \ge -\Big(\frac{8T}{d}+4\Big)\sqrt{\text{KL}(\PP_{\btheta}||\PP_{\btheta'})/2},
\end{align}
where the first inequality holds due to the Pinsker's inequality. Next, we compute the KL-divergence.
\begin{align}
    \text{KL}(\PP_{\btheta}||\PP_{\btheta'}) = \sum_{t=1}^{\tau_i} \text{KL}(\PP_{\btheta}^t||\PP_{\btheta'}^t),\label{eq:0016}
\end{align}
which holds due to the chain rule of KL-divergence. At each step $t$, the distribution is Bernoulli. Using the link function \eqref{eq:0014}, we have $\PP_{\btheta}^t = \text{Ber}(p^t_{\btheta})$, $\PP_{\btheta'}^t = \text{Ber}(p^t_{\btheta'})$, where $p^t_{\btheta} = 1/2 + \la \ab_t-\bbb_t, \btheta\ra$. Direct calculations show that
\begin{align*}
    \text{KL}(\PP_{\btheta}^t||\PP_{\btheta'}^t) &= p_{\btheta}^t \log\Big(\frac{p_{\btheta}^t}{p_{\btheta'}^t}\Big) + (1-p_{\btheta}^t )\log\Big(\frac{1-p_{\btheta}^t}{1-p_{\btheta'}^t}\Big)\\
    & \le p_{\btheta}^t\Big[\frac{p_{\btheta}^t}{p_{\btheta'}^t}-1\Big] + (1-p_{\btheta}^t)\Big[\frac{1-p_{\btheta}^t}{1-p_{\btheta'}^t}-1\Big]\\
    & = \frac{(p_{\btheta}^t-p_{\btheta'}^t)^2}{p_{\btheta'}^t(1-p_{\btheta'}^t)},
\end{align*}
where the first inequality holds due to $\log x \le x - 1$. Using $\sqrt{d} \Delta \le 1/8$, we have 
\begin{align}
\text{KL}(\PP_{\btheta}^t||\PP_{\btheta'}^t) \le \frac{16}{3} (\la \ab_t-\bbb_t, \btheta-\btheta'\ra)^2.\label{eq:0017}
\end{align}
Substituting \eqref{eq:0016} and \eqref{eq:0017} into \eqref{eq:0015}, we have
\begin{align*}
    I_1 &\ge -3\Big(\frac{8T}{d}+4\Big)\sqrt{\sum_{t=1}^{\tau_i}\big(\la \ab_t-\bbb_t, \btheta-\btheta'\ra\big)^2/2}\\
    &\ge -3\Big(\frac{8T}{d}+4\Big) \Delta\sqrt{\sum_{t=1}^{\tau_i}|\ab_t^i-\bbb_t^i|^2/2}\\
    & \ge -3\Big(\frac{8T}{d}+4\Big) \Delta \sqrt{\sum_{t=1}^{\tau_i}\big[(\ab_t^i)^2+(\bbb_t^i)^2}\big].
\end{align*}
where the second inequality holds due to $\btheta$ only differs from $\btheta'$ at the $i$-th coordinate. The third inequality holds due to $(a-b)^2 \le 2a^2+2b^2$. Using \eqref{eq:0013}, we have
\begin{align}
    I_1 \ge -3\Big(\frac{8T}{d}+4\Big) \Delta \sqrt{\frac{2T}{d}+2} \ge -40 \Delta (T/d)^{1.5}.\label{eq:0019}
\end{align}
For $I_2$, we have
\begin{align}
\notag
    I_2 &= U_{\btheta',i}(\text{sign}(\theta_i)) + U_{\btheta',i}(\text{sign}(\theta_i'))\\\notag
    &= \EE_{\btheta'}\bigg[\sum_{t=1}^{\tau_i}\Big(\frac{1}{\sqrt{d}} - \ab_t^i\ \Big)^2 + \sum_{t=1}^{\tau_i}\Big(\frac{1}{\sqrt{d}} - \bbb_t^i \ \Big)^2\bigg] + \EE_{\btheta'}\bigg[\sum_{t=1}^{\tau_i}\Big(\frac{1}{\sqrt{d}} + \ab_t^i\ \Big)^2 + \sum_{t=1}^{\tau_i}\Big(\frac{1}{\sqrt{d}} + \bbb_t^i \ \Big)^2\bigg]\\\notag
    & = 2\EE_{\btheta'}\bigg[\frac{2\tau_i}{d} + \sum_{t=1}^{\tau_i}\Big((\ab_t^i)^2 + (\bbb_t^i)^2\Big)\bigg]\\\label{eq:0020}
    & \ge \frac{4T}{d}.
\end{align}
Substituting \eqref{eq:0019} and \eqref{eq:0020} into \eqref{eq:0018}, we have
\begin{align*}
  U_{\btheta,i}(\text{sign}(\theta_i)) + U_{\btheta',i}(\text{sign}(\theta_i')) \ge \frac{4T}{d} -  40 \Delta (T/d)^{1.5}.
\end{align*}
Using the randomization hammer, we have
\begin{align*}
    \frac{1}{|\Theta|} \sum_{\btheta \in \Theta} \sum_{i=1}^d  U_{\btheta,i}(\text{sign}(\theta_i)) \ge {2T} -  20 \Delta T^{1.5}d^{-0.5}.
\end{align*}
Therefore, there exists $\btheta \in \Theta$, such that
\begin{align}
    \sum_{i=1}^d  U_{\btheta,i}(\text{sign}(\theta_i)) \ge 2T -  20 \Delta T^{1.5}d^{-0.5}.\label{eq:0021}
\end{align}
Using this $\btheta$, we can decompose the regret into
\begin{align*}
    \text{Regret}(T) = \Delta \EE_{\theta}\bigg[\sum_{t=1}^T \sum_{i=1}^d \Big(\frac{1}{\sqrt{d}}-\ab_t^i \ \text{sign} (\theta_i)\Big)\bigg] + \Delta \EE_{\theta}\bigg[\sum_{t=1}^T \sum_{i=1}^d \Big(\frac{1}{\sqrt{d}}-\bbb_t^i \ \text{sign} (\theta_i)\Big)\bigg].
\end{align*}
Moreover, we have
\begin{align*}
    \sum_{i=1}^d \Big(\frac{1}{\sqrt{d}}-\ab_t^i \ \text{sign} (\theta_i)\Big) &= \sum_{i=1}^d \Big(\frac{1}{2\sqrt{d}}-\ab_t^i \ \text{sign} (\theta_i)\Big) + \frac{\sqrt{d}}{2}\\
    &\ge \sum_{i=1}^d \Big(\frac{1}{2\sqrt{d}}-\ab_t^i \ \text{sign} (\theta_i)\Big) +\frac{\sqrt{d}}{2} \sum_{i=1}^d (\ab_t^i)^2\\
    & = \frac{\sqrt{d}}{2} \sum_{i=1}^d \bigg(\frac{1}{\sqrt{d}}-\ab_t^i\ \text{sign}(\theta_i)\bigg)^2,
\end{align*}
where the first inequality holds due to $\|\ab_t\|_2\le 1$. Similarly, we have
\begin{align*}
    \sum_{i=1}^d \Big(\frac{1}{\sqrt{d}}-\bbb_t^i \ \text{sign} (\theta_i)\Big) \ge \frac{\sqrt{d}}{2} \sum_{i=1}^d \bigg(\frac{1}{\sqrt{d}}-\bbb_t^i\ \text{sign}(\theta_i)\bigg)^2.
\end{align*}
As a result, we have
\begin{align*}
    \text{Regret}(T) &\ge \frac{\sqrt{d}\Delta}{2} \sum_{i=1}^d\EE_{\btheta}\bigg[\sum_{t=1}^T \bigg(\frac{1}{\sqrt{d}}-\ab_t^i\ \text{sign}(\theta_i)\bigg)^2\bigg] + \frac{\sqrt{d}\Delta}{2} \sum_{i=1}^d\EE_{\btheta}\bigg[\sum_{t=1}^T \bigg(\frac{1}{\sqrt{d}}-\bbb_t^i\ \text{sign}(\theta_i)\bigg)^2\bigg]\\
    & \ge \frac{\sqrt{d}\Delta}{2} \sum_{i=1}^d\EE_{\btheta}\bigg[\sum_{t=1}^{\tau_i} \bigg(\frac{1}{\sqrt{d}}-\ab_t^i\ \text{sign}(\theta_i)\bigg)^2\bigg] + \frac{\sqrt{d}\Delta}{2} \sum_{i=1}^d\EE_{\btheta}\bigg[\sum_{t=1}^{\tau_i} \bigg(\frac{1}{\sqrt{d}}-\bbb_t^i\ \text{sign}(\theta_i)\bigg)^2\bigg]\\
    &= \frac{\sqrt{d}\Delta}{2} \sum_{i=1}^d U_{\btheta,i}(\text{sign}(\theta_i))\\
    & \ge \frac{\sqrt{d}\Delta}{2} \big[{2T} -  20 \Delta T^{1.5}d^{-0.5}\big],
\end{align*}
where the last inequality holds due to \eqref{eq:0021}. Set $\Delta = \frac{1}{20} \sqrt{\frac{d}{T}}$. Then we have proved a lower bound $\text{Regret}(T) \ge d\sqrt{T}/40$. Moreover, when $T \ge (4d^2)/25$, we have $\sqrt{d} \Delta \le 1/8$. In conclusion, for the link function $\sigma$ defined in \eqref{eq:0014} and any algorithm, there exists a dueling bandit instance such that the regret is $\Omega(d\sqrt{T})$.

Secondly, we consider the lower bound of the corruption term. Our proof adapts the argument in \citet{bogunovic2021stochastic} to dueling bandits.
    For any dimension $d$, we construct $d$ instances, each with $\btheta_i = \eb_i/4$, where $\eb_i$ is the $i$-th standard basis vector. We set the action set $\cA = \{\eb_i\}_{i=1}^d$. Therefore, in the $i$-th instance, the reward for the $i$-th action will be 1/4. For the other actions, it will be 0. Therefore, the $i$-th action will be more preferable to any other action. While for other pairs, the feedback is simply a random guess.

    Consider an adversary that knows the exact instance. When the comparison involves the $i$-th action, it will corrupt the feedback with a random guess. Otherwise, it will not corrupt. In the $i$-th instance, the adversary stops the adversarial attack only after $C$ times of comparison involving the $i$-th action. However, after $Cd/4$ rounds, at least $d/2$
actions have not been compared for $C$ times. For the instances corresponding to these actions, the agent learns no information and suffers from $\Omega(dC)$ regret.

Combining the results above, for the link function $\sigma$ (defined in \eqref{eq:0014}), the lower bound for dueling bandits is $\Omega(d\sqrt{T} + dC)$. Finally, for any $\kappa$, we define $\sigma_{\kappa}(x) = \sigma(\kappa x)$. Then $\min\dot \sigma_{\kappa}(\cdot) = \kappa$. For any algorithm $\textbf{Alg}$, consider the hard instance $I = (\btheta^*, \cA,\sigma)$ with the link function $\sigma$, and $\text{Regret}_{\textbf{Alg}}(T) = \Omega(d\sqrt{T}+dC)$. Define another instance $I' = (\btheta^*, \cA/\kappa, \sigma_\kappa)$. The interaction with the environment with $I$ and $I'$ are exactly the same. However, the regret with $I'$ is $\text{Regret}_{\textbf{Alg}}(T)/\kappa$. This indicates a lower bound of $\Omega(d\sqrt{T} + dC)/\kappa$, which completes the proof of Theorem \ref{thm: kappa}.
\end{proof}
\subsection{Proof of Theorem \ref{thm:unknown}}\label{sec:unknown}
\begin{proof}[Proof of Theorem \ref{thm:unknown}]
    Here, based on the relationship between $C$ and the threshold $\bar{C}$, we discuss two distinct cases separately.
 \begin{itemize}[leftmargin = *]
    \item  In the scenario where $\bar{C} < C$, Algorithm \ref{alg:main} can ensures a trivial regret bound, with the guarantee that $\text{Regret}(T) \leq 2T$.
    \item In the scenario where $C \leq \bar{C}$, we know that $\bar{C}$ is remains a valid upper bound on the number of adversarial feedback. Under this situation, Algorithm \ref{alg:main} operates successfully with $\bar{C}$ adversarial feedback. Therefore, according to Theorem \ref{main thm}, the regret is upper bounded by
    \begin{align*}
        \text{Regret}(T) \leq \tilde O(d\sqrt{T}+d\bar{C}).
    \end{align*}
\end{itemize}
\end{proof}

\section{Proof of Lemmas \ref{lemma:MLE} and \ref{lem:r}}
\subsection{Proof of Lemma \ref{lemma:MLE}}\label{sec:proof of lemma}
\begin{proof}[Proof of Lemma \ref{lemma:MLE}]
Using a similar reasoning in \citet{li2017provably},  we define some auxiliary quantities
    \begin{align*}
    G_{t}(\btheta) &= \lambda\btheta + \sum_{i = 1}^{t-1}w_i\Big[\sigma\Big(\big(\bphi(x_i,a_i)-\bphi(x_i,b_i)\big)^\top \btheta\Big)
    \\
    & \qquad-\sigma\Big(\big(\bphi(x_i,a_i)-\bphi(x_i,b_i)\big)^\top \btheta^*\Big)\Big]\big(\bphi(x_i,a_i)-\bphi(x_i,b_i)\big),\\
     \epsilon_{t} &= l_t - \sigma\Big(\big(\bphi(x_t,a_t)-\bphi(x_t,b_t)\big)^\top \btheta^*\Big),\\
    \gamma_{t} &= o_t - \sigma\Big(\big(\bphi(x_t,a_t)-\bphi(x_t,b_t)\big)^\top \btheta^*\Big),\\
     Z_{t} &= \sum_{i = 1}^{t-1} w_i\gamma_i \big(\bphi(x_i,a_i)-\bphi(x_i,b_i)\big).
    \end{align*}
    In Algorithm \ref{alg:main}, $\btheta_t$ is chosen to be the solution to the following equation,
    \begin{align*}
    \lambda \btheta_{t} + 
        \sum_{i=1}^{t-1}w_i\Big[ \sigma\Big(\big(\bphi(x_i,a_i)-\bphi(x_i,b_i)\big)^\top \btheta_{t}\Big) - o_i \Big]\big(\bphi(x_i,a_i)-\bphi(x_i,b_i)\big) = \textbf{0}.
    \end{align*}
    Then we have
    \begin{align}
    \notag
        G_{t}( \btheta_{t}) &= \lambda \btheta_{t} + \sum_{i=1}^{t-1}w_i\Big[\sigma\Big(\big(\bphi(x_i,a_i)-\bphi(x_i,b_i)\big)^\top \btheta_{t}\Big) \\\notag
        & \qquad - \sigma\Big(\big(\bphi(x_i,a_i)-\bphi(x_i,b_i)\big)^\top \btheta^*\Big)\Big]\big(\bphi(x_i,a_i)-\bphi(x_i,b_i)\big)\\\notag
        & = \sum_{i = 1}^{t-1}w_i\Big[o_i - \sigma\Big(\big(\bphi(x_i,a_i)-\bphi(x_i,b_i)\big)^\top \btheta^*\Big)\Big]\big(\bphi(x_i,a_i)-\bphi(x_i,b_i)\big)\\\label{eq:Gt}
        & = Z_{t}.
    \end{align}
    The analysis in \citet{li2017provably,di2023variance} shows that this equation has a unique solution, with  $\btheta_{t} = G_{t}^{-1}(Z_{t})$.
    Using the mean value theorem, for any $\btheta_1, \btheta_2 \in \RR^d$, there exists $m \in [0,1]$ and $\bar \btheta = m\btheta_1 + (1-m) \btheta_2$, such that the following equation holds,
    \begin{align*}
        G_t(\btheta_1) - G_t(\btheta_2)&= \lambda(\btheta_1 - \btheta_2) + \sum_{i=1}^{t-1}w_i\Big[\sigma\Big(\big(\bphi(x_i,a_i)-\bphi(x_i,b_i)\big)^\top \btheta_1\Big)\\
        & \qquad - \sigma\Big(\big(\bphi(x_i,a_i)-\bphi(x_i,b_i)\big)^\top \btheta_2\Big)\Big]\big(\bphi(x_i,a_i)-\bphi(x_i,b_i)\big)\\
        &  = \Big[ \lambda\Ib + \sum_{i=1}^{t-1}w_i\dot\sigma\Big(\big(\bphi(x_i,a_i)-\bphi(x_i,b_i)\big)^\top \bar\btheta\Big)\\
        &\qquad \big(\bphi(x_i,a_i)-\bphi(x_i,b_i)\big)\big(\bphi(x_i,a_i)-\bphi(x_i,b_i)\big)^\top\Big](\btheta_1 - \btheta_2).
    \end{align*}
    We define $F(\bar \btheta)$ as
    \begin{align*}
    F(\bar \btheta) = \lambda\Ib + \sum_{i=1}^{t-1}w_i\dot\sigma\Big(\big(\bphi(x_i,a_i)-\bphi(x_i,b_i)\big)^\top \bar\btheta\Big)\big(\bphi(x_i,a_i)-\bphi(x_i,b_i)\big)\big(\bphi(x_i,a_i)-\bphi(x_i,b_i)\big)^\top\Big].
    \end{align*}
    Moreover, we can see that $G_{t}(\btheta^*) = \lambda\btheta^*$. Recall $ \bSigma_{t} = \lambda\Ib + \sum_{i=1}^{t-1} w_i \kappa\big(\bphi(x_i,a_i)-\bphi(x_i,b_i)\big)\big(\bphi(x_i,a_i)-\bphi(x_i,b_i)\big)^\top $. We have
    \begin{align*}
        \big\|G_{t}( \btheta_{t}) - G_{t}( \btheta^*)\big\|^2_{\bSigma_{t}^{-1}} &= (\btheta_{t} - \btheta^*)^\top F(\bar \btheta) \bSigma_{t}^{-1}F(\bar \btheta)( \btheta_{t} - \btheta^*)\\
        & \ge (\btheta_{t} - \btheta^*)^\top \bSigma_{t}( \btheta_{t} - \btheta^*)\\
        &= \| \btheta_{t} - \btheta^*\|^2_{ \bSigma_{t}},
    \end{align*}
    where the first inequality holds due to $\dot \sigma(\cdot) \ge \kappa > 0$ and $F(\bar \btheta) \succeq \bSigma_{t}$.
    Then we have the following estimate of the estimation error:
    \begin{align*}
        \left\| \btheta_{t} - \btheta^*\right\|_{ \bSigma_{t}} &\leq 
        \big\|G_{t}( \btheta_{t}) - G_{t}( \btheta^*)\big\|_{\bSigma_{t}^{-1}}\\
        & \leq \lambda\|\btheta^*\|_{\bSigma_{t}^{-1}} + \|Z_{t}\|_{ \bSigma_{t}^{-1}}\\
        & \leq  \sqrt{\lambda} \|\btheta^*\|_{2} + \|Z_{t}\|_{ \bSigma_{t}^{-1}},
    \end{align*}
    where the second inequality holds due to the triangle inequality and $G_t(\btheta^*) = \lambda \btheta^*$. The last inequality holds due to $\bSigma_t \succeq \lambda \Ib$.
    Finally, we need to bound the $\|Z_{t}\|_{ \bSigma_{t}^{-1}}$ term.
    To study the impact of adversarial feedback, we decompose the summation in \eqref{eq:Gt} based on the adversarial feedback $c_t$, i.e.,
\begin{align*}
    Z_t &= \sum_{i < t: c_i = 0} w_i\gamma_i \big(\bphi(x_i,a_i)-\bphi(x_i,b_i)\big)+ \sum_{i < t: c_i = 1} w_i\gamma_i \big(\bphi(x_i,a_i)-\bphi(x_i,b_i)\big),
\end{align*}
When $c_i=1$, i.e. with adversarial feedback, $|\gamma_i-\epsilon_i| = 1$. On the contrary, when $c_i = 0$, $\gamma_i=\epsilon_i$. Therefore,
\begin{align*}
\sum_{i < t: c_i = 0} w_i\gamma_i \big(\bphi(x_i,a_i)-\bphi(x_i,b_i)\big) &= \sum_{i < t: c_i = 0} w_i\epsilon_i \big(\bphi(x_i,a_i)-\bphi(x_i,b_i)\big),\\ 
    \sum_{i < t: c_i = 1} w_i\gamma_i \big(\bphi(x_i,a_i)-\bphi(x_i,b_i)\big) &= \sum_{i < t: c_i = 1} w_i\epsilon_i \big(\bphi(x_i,a_i)-\bphi(x_i,b_i)\big) \\
    &\qquad + \sum_{i < t: c_i = 1} w_i\big(\gamma_i-\epsilon_i) (\bphi(x_i,a_i)-\bphi(x_i,b_i)\big).
\end{align*}
Summing up the two equalities, we have
\begin{align*}
    Z_t = \sum_{i =1}^{t-1} w_i\epsilon_i \big(\bphi(x_i,a_i)-\bphi(x_i,b_i)\big)+ \sum_{i < t: c_i = 1} w_i(\gamma_i-\epsilon_i) \big(\bphi(x_i,a_i)-\bphi(x_i,b_i)\big).
\end{align*}
Therefore,
\begin{align*}
    \|Z_t\|_{\bSigma_t^{-1}} &\le \frac{1}{\sqrt{\kappa}}\underbrace{\bigg\|\sum_{i = 1}^{t-1} w_i\sqrt{\kappa}\epsilon_i \big(\bphi(x_i,a_i)-\bphi(x_i,b_i)\big)\bigg\|_{\bSigma_t^{-1}}}_{I_1} + \underbrace{ \bigg\|\sum_{i < t: c_i = 1} w_i \big(\bphi(x_i,a_i)-\bphi(x_i,b_i)\big)\bigg\|_{\bSigma_t^{-1}}}_{I_2}.
\end{align*}
    For the term $I_1$, with probability at least $1-\delta$, for all $t\in [T]$, it can be bounded by
    \begin{align*}
        I_1 \le \sqrt{2\log \Big(\frac{\det(\bSigma_t)^{1/2}\det (\bSigma_0)^{-1/2}}{\delta}\Big)},
    \end{align*}
    due to Lemma \ref{lem:concen}.
    Using $w_i \le 1$ $\kappa \le 1$, we have $\sqrt{w_i \kappa} \|\bphi(x_i,a_i)-\bphi(x_i,b_i)\|_2 \le 2$. Moreover, we have 
    \begin{align*}
        \det(\bSigma_t) &\le \bigg(\frac{\text{Tr} (\bSigma_t)}{d}\bigg)^d\\
        &= \bigg(\frac{d\lambda + \sum_{i=1}^{t-1}w_i\kappa \|(\bphi(x_i,a_i)-\bphi(x_i,b_i))\|^2_2}{d}\bigg)^d\\
        & \le \bigg(\frac{d\lambda + 2T}{d}\bigg)^d,
    \end{align*}
    where the first inequality holds because for every matrix $\Ab \in \RR^{d\times d}$, $\det \Ab \le (\text{Tr}(\Ab)/d)^d$. The second inequality holds due to $\sqrt{w_i\kappa} \|\bphi(x_i,a_i)-\bphi(x_i,b_i)\|_2 \le 2$.
    Easy to see that $\det (\bSigma_0) = \lambda^d$. The term $I_1$ can be bounded by 
    \begin{align}
    \label{eq:I10}
        I_1 \le \sqrt{d\log((1 + 2T/\lambda)/\delta)}.
    \end{align}
    For $I_2$, with our choice of the weight $w_i$, we have 
    \begin{align}
    \notag
        I_2 
        &\le \sum_{i<t:c_i=1}w_i\big\|(\bphi(x_i,a_i)-\bphi(x_i,b_i))\big\|_{ \bSigma_{t}^{-1}}\\\notag
        & \le \sum_{i<t:c_i=1}w_i\big\|(\bphi(x_i,a_i)-\bphi(x_i,b_i))\big\|_{ \bSigma_{i}^{-1}}\\\notag
        & \le \sum_{i<t:c_i=1} \alpha\\\label{eq:I20}
        & \le \alpha C,
    \end{align}
    where the second inequality holds due to $\bSigma_t \succeq \bSigma_i$. The third inequality holds due to $w_i \le \alpha/\|(\bphi(x_i,a_i)-\bphi(x_i,b_i))\big\|_{ \bSigma_{i}^{-1}}$. The last inequality holds due to the definition of $C$.
    Combining \eqref{eq:I10} and \eqref{eq:I20}, we complete the proof of Lemma \ref{lemma:MLE}.
\end{proof}

\subsection{Proof of Lemma \ref{lem:r} }
\begin{proof}[Proof of Lemma \ref{lem:r}]
Let the regret incurred in the $t$-th round by $r_t = 2 r^*(x_t,a^*_t) - r^*(x_t,a_t) - r^*(x_t,b_t)$. 
It can be decomposed as
\begin{align*}
    r_t &= 2 r^*(x_t,a^*_t) - r^*(x_t,a_t) - r^*(x_t,b_t)
    \\ 
    &= \la\bphi(x_t,a_t^*)-\bphi(x_t,a_t), \btheta^* \ra + \la\bphi(x_t,a_t^*)-\bphi(x_t,b_t), \btheta^* \ra\\
    & = \la\bphi(x_t,a_t^*)-\bphi(x_t,a_t), \btheta^*-\btheta_t \ra + \la\bphi(x_t,a_t^*)-\bphi(x_t,b_t), \btheta^*-\btheta_t \ra \\
    & \qquad + \la2\bphi(x_t,a_t^*)-\bphi(x_t,a_t)-\bphi(x_t,b_t), \btheta_t \ra\\
    & \le \|\bphi(x_t,a_t^*)-\bphi(x_t,a_t)\|_{\bSigma_t^{-1}} \|\btheta^* -\btheta_t\|_{\bSigma_t}
    + \|\bphi(x_t,a_t^*)-\bphi(x_t,b_t)\|_{\bSigma_t^{-1}} \|\btheta^* -\btheta_t\|_{\bSigma_t} \\
    & \qquad + \la2\bphi(x_t,a_t^*)-\bphi(x_t,a_t)-\bphi(x_t,b_t), \btheta_t \ra\\
    & \le \beta\|\bphi(x_t,a_t^*)-\bphi(x_t,a_t)\|_{\bSigma_t^{-1}} + \beta\|\bphi(x_t,a_t^*)-\bphi(x_t,b_t)\|_{\bSigma_t^{-1}}\\
    & \qquad + \la2\bphi(x_t,a_t^*)-\bphi(x_t,a_t)-\bphi(x_t,b_t), \btheta_t \ra,
\end{align*}
where the first inequality holds due to the Cauchy-Schwarz inequality. The second inequality holds due to the high probability confidence event $\cE$.
Using our action selection rule, we have
\begin{align*}
&\la\bphi(x_t,a_t^*)-\bphi(x_t,a_t), \btheta_t \ra + \beta\|\bphi(x_t,a_t^*)-\bphi(x_t,a_t)\|_{ \bSigma_{t}^{-1}} \\
&\qquad \le \la\bphi(x_t,b_t)-\bphi(x_t,a_t), \btheta_t \ra + \beta\|\bphi(x_t,a_t)-\bphi(x_t,b_t)\|_{ \bSigma_{t}^{-1}}\\
    &\la\bphi(x_t,a_t^*)-\bphi(x_t,b_t), \btheta_t \ra + \beta\|\bphi(x_t,a_t^*)-\bphi(x_t,b_t)\|_{ \bSigma_{t}^{-1}} \\
&\qquad \le \la\bphi(x_t,a_t)-\bphi(x_t,b_t), \btheta_t \ra + \beta\|\bphi(x_t,a_t)-\bphi(x_t,b_t)\|_{ \bSigma_{t}^{-1}}.
\end{align*}
Adding the above two inequalities, we have
\begin{align*}
   & \beta\|\bphi(x_t,a_t^*)-\bphi(x_t,a_t)\|_{ \bSigma_{t}^{-1}} + 
    \beta\|\bphi(x_t,a_t^*)-\bphi(x_t,b_t)\|_{ \bSigma_{t}^{-1}}\\
 &\qquad \le  \la \bphi(x_t,a_t) + \bphi(x_t,b_t) -2\bphi(x_t,a_t^*), \btheta_t \ra + 2\beta\|\bphi(x_t,a_t)-\bphi(x_t,b_t)\|_{ \bSigma_{t}^{-1}}.
\end{align*}
Therefore, we prove that the regret in round $t$ can be upper bounded by
\begin{align*}
    r_t \le 2\beta\|\bphi(x_t,a_t)-\bphi(x_t,b_t)\|_{ \bSigma_{t}^{-1}}.
\end{align*}
With a simple observation, we have $r_t \le 4$. Therefore, the total regret can be upper bounded by
\begin{align*}
\text{Regret}(T) \le \sum_{t=1}^T \min \Big\{4,2\beta\|\bphi(x_t,a_t)-\bphi(x_t,b_t)\|_{ \bSigma_{t}^{-1}}\Big\}.
\end{align*}
\end{proof}
\section{Proof of Theorem \ref{thm: new}}\label{sec: proof new}
To start with, we define some high-probability events. Recall that 
\begin{align*}
    \bSigma_t &= \lambda \Ib + \textstyle{\sum}_{i=1}^{t-1} w_i\kappa\big(\bphi(x_i,a_i)-\bphi(x_i,b_i)\big) \big(\bphi(x_i,a_i)-\bphi(x_i,b_i)\big)^\top,\\ \bLambda_{t} &= \lambda \Ib + \textstyle{\sum}_{i=1}^{t-1} w_iv_i\big(\bphi(x_i,a_i)-\bphi(x_i,b_i)\big) \big(\bphi(x_i,a_i)-\bphi(x_i,b_i)\big)^\top.
\end{align*}
Then we define the following events:
\begin{align*}
    \cE_1 &= \{\|\btheta^*-\btheta_t\|_{\bSigma_t} \le \beta_t,\forall t \in [T]\},\\
    \cE_2 &= \{\|\btheta^*-\btheta_t\|_{\bLambda_t} \le \tilde\beta_t, \forall t \in [T]\}.
\end{align*}
The following lemma indicates that with some well-chosen confidence radius, both $\cE_1$ and $\cE_2$ will happen with high probability.
\begin{lemma}\label{lemma:cE}
    When selecting
    \begin{align*}
        \beta_t &= \sqrt{\lambda} B + \frac{1}{\sqrt{\kappa}}\sqrt{d\log(2(1 + 2t/\lambda)/\delta)} + \alpha C,\\
        \tilde \beta_t &= (1+4B) \bigg[\sqrt{\lambda} B + \frac{\sqrt{\lambda}}{2} +\frac{2}{\sqrt{\lambda}}d \log \bigg(\frac{d\lambda + 2t}{d\lambda }\bigg) + \frac{2}{\sqrt{\lambda}} d\log(1/\delta) + {\alpha C}\bigg],
    \end{align*}
    then we have
    \begin{align*}
        \PP[\cE_1\cap \cE_2] \ge 1-\delta.
    \end{align*}
\end{lemma}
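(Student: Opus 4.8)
The plan is to establish $\cE_1$ and $\cE_2$ separately, each with failure probability at most $\delta/2$, and conclude by a union bound. The event $\cE_1$ is nothing but Lemma~\ref{lemma:MLE} with the fixed radius $\beta$ replaced by the time-dependent $\beta_t$ and the confidence level replaced by $\delta/2$ (this is the source of the factor $2$ inside the logarithm of $\beta_t$). I would therefore re-run the proof of Lemma~\ref{lemma:MLE} essentially verbatim: the mean-value representation $G_t(\btheta_t)-G_t(\btheta^*)=F(\bar\btheta)(\btheta_t-\btheta^*)$ together with $F(\bar\btheta)\succeq\bSigma_t$ (which uses $\dot\sigma\ge\kappa$) gives $\|\btheta_t-\btheta^*\|_{\bSigma_t}\le\sqrt\lambda B+\|Z_t\|_{\bSigma_t^{-1}}$; decomposing $Z_t$ into the martingale part $\sum_i w_i\epsilon_i\bphi_i$ and the corruption part $\sum_{i:c_i=1}w_i(\gamma_i-\epsilon_i)\bphi_i$, bounding the former by the self-normalized inequality (Lemma~\ref{lem:concen}) at level $\delta/2$ and the latter by $\alpha C$ via $w_i\|\bphi_i\|_{\bSigma_i^{-1}}\le\alpha$, yields $\PP[\cE_1^c]\le\delta/2$, uniformly over $t$.

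It remains to bound $\PP[\cE_1\cap\cE_2^c]$, so I work on $\cE_1$. First I certify that $v_i$ is a valid lower surrogate for the local derivative. Under $\cE_1$, Cauchy--Schwarz gives $|\bphi_i^\top(\btheta_i-\btheta^*)|\le\beta_i\|\bphi_i\|_{\bSigma_i^{-1}}$, hence $|\bphi_i^\top\btheta^*|\le\hat\Delta_i$; since $\dot\sigma$ is even and nonincreasing on $[0,\infty)$ and $|\bphi_i^\top\btheta^*|\le 2B$ (Assumption~\ref{assumption1}), this gives both $\dot\sigma(\hat\Delta_i)\le\dot\sigma(\bphi_i^\top\btheta^*)$ and $\kappa\le\dot\sigma(\bphi_i^\top\btheta^*)$, so $v_i=\max\{\kappa,\dot\sigma(\hat\Delta_i)\}\le\dot\sigma(\bphi_i^\top\btheta^*)$; combined with the integral/self-concordance estimate (Lemma~\ref{lemma:ab}) exactly as in Section~\ref{sec: key}, $\dot\sigma(\bphi_i^\top\bar\btheta)\ge\dot\sigma(\bphi_i^\top\btheta^*)/(1+4B)\ge v_i/(1+4B)$. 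Feeding this into $F(\bar\btheta)=\lambda\Ib+\sum_i w_i\dot\sigma(\bphi_i^\top\bar\btheta)\bphi_i\bphi_i^\top$ and using $1+4B\ge1$ gives $F(\bar\btheta)\succeq\bLambda_t/(1+4B)$, and the spectral fact that $M\succeq cL\succ0$ implies $ML^{-1}M\succeq c^2 L$ then yields $\|\btheta_t-\btheta^*\|_{\bLambda_t}\le(1+4B)\|G_t(\btheta_t)-G_t(\btheta^*)\|_{\bLambda_t^{-1}}\le(1+4B)\big(\sqrt\lambda B+\|Z_t\|_{\bLambda_t^{-1}}\big)$, using $G_t(\btheta^*)=\lambda\btheta^*$ and $\bLambda_t\succeq\lambda\Ib$. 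I then decompose $Z_t$ as before; the corruption part is at most $\alpha C$ because $v_j\ge\kappa$ gives $\bLambda_i\succeq\bSigma_i$, hence $w_i\|\bphi_i\|_{\bLambda_i^{-1}}\le w_i\|\bphi_i\|_{\bSigma_i^{-1}}\le\alpha$; for the martingale part $\|\sum_i w_i\epsilon_i\bphi_i\|_{\bLambda_t^{-1}}$ I would invoke a variance-aware (Bernstein-type) self-normalized concentration inequality in the spirit of \citet{faury2020improved,abeille2021instance}, exploiting that $\mathrm{Var}[\epsilon_i\mid\cF_i]=\dot\sigma(\bphi_i^\top\btheta^*)$ is compatible with the derivative-weighted matrix $\bLambda_t$; this produces the $\frac{\sqrt\lambda}{2}+\frac{2}{\sqrt\lambda}d\log\big(\tfrac{d\lambda+2t}{d\lambda}\big)+\frac{2}{\sqrt\lambda}d\log(1/\delta)$ contribution, notably free of any $1/\kappa$ factor and of $\sqrt t$ growth, on an event of probability at least $1-\delta/2$. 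Assembling the three pieces gives $\|\btheta_t-\btheta^*\|_{\bLambda_t}\le\tilde\beta_t$ for all $t$ on this event, so $\PP[\cE_1\cap\cE_2^c]\le\delta/2$ and hence $\PP[\cE_1\cap\cE_2]\ge1-\delta$.

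The main obstacle is precisely this last bound: obtaining a $1/\kappa$-free control of $\|\sum_i w_i\epsilon_i\bphi_i\|_{\bLambda_t^{-1}}$ in which the log-determinant enters \emph{linearly} (the shape forced by the form of $\tilde\beta_t$). The difficulty is that $\bLambda_t$ is weighted by the \emph{estimated} derivatives $v_i$ rather than by the true Bernoulli variances $\dot\sigma(\bphi_i^\top\btheta^*)$, and the uncertainty weights $w_i$ are themselves defined through $\bSigma_i$ rather than $\bLambda_i$; reconciling these so that the martingale's quadratic variation is dominated by $\bLambda_t$ without reintroducing a $1/\kappa$ loss is the delicate point, presumably handled via a self-concordance estimate of the form $\dot\sigma(\bphi_i^\top\btheta^*)\le v_i\,e^{O(\beta_i\|\bphi_i\|_{\bSigma_i^{-1}})}$ together with an elliptical-potential argument isolating the few rounds on which $v_i$ severely underestimates the true derivative.
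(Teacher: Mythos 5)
Your treatment of $\cE_1$ is exactly the paper's (split $Z_t$ into martingale plus corruption, Lemma \ref{lem:concen} at level $\delta/2$, and $w_i\|\bphi_i\|_{\bSigma_i^{-1}}\le\alpha$), and you have correctly identified every ingredient needed for $\cE_2$: the certificate $v_i\le\dot\sigma(\bphi_i^\top\btheta^*)$ on $\cE_1$, the self-concordance bound of Lemma \ref{lemma:ab} producing the $(1+4B)$ factor, and a Bernstein-type self-normalized inequality in the spirit of \citet{faury2020improved}. However, the way you assemble them creates an obstacle that you yourself flag as unresolved, and it is a genuine gap, not a technicality. By pushing the mean-value/Newton--Leibniz identity directly against $\bLambda_t$, you are forced to control $\big\|\sum_i w_i\epsilon_i\bphi_i\big\|_{\bLambda_t^{-1}}$. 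Since $v_i\le\dot\sigma(\bphi_i^\top\btheta^*)$ gives $\bLambda_t\preceq \Hb_t$ (where $\Hb_t$ is weighted by the true conditional variances $\dot\sigma(\bphi_i^\top\btheta^*)$), you have $\bLambda_t^{-1}\succeq\Hb_t^{-1}$, so the $\bLambda_t^{-1}$-norm of the martingale is the \emph{larger} of the two; in early rounds $\hat\Delta_i$ can grossly overestimate $|\bphi_i^\top\btheta^*|$ and $v_i$ can collapse to $\kappa$, in which case $\|\cdot\|_{\bLambda_t^{-1}}$ degrades toward $\|\cdot\|_{\bSigma_t^{-1}}$ and the $1/\sqrt{\kappa}$ factor you are trying to avoid reappears. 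Your proposed repair (a reverse self-concordance bound plus an elliptical-potential argument to isolate the bad rounds) is speculative and is not carried out.

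The paper sidesteps this entirely by never measuring the martingale in the $\bLambda_t^{-1}$ norm. It introduces the intermediate matrix $\Hb_t=\lambda\Ib+\sum_i w_i\,\dot\sigma(\bphi_i^\top\btheta^*)\bphi_i\bphi_i^\top$ and proves everything there: Lemma \ref{lemma:faury} applies directly because the weights of $\Hb_t$ match the conditional variances $\EE[\epsilon_i^2\mid\cF_i]=\dot\sigma(\bphi_i^\top\btheta^*)$, giving the $\kappa$-free bound on $\|Z_t\|_{\Hb_t^{-1}}$; the self-concordance step gives $\Vb_t\succeq\Hb_t/(1+4B)$ and hence $\|\btheta^*-\btheta_t\|_{\Hb_t}\le(1+4B)\|G_t(\btheta^*)-G_t(\btheta_t)\|_{\Hb_t^{-1}}$; and only at the very end is $\bLambda_t\preceq\Hb_t$ invoked, at the level of the \emph{parameter error}, where it points in the right direction: $\|\btheta^*-\btheta_t\|_{\bLambda_t}\le\|\btheta^*-\btheta_t\|_{\Hb_t}\le\tilde\beta_t$. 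In short, the matrix with estimated weights $v_i$ should enter only as a lower bound on the error metric, not as the normalization of the noise; reorganizing your argument around $\Hb_t$ closes the gap and removes the need for any reverse inequality on $v_i$.
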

\begin{proof}[Proof of Theorem \ref{thm: new}]
    Consider the case when $\cE_1\cap \cE_2$ holds.
Recall that $\hat \Delta_t$ is defined by
\begin{align*}
    \hat \Delta_t = \big|\big(\bphi(x_t,a_t) - \bphi(x_t,b_t)\big)^\top \btheta_t\big| + \beta_t\|\bphi(x_t,a_t) - \bphi(x_t,b_t)\|_{\bSigma_t^{-1}}
\end{align*}
then we have
\begin{align*}
    |\hat \Delta_t| &\ge  \big|\big(\bphi(x_t,a_t) - \bphi(x_t,b_t)\big)^\top \btheta^*\big| + \beta\|\bphi(x_t,a_t) - \bphi(x_t,b_t)\|_{\bSigma_t^{-1}} - \big|\big(\bphi(x_t,a_t) - \bphi(x_t,b_t)\big)^\top (\btheta^*-\btheta_t)\big| \\
    & \ge \big|\big(\bphi(x_t,a_t) - \bphi(x_t,b_t)\big)^\top \btheta^*\big|,
\end{align*}
where the first inequality holds due to the triangle inequality. The second inequality holds due to $\cE_1$ and the Cauchy-Schwarz inequality.
Let $v_t = \max\{\kappa, \dot\sigma(\hat \Delta_t)\}$. Using the fact that $\dot\sigma(\cdot)$ is an even function decreasing when $x > 0$, we have $v_t \le \dot \sigma\big(\big(\bphi(x_t,a_t)-\bphi(x_t,b_t)\big)^\top \btheta^*\big)$.
    Let the regret incurred in the $t$-th round by $r_t = 2 r^*(x_t,a^*_t) - r^*(x_t,a_t) - r^*(x_t,b_t)$. 
It can be decomposed as
\begin{align*}
    r_t &= 2 r^*(x_t,a^*_t) - r^*(x_t,a_t) - r^*(x_t,b_t)
    \\ 
    &= \la\bphi(x_t,a_t^*)-\bphi(x_t,a_t), \btheta^* \ra + \la\bphi(x_t,a_t^*)-\bphi(x_t,b_t), \btheta^* \ra\\
    & = \la\bphi(x_t,a_t^*)-\bphi(x_t,a_t), \btheta^*-\btheta_t \ra + \la\bphi(x_t,a_t^*)-\bphi(x_t,b_t), \btheta^*-\btheta_t \ra \\
    & \qquad + \la2\bphi(x_t,a_t^*)-\bphi(x_t,a_t)-\bphi(x_t,b_t), \btheta_t \ra\\
    & \le \|\bphi(x_t,a_t^*)-\bphi(x_t,a_t)\|_{\bLambda_t^{-1}} \|\btheta^* -\btheta_t\|_{\bLambda_t}
    + \|\bphi(x_t,a_t^*)-\bphi(x_t,b_t)\|_{\bLambda_t^{-1}} \|\btheta^* -\btheta_t\|_{\bLambda_t} \\
    & \qquad + \la2\bphi(x_t,a_t^*)-\bphi(x_t,a_t)-\bphi(x_t,b_t), \btheta_t \ra\\
    & \le \tilde\beta_t\|\bphi(x_t,a_t^*)-\bphi(x_t,a_t)\|_{\bLambda_t^{-1}} + \tilde\beta_t\|\bphi(x_t,a_t^*)-\bphi(x_t,b_t)\|_{\bLambda_t^{-1}}\\
    & \qquad + \la2\bphi(x_t,a_t^*)-\bphi(x_t,a_t)-\bphi(x_t,b_t), \btheta_t \ra,
\end{align*}
where the first inequality holds due to the Cauchy-Schwarz inequality. The second inequality holds due to the high probability confidence event $\cE_2$.
Using our action selection rule, we have
\begin{align*}
&\la\bphi(x_t,a_t^*)-\bphi(x_t,a_t), \btheta_t \ra + \tilde\beta_t\|\bphi(x_t,a_t^*)-\bphi(x_t,a_t)\|_{ \bLambda_{t}^{-1}} \\
&\qquad \le \la\bphi(x_t,b_t)-\bphi(x_t,a_t), \btheta_t \ra + \tilde\beta_t\|\bphi(x_t,a_t)-\bphi(x_t,b_t)\|_{ \bLambda_{t}^{-1}}\\
    &\la\bphi(x_t,a_t^*)-\bphi(x_t,b_t), \btheta_t \ra + \tilde\beta_t\|\bphi(x_t,a_t^*)-\bphi(x_t,b_t)\|_{ \bLambda_{t}^{-1}} \\
&\qquad \le \la\bphi(x_t,a_t)-\bphi(x_t,b_t), \btheta_t \ra + \tilde\beta_t\|\bphi(x_t,a_t)-\bphi(x_t,b_t)\|_{ \bLambda_{t}^{-1}}.
\end{align*}
Adding the above two inequalities, we have
\begin{align*}
   & \tilde\beta_t\|\bphi(x_t,a_t^*)-\bphi(x_t,a_t)\|_{ \bLambda_{t}^{-1}} + 
    \tilde\beta_t\|\bphi(x_t,a_t^*)-\bphi(x_t,b_t)\|_{ \bLambda_{t}^{-1}}\\
 &\qquad \le  \la \bphi(x_t,a_t) + \bphi(x_t,b_t) -2\bphi(x_t,a_t^*), \btheta_t \ra + 2\tilde\beta_t\|\bphi(x_t,a_t)-\bphi(x_t,b_t)\|_{ \bLambda_{t}^{-1}}.
\end{align*}
Therefore, we prove that the regret in round $t$ can be upper bounded by
\begin{align*}
    r_t \le 2\tilde\beta_t\|\bphi(x_t,a_t)-\bphi(x_t,b_t)\|_{ \bLambda_{t}^{-1}}.
\end{align*}
As a result, the total regret can be upper bounded by
\begin{align*}
\text{Regret}(T) \le 2\tilde\beta_T \sum_{t=1}^T \|\bphi(x_t,a_t)-\bphi(x_t,b_t)\|_{ \bLambda_{t}^{-1}},
\end{align*}
where we use that $\beta_t$ is increasing in $t$.
Since our weight $w_t$ has two possible values, we can decompose the regret into two terms:
\begin{align*}
    \text{Regret}(T) &\le \underbrace{2\tilde\beta_T\sum_{t:w_t=1} \|\bphi(x_t,a_t)-\bphi(x_t,b_t)\|_{ \bLambda_{t}^{-1}}}_{I_1} \\
    &\qquad+ \underbrace{2\tilde\beta_T\sum_{t:w_t < 1} \|\bphi(x_t,a_t)-\bphi(x_t,b_t)\|_{ \bLambda_{t}^{-1}}}_{I_2}.
\end{align*} 
For the term $I_1$, we consider a partial summation in rounds when $w_t=1$. Let $\tilde\bLambda_t = \lambda \Ib + \sum_{i \le t -1, w_i = 1} v_i\big(\bphi(x_i,a_i)-\bphi(x_i,b_i)\big)\big(\bphi(x_i,a_i)-\bphi(x_i,b_i)\big)^\top$. Then we have
\begin{align}
\notag
    I_1 &= 2\tilde\beta_T \sum_{t:w_t=1} \frac{1}{\sqrt{v_t}}\big\|\sqrt{v_t}\big(\bphi(x_t,a_t)-\bphi(x_t,b_t)\big)\big\|_{ \bLambda_{t}^{-1}}\\\notag
    & \le 2\tilde\beta_T\sqrt{\sum_{t:w_t=1} \frac{1}{v_t}} \cdot \sqrt{\sum_{t:w_t=1} \big\|\sqrt{v_t}\big(\bphi(x_t,a_t)-\bphi(x_t,b_t)\big)\big\|_{ \bLambda_{t}^{-1}}^2}\\\notag
    & \le 2\tilde\beta_T\sqrt{\sum_{t:w_t=1} \frac{1}{v_t}} \cdot \sqrt{\sum_{t:w_t=1} \big\|\sqrt{v_t}\big(\bphi(x_t,a_t)-\bphi(x_t,b_t)\big)\big\|_{ \tilde\bLambda_{t}^{-1}}^2}\\\label{eq:I11}
    & \le 4\tilde\beta_T \sqrt{d \log (1 + 2T/\lambda )} \cdot\sqrt{\sum_{t:w_t=1} \frac{1}{v_t}},
\end{align}
where the first inequality holds due to the Cauchy-Schwarz inequality. The second inequality holds due to $\bLambda_t \succeq \tilde\bLambda_t$. The last inequality holds due to Lemma \ref{lem:abbasi} and $\big\|\sqrt{v_t}\big(\bphi(x_t,a_t)-\bphi(x_t,b_t)\big)\big\|_2 \le 2$.
Next, we will bound
    $\sum_{t:w_t=1} {1}/{v_t}.$
Let $\cL_1 = \{t:w_t=1, \kappa = \max \{\kappa, \dot \sigma(\hat \Delta_t)\} \}, $ $\cL_2 = \{t:w_t=1, \dot \sigma(\hat \Delta_t) = \max \{\kappa, \dot \sigma(\hat \Delta_t)\} \}$. Then, we have $\{t:w_t=1\} = \cL_1 \cup \cL_2$. Therefore, we have the following decomposition
\begin{align*}
    \sum_{w_t=1} \frac{1}{v_t} = \underbrace{\sum_{t \in \cL_1} \frac{1}{v_t}}_{J_1} + \underbrace{\sum_{t \in \cL_2} \frac{1}{v_t}}_{J_2}.
\end{align*}
For $J_1$, we have
\begin{align}
\notag
    J_1 &= \sum_{t \in \cL_1} \frac{1}{v_t}\\\label{eq:0003}
    & =\frac{1}{\kappa} |\cL_1|,
\end{align}
This equality holds because for $t\in \cL_1$, $v_t = \kappa$. Using the high-probability event $\cE_1$, the following inequality holds:
\begin{align*}
    \hat \Delta_t &= \big|\big[\bphi(x_t,a_t) - \bphi(x_t,b_t)\big]^\top \btheta_t\big| + \beta_t \big\|\bphi(x_t,a_t) - \bphi(x_t,b_t)\|_{\bSigma_t^{-1}}\\
    & \le \big|\big[\bphi(x_t,a_t) - \bphi(x_t,b_t)\big]^\top \btheta^*\big|+ 2\beta_t \big\|\bphi(x_t,a_t) - \bphi(x_t,b_t)\|_{\bSigma_t^{-1}},
\end{align*}
On the other hand, for $t\in \cL_1$, we have
\begin{align*}
    \kappa &\ge \dot \sigma(\hat \Delta_t)\\
    & \ge \dot \sigma\bigg(\Big|\big[\bphi(x_t,a_t) - \bphi(x_t,b_t)\big]^\top \btheta^*\Big| + 2\beta_t \big\|\bphi(x_t,a_t) - \bphi(x_t,b_t)\|_{\bSigma_t^{-1}}\bigg),
\end{align*}
where the first inequality holds due to the definition of $\cL_1$. The second inequality holds due to the function $\dot \sigma(\cdot)$ is decreasing when $x>0$.
Using $\dot \sigma(|x|) = e^{-|x|}/(1+e^{-|x|})^2 \ge e^{-|x|}/4$, we have $|x| \ge \log (1/4\dot\sigma(|x|))$. Therefore, the following inequality holds
\begin{align*}
    \Big|\big[\bphi(x_t,a_t) - \bphi(x_t,b_t)\big]^\top \btheta^*\Big| + 2\beta_t \big\|\bphi(x_t,a_t) - \bphi(x_t,b_t)\|_{\bSigma_t^{-1}} 
    &\ge \log\Big(\frac{1}{4\dot \sigma(\hat \Delta_t)}\Big)\\
    &\ge \log(1/4\kappa).
\end{align*}
Let $\tilde\bSigma_t = \lambda \Ib + \sum_{i \le t -1, w_i = 1} \kappa\big(\bphi(x_i,a_i)-\bphi(x_i,b_i)\big)\big(\bphi(x_i,a_i)-\bphi(x_i,b_i)\big)^\top$. Summing over $t \in \cL_1$, we obtain
\begin{align*}
    \log(1/4\kappa) |\cL_1| \ &\le \sum_{t \in \cL_1} \Big|\big[\bphi(x_t,a_t) - \bphi(x_t,b_t)\big]^\top \btheta^*\Big| + 2\beta_T \sum_{t\in \cL_1} \big\|\bphi(x_t,a_t) - \bphi(x_t,b_t)\|_{\bSigma_t^{-1}}\\
    &\le \sum_{t: w_t = 1} \Big|\big[\bphi(x_t,a_t) - \bphi(x_t,b_t)\big]^\top \btheta^*\Big|+ 2\beta_T \sum_{t: w_t=1} \big\|\bphi(x_t,a_t) - \bphi(x_t,b_t)\|_{\bSigma_t^{-1}}\\
    & \le \sum_{t:w_t=1} \Big|\big[\bphi(x_t,a_t) - \bphi(x_t,b_t)\big]^\top \btheta^*\Big|+ 2\beta_T \sqrt{T\sum_{t:w_t=1} \big\|\bphi(x_t,a_t) - \bphi(x_t,b_t)\|^2_{\tilde\bSigma_t^{-1}}}\\
    &= \sum_{t:w_t=1} \Big|\big[\bphi(x_t,a_t) - \bphi(x_t,b_t)\big]^\top \btheta^*\Big|+ \frac{2\beta_T}{\sqrt{\kappa}} \sqrt{T\sum_{t:w_t=1} \big\|\sqrt{\kappa}\big(\bphi(x_t,a_t) - \bphi(x_t,b_t)\big)\big\|^2_{\tilde\bSigma_t^{-1}}}\\
    & \le \sum_{t=1}^T \Big|\big[\bphi(x_t,a_t) - \bphi(x_t,b_t)\big]^\top \btheta^*\Big| + \frac{2\beta_T}{\sqrt{\kappa}} \sqrt{dT \log(1+2T/\lambda)},
\end{align*}
where the first inequality holds due to $\beta_t$ is increasing in $t$. The second inequality holds because partial summation is less than total summation. The third inequality holds due to the Cauchy-Schwarz inequality and $ \bSigma_t \succeq \tilde \bSigma_t$. The last inequality holds due to Lemma \ref{lem:abbasi}.
Therefore, we have
\begin{align}
\label{eq:0002}
    |\cL_1| \le \frac{1}{ \log(1/4\kappa)} \bigg[\sum_{t=1}^T \Big|\big[\bphi(x_t,a_t) - \bphi(x_t,b_t)\big]^\top \btheta^*\Big| + \frac{2\beta_T}{\sqrt{\kappa}} \sqrt{dT \log(1+2T/\lambda)}\bigg].
\end{align}
Substituting \eqref{eq:0002} into \eqref{eq:0003}, we have
\begin{align}
\label{eq:J11}
    J_1 \le \frac{1}{\kappa \log(1/4\kappa)} \bigg[\sum_{t=1}^T \Big|\big[\bphi(x_t,a_t) - \bphi(x_t,b_t)\big]^\top \btheta^*\Big| + \frac{2\beta_T}{\sqrt{\kappa}} \sqrt{dT \log(1+2T/\lambda)}\bigg].
\end{align}
For $J_2$, we have
\begin{align}
\notag
    J_2 &= \sum_{t \in \cL_2} \frac{1}{v_t}\\\notag
    & = \sum_{t\in \cL_2} \frac{1}{\dot \sigma(|\hat \Delta_t|)}\\\label{eq:0005}
    & \le 4\sum_{t\in \cL_2} e^{|\hat \Delta_t|},
\end{align}
where the last inequality holds due to $\dot \sigma(|x|) \ge e^{-|x|}/4$. For $t \in \cL_2$, we have
\begin{align*}
     \kappa \le \dot \sigma(\hat \Delta_t) \le e^{-|\hat \Delta_t|},
\end{align*}
where the last inequality holds due to $\dot\sigma(|x|) \le e^{-|x|}$. Then we have
\begin{align*}
    |\hat \Delta_t| \le \log(1/\kappa).
\end{align*}
Since the function $f(x) = e^x$ is convex, for any $A > 0$, we have for all $x \in [0,A]$
\begin{align*}
    e^x \le 1 + \frac{e^A-1}{A} x.
\end{align*}
Setting $x =\hat \Delta_t$ and $A=\log(1/\kappa)$, we have
\begin{align}
    e^{|\hat\Delta_t|} \le 1 + \frac{1/\kappa - 1}{\log(1/\kappa)} |\hat \Delta_t|.\label{eq:0004}
\end{align}
Substituting \eqref{eq:0004} into \eqref{eq:0005}, we have
\begin{align*}
    J_2 \le 4 \sum_{t\in \cL_2} \bigg[1 + \frac{1/\kappa - 1}{\log(1/\kappa)} |\hat \Delta_t|\bigg].
\end{align*}
Moreover, we know that
\begin{align}
\notag
    \hat \Delta_t &= \big|\big(\bphi(x_t,a_t) - \bphi(x_t,b_t)\big)^\top \btheta_t\big| + \beta_t\|\bphi(x_t,a_t) - \bphi(x_t,b_t)\|_{\bSigma_t^{-1}}\\\label{eq:0006}
    &\le \big|\big(\bphi(x_t,a_t) - \bphi(x_t,b_t)\big)^\top \btheta^*\big| + 2\beta_t\|\bphi(x_t,a_t) - \bphi(x_t,b_t)\|_{\bSigma_t^{-1}}.
\end{align}
Therefore, we have
\begin{align}
\notag
    J_2 &\le 4 \bigg[T +  \frac{2\beta_T}{\kappa\log(1/\kappa)}\sum_{t=1}^T\|\bphi(x_t,a_t) - \bphi(x_t,b_t)\|_{\bSigma_t^{-1}}+ \frac{1}{\kappa\log(1/\kappa)}\sum_{t=1}^T \big|\big(\bphi(x_t,a_t) - \bphi(x_t,b_t)\big)^\top \btheta^*\big|\bigg]\\\notag
    &\le 4 \bigg[T +  \frac{2\beta_T}{\kappa^{1.5}\log(1/\kappa)}\sum_{t=1}^T\big\|\sqrt{\kappa}\big(\bphi(x_t,a_t) - \bphi(x_t,b_t)\big)\big\|_{\bSigma_t^{-1}}+ \frac{1}{\kappa\log(1/\kappa)}\sum_{t=1}^T \big|\big(\bphi(x_t,a_t) - \bphi(x_t,b_t)\big)^\top \btheta^*\big|\bigg]\\\label{eq:J22}
    &\le 4T + \bigg[\frac{8\beta_T}{\kappa^{1.5}\log(1/\kappa)}\sqrt{dT \log (1+2T/\lambda)} + \frac{4}{\kappa\log(1/\kappa)}\sum_{t=1}^T \big|\big(\bphi(x_t,a_t) - \bphi(x_t,b_t)\big)^\top \btheta^*\big|\bigg],
\end{align}
where the first inequality holds due to \eqref{eq:0006}. The last inequality holds due to Lemma \ref{lem:abbasi} and the Cauchy-Schwarz inequality.
Combining \eqref{eq:J11} and \eqref{eq:J22}, we have
\begin{align}
\notag
    &\sum_{t:w_t=1} \frac{1}{v_t} \le \frac{1}{\kappa \log(1/4\kappa)} \bigg[\sum_{t=1}^T \Big|\big[\bphi(x_t,a_t) - \bphi(x_t,b_t)\big]^\top \btheta^*\Big| + \frac{2\beta_T}{\sqrt{\kappa}} \sqrt{dT \log(1+2T/\lambda)}\bigg]\\\notag
    & \quad + 4T + \bigg[\frac{8\beta_T}{\kappa^{1.5}\log(1/\kappa)}\sqrt{dT \log (1+2T/\lambda)} + \frac{1}{\kappa\log(1/\kappa)}\sum_{t=1}^T \big|\big(\bphi(x_t,a_t) - \bphi(x_t,b_t)\big)^\top \btheta^*\big|\bigg]\\\label{eq:vv}
    & \le 4T + \frac{10\beta_T}{\kappa^{1.5}\log(1/\kappa)}\sqrt{dT \log (1+2T/\lambda)} + \frac{2}{\kappa\log(1/\kappa)}\sum_{t=1}^T \big|\big(\bphi(x_t,a_t) - \bphi(x_t,b_t)\big)^\top \btheta^*\big|
\end{align}
For the term $I_2$, the weight in this summation satisfies $w_t < 1$, and therefore $w_t = \alpha/\|\bphi(x_t,a_t)-\bphi(x_t,b_t)\|_{\bSigma_{t}^{-1}}$. Then we have
\begin{align}
\notag
    I_2 &= 2\tilde\beta_T\sum_{w_t < 1} \Big(\|\bphi(x_t,a_t)-\bphi(x_t,b_t)\|_{ \bLambda_{t}^{-1}}w_t\|\bphi(x_t,a_t)-\bphi(x_t,b_t)\|_{ \bSigma_{t}^{-1}}/\alpha\Big)\\\notag
    & \le \frac{2\tilde\beta_T}{\alpha {\kappa}}\sum_{t=1}^{T} \big\|\sqrt{w_t\kappa}(\bphi(x_t,a_t)-\bphi(x_t,b_t))\big\|^2_{ \bSigma_{t}^{-1}}\\
    \label{eq:I22}
    & \le 
    \frac{4d\tilde\beta_T \log (1+2T/\lambda)}{\alpha {\kappa}},
\end{align}
where the first equality holds due to the choice of $w_t$. The first inequality holds because $\bLambda_t \succeq \bSigma_t$. The last inequality holds due to Lemma \ref{lem:abbasi}.

Moreover, we notice that the following inequality holds:
\begin{align}
\notag
    \sum_{t=1}^T \big|\big(\bphi(x_t,a_t) - \bphi(x_t,b_t)\big)^\top \btheta^*\big| &= \sum_{t=1}^T\bigg| \big(\bphi(x_t,a_t^*) - \bphi(x_t,a_t)\big)^\top \btheta^* - \big(\bphi(x_t,a_t^*) - \bphi(x_t,b_t)\big)^\top \btheta^*\bigg|\\\notag
    & \le \sum_{t=1}^T \big(\bphi(x_t,a_t^*) - \bphi(x_t,a_t)\big)^\top \btheta^* + \big(\bphi(x_t,a_t^*) - \bphi(x_t,b_t)\big)^\top \btheta^*\\\label{eq:RR}
    & = \text{Regret}(T).
\end{align}
Substituting \eqref{eq:vv} and \eqref{eq:RR} into \eqref{eq:I11}, we have
\begin{align}
    I_1 \le 4\tilde\beta_T \sqrt{d \log (1 + 2T/\lambda )} \cdot\sqrt{4T + \frac{10\beta_T}{\kappa^{1.5}\log(1/\kappa)}\sqrt{dT \log (1+2T/\lambda)} + \frac{2}{\kappa\log(1/\kappa)}\text{Regret}(T)}\label{eq:I11*}
\end{align}
Combining \eqref{eq:I22} and \eqref{eq:I11*}, we have
\begin{align*}
    \text{Regret}(T)&\le 4\tilde\beta_T \sqrt{d \log (1 + 2T/\lambda )} \sqrt{4T + \bigg[\frac{10\beta_T}{\kappa^{1.5}\log(1/\kappa)}\sqrt{dT \log (1+2T/\lambda)} + \frac{2}{\kappa\log(1/\kappa)}\text{Regret}(T)\bigg]}\\
    & \qquad + \frac{4d\tilde\beta_T \log (1+2T/\lambda)}{\alpha {\kappa}}\\
    & \le 4\tilde\beta_T \sqrt{d\log(1+2T/\lambda)}\bigg[3\sqrt{T} + \frac{10\beta_T}{\kappa^{1.5}\log(1/\kappa)}\sqrt{d \log (1+2T/\lambda)} + \sqrt{\frac{2}{\kappa\log(1/\kappa)}}\sqrt{\text{Regret}(T)}\bigg]\\
    & \qquad + \frac{4d\tilde\beta_T \log (1+2T/\lambda)}{\alpha {\kappa}},
\end{align*}
where the last inequality holds due to Young's inequality and $\sqrt{a+b+c} \le \sqrt{a} + \sqrt{b} + \sqrt{c}$.
Using $x \le a\sqrt{x} + b \Rightarrow x \le  a^2 + 2b$, we have
\begin{align}
\notag
    \text{Regret}(T) &\le 24\tilde\beta_T \sqrt{dT\log(1+2T/\lambda)} + \frac{80\big(\tilde\beta_T^2/\kappa+\tilde\beta_T \beta_T/\kappa^{1.5}\big)d \log(1+2T/\lambda)}{\log(1/\kappa)}\\\label{eq:Regret}
    & \qquad + \frac{8d\tilde\beta_T \log (1+2T/\lambda)}{\alpha {\kappa}}.
\end{align}
Recall that
\begin{align*}
        \beta_t &= \sqrt{\lambda} B + \frac{1}{\sqrt{\kappa}}\sqrt{d\log(2(1 + 2t/\lambda)/\delta)} + \alpha C,\\
        \tilde \beta_t &= (1+4B) \bigg[\sqrt{\lambda} B + \frac{\sqrt{\lambda}}{2} +\frac{2}{\sqrt{\lambda}}d \log \bigg(\frac{d\lambda + 2t}{d\lambda }\bigg) + \frac{2}{\sqrt{\lambda}} d\log(1/\delta) + {\alpha C}\bigg],
    \end{align*}
Choose $\lambda = d/B$, $\alpha = (\sqrt{d} + \sqrt{\lambda}B) / C$. Then we have
\begin{align}
\notag
    \beta_t &\le \frac{2}{\sqrt{\kappa}}\Big[{\sqrt{dB}} + \sqrt{d\log((1 + 2tB/d)/\delta)}\Big]\\\label{eq:0012}
    \tilde \beta_t &\le 4(1+4B) \sqrt{dB}\log \big((1+2tB/d^2)/\delta\big).
\end{align}
Substituting \eqref{eq:0012} into \eqref{eq:Regret}, we have
\begin{align*}
    \text{Regret}(T) &\le 150 \cdot d B^{1.5}\sqrt{T} \log^{1.5} \big((1+2TB/d)/\delta\big)\\
    & \qquad + \frac{10000(B/\kappa + 1/\kappa^2)}{ \log(1/\kappa)} d^{2}B^2 \log^{3} \big((1+2TB/d)/\delta\big)\\
    & \qquad + 32dB \log^2\big((1+2TB/d)/\delta \big)C/\kappa.
\end{align*}
This completes the proof of Theorem \ref{thm: new}.
\end{proof}
\section{Proof of the Lemmas in Section \ref{sec: proof new}}
We first prove Lemma \ref{lemma:cE}. Before we start, we will mention that \citet{jun2021improved} considered a  slightly different but relevant confidence bound. Specifically, they studied the logistic bandit problem with pure exploration and proposed an algorithm with a sample complexity guarantee. In contrast, our work focuses on the regret of dueling bandits, with the approach involving reward estimation complemented by a bonus term.
Furthermore, \citet{jun2021improved} derived a concentration inequality under a fixed design assumption, given by $|\la x, \hat \btheta - \btheta^* \ra| \le \beta \|x\|_{H_t(\btheta^*)^{-1}}$.
However, this assumption is too restrictive for our setting with adaptive action selection. In contrast, our concentration inequality (Lemma \ref{lemma:cE}) is applicable to adaptive designs that holds for any arm, which works for the adaptive arm-selection inherent to bandit algorithms. 
\subsection{Proof of Lemma \ref{lemma:cE}}
 We define some auxiliary quantities
    \begin{align*}
    G_{t}(\btheta) &= \lambda\btheta + \sum_{i = 1}^{t-1}w_i\Big[\sigma\Big(\big(\bphi(x_i,a_i)-\bphi(x_i,b_i)\big)^\top \btheta\Big)
    \\
    & \qquad-\sigma\Big(\big(\bphi(x_i,a_i)-\bphi(x_i,b_i)\big)^\top \btheta^*\Big)\Big]\big(\bphi(x_i,a_i)-\bphi(x_i,b_i)\big),\\
     \epsilon_{t} &= l_t - \sigma\Big(\big(\bphi(x_t,a_t)-\bphi(x_t,b_t)\big)^\top \btheta^*\Big),\\
    \gamma_{t} &= o_t - \sigma\Big(\big(\bphi(x_t,a_t)-\bphi(x_t,b_t)\big)^\top \btheta^*\Big),\\
     Z_{t} &= \sum_{i = 1}^{t-1} w_i\gamma_i \big(\bphi(x_i,a_i)-\bphi(x_i,b_i)\big).
    \end{align*}
    In Algorithm \ref{alg:new}, $\btheta_t$ is chosen to be the solution to the following equation,
    \begin{align*}
    \lambda\btheta_{t} + 
        \sum_{i=1}^{t-1}w_i\Big[ \sigma\Big(\big(\bphi(x_i,a_i)-\bphi(x_i,b_i)\big)^\top \btheta_{t}\Big) - o_i \Big]\big(\bphi(x_i,a_i)-\bphi(x_i,b_i)\big) = \textbf{0}.
    \end{align*}
Then we have
    \begin{align*}
        G_{t}( \btheta_{t}) &= \lambda\btheta_{t} + \sum_{i=1}^{t-1}w_i\Big[\sigma\Big(\big(\bphi(x_i,a_i)-\bphi(x_i,b_i)\big)^\top \btheta_{t}\Big) \\
        & \qquad - \sigma\Big(\big(\bphi(x_i,a_i)-\bphi(x_i,b_i)\big)^\top \btheta^*\Big)\Big]\big(\bphi(x_i,a_i)-\bphi(x_i,b_i)\big)\\
        & = \sum_{i = 1}^{t-1}w_i\Big[o_i - \sigma\Big(\big(\bphi(x_i,a_i)-\bphi(x_i,b_i)\big)^\top \btheta^*\Big)\Big]\big(\bphi(x_i,a_i)-\bphi(x_i,b_i)\big)\\
        & = Z_{t}.
    \end{align*}
First, we will bound $\|Z_{t}\|_{ \bSigma_{t}^{-1}}$.
    Following the technique used in Section \ref{sec:proof of lemma}, we decompose the summation in \eqref{eq:Gt} based on the adversarial feedback $c_t$, i.e.,
\begin{align*}
    Z_t &= \sum_{i < t: c_i = 0} w_i\gamma_i \big(\bphi(x_i,a_i)-\bphi(x_i,b_i)\big)+ \sum_{i < t: c_i = 1} w_i\gamma_i \big(\bphi(x_i,a_i)-\bphi(x_i,b_i)\big),
\end{align*}
When $c_i=1$, i.e. with adversarial feedback, $|\gamma_i-\epsilon_i| = 1$. On the contrary, when $c_i = 0$, $\gamma_i=\epsilon_i$. Therefore,
\begin{align*}
\sum_{i < t: c_i = 0} w_i\gamma_i \big(\bphi(x_i,a_i)-\bphi(x_i,b_i)\big) &= \sum_{i < t: c_i = 0} w_i\epsilon_i \big(\bphi(x_i,a_i)-\bphi(x_i,b_i)\big),\\ 
    \sum_{i < t: c_i = 1} w_i\gamma_i \big(\bphi(x_i,a_i)-\bphi(x_i,b_i)\big) &= \sum_{i < t: c_i = 1} w_i\epsilon_i \big(\bphi(x_i,a_i)-\bphi(x_i,b_i)\big) \\
    &\qquad + \sum_{i < t: c_i = 1} w_i\big(\gamma_i-\epsilon_i) (\bphi(x_i,a_i)-\bphi(x_i,b_i)\big).
\end{align*}
Summing up the two equalities, we have
\begin{align*}
    Z_t = \sum_{i =1}^{t-1} w_i\epsilon_i \big(\bphi(x_i,a_i)-\bphi(x_i,b_i)\big)+ \sum_{i < t: c_i = 1} w_i(\gamma_i-\epsilon_i) \big(\bphi(x_i,a_i)-\bphi(x_i,b_i)\big).
\end{align*}
Therefore, we have
\begin{align}
    \|Z_t\|_{\bSigma_t^{-1}} &\le \frac{1}{\sqrt{\kappa}}\underbrace{\bigg\|\sum_{i = 1}^{t-1} w_i\sqrt{\kappa}\epsilon_i \big(\bphi(x_i,a_i)-\bphi(x_i,b_i)\big)\bigg\|_{\bSigma_t^{-1}}}_{I_1} + \underbrace{ \bigg\|\sum_{i < t: c_i = 1} w_i \big(\bphi(x_i,a_i)-\bphi(x_i,b_i)\big)\bigg\|_{\bSigma_t^{-1}}}_{I_2}.\label{eq:ZZ}
\end{align}
For the term $I_1$, with probability at least $1-\delta/2$, for all $t\in [T]$, it can be bounded by
    \begin{align*}
        I_1 \le \frac{1}{\sqrt{\kappa}}\sqrt{2\log \Big(\frac{2\det(\bSigma_t)^{1/2}\det (\bSigma_0)^{-1/2}}{\delta}\Big)},
    \end{align*}
    due to Lemma \ref{lem:concen}.
    Using $w_i \le 1$, we have $\sqrt{w_i} \|\bphi(x_i,a_i)-\bphi(x_i,b_i)\|_2 \le 2$. Moreover, we have 
    \begin{align*}
        \det(\bSigma_t) &\le \bigg(\frac{\text{Tr} (\bSigma_t)}{d}\bigg)^d\\
        &= \bigg(\frac{d\lambda + \sum_{i=1}^{t-1}w_i \|(\bphi(x_i,a_i)-\bphi(x_i,b_i))\|^2_2}{d}\bigg)^d\\
        & \le \bigg(\frac{d\lambda + 2t}{d}\bigg)^d,
    \end{align*}
    where the first inequality holds because for every matrix $\Ab \in \RR^{d\times d}$, $\det \Ab \le (\text{Tr}(\Ab)/d)^d$. The second inequality holds due to $\sqrt{w_i} \|\bphi(x_i,a_i)-\bphi(x_i,b_i)\|_2 \le 2$.
    Easy to see that $\det (\bSigma_0) = \lambda^d$. The term $I_1$ can be bounded by 
    \begin{align}
    \label{eq:I1}
        I_1 \le \sqrt{d\log(2(1 + 2t/\lambda)/\delta)}.
    \end{align}
    For $I_2$, with our choice of the weight $w_i$, we have 
    \begin{align}
    \notag
        I_2 
        &\le \sum_{i<t:c_i=1}w_i\big\|(\bphi(x_i,a_i)-\bphi(x_i,b_i))\big\|_{ \bSigma_{t}^{-1}}\\\notag
        & \le \sum_{i<t:c_i=1}w_i\big\|(\bphi(x_i,a_i)-\bphi(x_i,b_i))\big\|_{ \bSigma_{i}^{-1}}\\\notag
        & \le \sum_{i<t:c_i=1} \alpha\\\label{eq:I2}
        & \le \alpha C,
    \end{align}
    where the second inequality holds due to $\bSigma_t \succeq \bSigma_i$. The third inequality holds due to $w_i \le \alpha/\|(\bphi(x_i,a_i)-\bphi(x_i,b_i))\big\|_{ \bSigma_{i}^{-1}}$. The last inequality holds due to the definition of $C$. Substituting \eqref{eq:I1} and \eqref{eq:I2} into \eqref{eq:ZZ}, we have
    \begin{align*}
        \|Z_t\|_{\bSigma_t^{-1}} \le \frac{1}{\sqrt{\kappa}} \sqrt{d\log(2(1 + 2t/\lambda)/\delta)} + \alpha C.
    \end{align*}
    Therefore, using the triangle inequality, we have
\begin{align}\notag
    \|G_t(\btheta^*)-G_t(\btheta_t)\|_{\bSigma_t^{-1}} &\le \lambda \|\btheta^*\|_{\bSigma_t^{-1}} + \frac{1}{\sqrt{\kappa}}\sqrt{d\log(2(1 + 2t/\lambda)/\delta)} + \alpha C\\\label{eq:0007}
    & \le \sqrt{\lambda} B + \frac{1}{\sqrt{\kappa}}\sqrt{d\log(2(1 + 2t/\lambda)/\delta)} + \alpha C.
\end{align}
Using the Newton-Leibniz formula, we have
\begin{align*}
    G_t(\btheta^*)-G_t(\btheta_t) = \bigg[\int_{0}^1 \nabla G_t\big(\btheta_t + v(\btheta^* - \btheta_t)\big) \mathrm{d}v\bigg] (\btheta^* - \btheta_t).
\end{align*}
Recall the definition
\begin{align*}
    G_{t}(\btheta) &= \lambda\btheta + \sum_{i = 1}^{t-1}w_i\Big[\sigma\Big(\big(\bphi(x_i,a_i)-\bphi(x_i,b_i)\big)^\top \btheta\Big)
    \\
    & \qquad-\sigma\Big(\big(\bphi(x_i,a_i)-\bphi(x_i,b_i)\big)^\top \btheta^*\Big)\Big]\big(\bphi(x_i,a_i)-\bphi(x_i,b_i)\big)
\end{align*}
then we have
\begin{align*}
    \nabla G_t(\btheta) = \lambda \Ib + \sum_{i=1}^{t-1} w_i \dot\sigma\Big(\big(\bphi(x_i,a_i)-\bphi(x_i,b_i)\big)^\top \btheta\Big) \big(\bphi(x_i,a_i)-\bphi(x_i,b_i)\big) \big(\bphi(x_i,a_i)-\bphi(x_i,b_i)\big)^\top.
\end{align*}
We define $\Vb_t$ as follows: 
\begin{align}
\notag
    \Vb_t &= \int_{0}^1 \nabla G_t\big(\btheta_t + v(\btheta^* - \btheta_t)\big) \mathrm{d}v\\\notag
    & = \lambda \Ib + \sum_{i=1}^{t-1} w_t \bigg[\int_{0}^1 \dot \sigma \Big(\big(\bphi(x_i,a_i)-\bphi(x_i,b_i)\big)^\top \big(\btheta_t + v(\btheta^* - \btheta_t)\big)\Big)\mathrm{d}v\bigg] \\\label{eq:Vt}
    & \qquad \big(\bphi(x_i,a_i)-\bphi(x_i,b_i)\big) \big(\bphi(x_i,a_i)-\bphi(x_i,b_i)\big)^\top.
\end{align}
Therefore, we have $\Vb_t \succeq \bSigma_t$. We know that $G_t(\btheta^*)-G_t(\btheta_t) = \Vb_t (\btheta^*-\btheta_t)$.
Then the following inequality holds:
\begin{align*}
    \big\|G_{t}( \btheta_{t}) - G_{t}( \btheta^*)\big\|_{\bSigma_{t}^{-1}}^2 &= (\btheta_t-\btheta^*)^\top \Vb_t\bSigma_t^{-1} \Vb_t (\btheta_t-\btheta^*)\\
    & \ge (\btheta_t-\btheta^*)^\top \bSigma_t (\btheta_t-\btheta^*).
\end{align*}
As a result, we have
    \begin{align*}
        \big\| \btheta_{t} - \btheta^*\big\|_{ \bSigma_{t}} &\leq 
        \big\|G_{t}( \btheta_{t}) - G_{t}( \btheta^*)\big\|_{\bSigma_{t}^{-1}}\\
        & \leq \sqrt{\lambda} B + \frac{1}{\sqrt{\kappa}}\sqrt{d\log(2(1 + 2t/\lambda)/\delta)} + \alpha C\\
        & = \beta_t,
        \end{align*}
        where the first inequality holds due to \eqref{eq:0007}.
        Consequently, we have $\PP[\cE_1] \ge 1-\delta/2 $.
        
Let 
\begin{align*}
    \Hb_t = \lambda \Ib+\sum_{i=1}^{t-1} w_i \dot \sigma\Big(\big(\bphi(x_i,a_i)-\bphi(x_i,b_i)\big)^\top \btheta^*\Big)\big(\bphi(x_i,a_i)-\bphi(x_i,b_i)\big) \big(\bphi(x_i,a_i)-\bphi(x_i,b_i)\big)^\top.
\end{align*}
Next, we consider the following term:
\begin{align*}
    \|Z_t\|_{\Hb_t^{-1}} &\le \underbrace{\bigg\|\sum_{i = 1}^{t-1} w_i\epsilon_i \big(\bphi(x_i,a_i)-\bphi(x_i,b_i)\big)\bigg\|_{\Hb_t^{-1}}}_{J_1} + \underbrace{ \bigg\|\sum_{i < t: c_i = 1} w_i \big(\bphi(x_i,a_i)-\bphi(x_i,b_i)\big)\bigg\|_{\Hb_t^{-1}}}_{J_2}.
\end{align*}
For $J_1$, let $\cF_t = \sigma(x_1,a_1,b_1,o_1,x_2,a_2,b_2,o_2,\ldots, x_t,a_t,b_t)$. We know that
    $\epsilon_t$
It's $\cF_{t+1}$-measurable.
\begin{align*}
    &\EE[\epsilon_t |\cF_t] = 0\\
    & \EE[\epsilon_t^2|\cF_t] = \dot \sigma\Big(\big(\bphi(x_t,a_t)-\bphi(x_t,b_t)\big)^\top \btheta^*\Big)
\end{align*}
Using Lemma \ref{lemma:faury}, with probability at least $1-\delta/2$, for all $t \in [T]$, we have
\begin{align*}
   J_1 &=  \bigg\|\sum_{i = 1}^{t-1} w_i\epsilon_i \big(\bphi(x_i,a_i)-\bphi(x_i,b_i)\big)\bigg\|_{\Hb_t^{-1}} \\
   &\le \frac{\sqrt{\lambda}}{2} +\frac{2}{\sqrt{\lambda}}\log \bigg(\frac{2\det(\Hb_t)^{1/2} \lambda^{-d/2}}{\delta}\bigg) + \frac{2}{\sqrt{\lambda}} d\log(2).
\end{align*}
Using $w_i \le 1$, $\dot \sigma(\cdot) \le 1$, we have $\sqrt{w_i} \|\bphi(x_i,a_i)-\bphi(x_i,b_i)\|_2 \le 2$. Moreover, we have 
    \begin{align*}
        \det(\Hb_t) &\le \bigg(\frac{\text{Tr} (\Hb_t)}{d}\bigg)^d\\
        &= \bigg(\frac{d\lambda + \sum_{i=1}^{t-1}w_i \|(\bphi(x_i,a_i)-\bphi(x_i,b_i))\|^2_2}{d}\bigg)^d\\
        & \le \bigg(\frac{d\lambda + 2T}{d}\bigg)^d,
    \end{align*}
    where the first inequality holds because for every matrix $\Ab \in \RR^{d\times d}$, $\det \Ab \le (\text{Tr}(\Ab)/d)^d$. The second inequality holds due to $\sqrt{w_i} \|\bphi(x_i,a_i)-\bphi(x_i,b_i)\|_2 \le 2$. Therefore, we have
    \begin{align}
    J_1 \le \frac{\sqrt{\lambda}}{2} +\frac{2}{\sqrt{\lambda}}d \log \bigg(\frac{d\lambda + 2T}{d\lambda }\bigg) + \frac{2}{\sqrt{\lambda}} d\log(1/\delta).\label{eq:J1*}
\end{align}
For $J_2$,
we have 
    \begin{align}
    \notag
        J_2 
        &= \sum_{i<t:c_i=1}w_i\big\|(\bphi(x_i,a_i)-\bphi(x_i,b_i))\big\|_{ \Hb_{t}^{-1}}\\\notag
        & \le \sum_{i<t:c_i=1}w_i\big\|(\bphi(x_i,a_i)-\bphi(x_i,b_i))\big\|_{ \Hb_{i}^{-1}}\\\notag
        & \le \sum_{i<t:c_i=1} {\alpha}\\\label{eq:J2*}
        & \le {\alpha C},
    \end{align}
where the first inequality holds due to $\Hb_t \succeq \Hb_i$. The second inequality holds due to $w_i \le \alpha/\|(\bphi(x_i,a_i)-\bphi(x_i,b_i))\big\|_{ \bSigma_{i}^{-1}}$and $\Hb_i \succeq \bSigma_i$. The last inequality holds due to the definition of $C$.
Combining \eqref{eq:J1*} and \eqref{eq:J2*}, we have
\begin{align}
    \notag\|G_t(\btheta^*)-G_t(\btheta_t)\|_{\Hb_t^{-1}} &\le \lambda \|\btheta^*\|_{\Hb_t^{-1}} + \frac{\sqrt{\lambda}}{2} +\frac{2}{\sqrt{\lambda}}d \log \bigg(\frac{d\lambda + 2t}{d\lambda }\bigg) + \frac{2}{\sqrt{\lambda}} d\log(1/\delta) + {\alpha C}\\\label{eq:0011}
    & \le \sqrt{\lambda} B + \frac{\sqrt{\lambda}}{2} +\frac{2}{\sqrt{\lambda}}d \log \bigg(\frac{d\lambda + 2t}{d\lambda }\bigg) + \frac{2}{\sqrt{\lambda}} d\log(1/\delta) + {\alpha C},
\end{align}
where the first inequality holds due to the triangle inequality.

Recall $\Vb_t$ defined in \eqref{eq:Vt}.
    Using Lemma \ref{lemma:ab}, we have
    \begin{align}
    \notag
        \Vb_t &\ge \lambda \Ib + \sum_{i=1}^{t-1}w_i \frac{\dot \sigma\big(\big(\bphi(x_i,a_i)-\bphi(x_i,b_i)\big)^\top\btheta^*\big)}{1+ \big|\big(\bphi(x_i,a_i)-\bphi(x_i,b_i)\big)^\top(\btheta^*-\btheta_t)\big|} \big(\bphi(x_i,a_i)-\bphi(x_i,b_i)\big) \big(\bphi(x_i,a_i)-\bphi(x_i,b_i)\big)^\top\\\label{eq:H+V}
        & \ge \frac{1}{1+4B}\Hb_t,
    \end{align}
    where the last inequality holds due to $\big|\big(\bphi(x_i,a_i)-\bphi(x_i,b_i)\big)^\top(\btheta^*-\btheta_t)\big| \le 4B$.
    This leads to
    \begin{align}
    \notag\|G_t(\btheta^*)-G_t(\btheta_t)\|^2_{\Hb_t^{-1}}
    &= (\btheta^*-\btheta_t)^\top\Vb_t
     \Hb_t^{-1} \Vb_t(\btheta^*-\btheta_t)\\\notag
     & \ge \frac{1}{1+4B}(\btheta^*-\btheta_t)^\top \Vb_t (\btheta^*-\btheta_t).
     \\\label{eq:0010}
     & \ge \frac{1}{(1+4B)^2}(\btheta^*-\btheta_t)^\top \Hb_t (\btheta^*-\btheta_t),
    \end{align}
    where the first equality holds due to  $G_t(\btheta^*)-G_t(\btheta_t) = \Vb_t (\btheta^*-\btheta_t)$. The first and second inequalities hold due to \eqref{eq:H+V}.
Conditioned on the event $\cE_1$, recall that $\hat \Delta_t$ is defined by
\begin{align*}
    \hat \Delta_t = \big|\big(\bphi(x_t,a_t) - \bphi(x_t,b_t)\big)^\top \btheta_t\big| + \beta_t\|\bphi(x_t,a_t) - \bphi(x_t,b_t)\|_{\bSigma_t^{-1}}.
\end{align*}
Therefore, we have
\begin{align*}
    |\hat \Delta_t| &\ge  \big|\big(\bphi(x_t,a_t) - \bphi(x_t,b_t)\big)^\top \btheta^*\big| + \beta\|\bphi(x_t,a_t) - \bphi(x_t,b_t)\|_{\bSigma_t^{-1}} - \big|\big(\bphi(x_t,a_t) - \bphi(x_t,b_t)\big)^\top (\btheta^*-\btheta_t)\big| \\
    & \ge \big|\big(\bphi(x_t,a_t) - \bphi(x_t,b_t)\big)^\top \btheta^*\big|,
\end{align*}
where the first inequality holds due to the triangle inequality. The second inequality holds due to $\cE_1$ and the Cauchy-Schwarz inequality.
Let $v_t = \max\{\kappa, \dot\sigma(\hat \Delta_t)\}$. Using the fact that $\dot\sigma(\cdot)$ is an even function decreasing when $x > 0$, we have $v_t \le \dot \sigma\big(\big(\bphi(x_t,a_t)-\bphi(x_t,b_t)\big)^\top \btheta^*\big)$. Therefore, we have $\bLambda_t \preceq \Hb_t$. 
As a result, \eqref{eq:0010} indicates that
\begin{align*}
    \|\btheta^*-\btheta_t\|_{\bLambda_t}^2 & \le \|\btheta^*-\btheta_t\|_{\Hb_t}^2\\
    & \le(1+4B)^2 \|G_t(\btheta^*)-G_t(\btheta_t)\|^2_{\Hb_t^{-1}}.
\end{align*}
Using \eqref{eq:0011}, we have
\begin{align*}
    \|\btheta^*-\btheta_t\|_{\bLambda_t} 
    &\le (1+4B) \|G_t(\btheta^*)-G_t(\btheta_t)\|_{\Hb_t^{-1}}\\
    & \le (1+4B) \bigg[\sqrt{\lambda} B + \frac{\sqrt{\lambda}}{2} +\frac{2}{\sqrt{\lambda}}d \log \bigg(\frac{d\lambda + 2t}{d\lambda }\bigg) + \frac{2}{\sqrt{\lambda}} d\log(1/\delta) + {\alpha C}\bigg]\\
    &= \tilde \beta_t. 
\end{align*}
Taking a union bound, we complete the proof of Lemma \ref{lemma:cE}.

\section{Experiments}
\label{sec:experiment}
In this section, we conduct simulation experiments to verify our theoretical results.
\subsection{Experiment Setup}\label{sec:setup}
\paragraph{Preference Model.} We study the effect of adversarial feedback with the preference model determined by \eqref{eq:preference}, where $\sigma(x) = 1/(1+e^{-x})$. We randomly generate the underlying parameter in $[-0.5,0.5]^d$ and normalize it to be a vector with $\|\btheta^*\|_2 = 2$. Then, we set it to be the underlying parameter and construct the reward utilized in the preference model as $r^*(x,a) = \la \btheta^*, \bphi(x,a)\ra$. We set the action set $\cA = \big\{-{1}/{\sqrt{d}},{1}/{\sqrt{d}}\big\}^d $. For simplicity, we assume $\bphi(x,a) = a$. In our experiment, we set the dimension $d = 5$, with the size of action set $|\cA| = 2^d = 32$.
\paragraph{Adversarial Attack Methods.} We study the performance of our algorithm using different adversarial attack methods. We categorize the first two methods as ``weak'' primarily because the adversary in these scenarios does not utilize information about the agent's actions. In contrast, we classify the latter two methods as ``strong'' attacks. In these cases, the adversary leverages a broader scope of information, including knowledge of the actions selected by the agent and the true preference model. This enables it to devise more targeted adversarial methods.
\begin{itemize}[leftmargin=*]
 \setlength\itemsep{-0.1em}
    \item ``Greedy Attack": The adversary will flip the preference label for the first $C$ rounds. After that, it will not corrupt the result anymore.
    \item ``Random Attack": In each round, the adversary will flip the preference label with the probability of $0 < p < 1$, until the times of adversarial feedback reach $C$.
    \item ``Adversarial Attack": The adversary can have access to the true preference model. It will only flip the preference label when it aligns with the preference model, i.e., the probability for the preference model to make that decision is larger than 0.5, until the times of adversarial feedback reach $C$.
    \item``Misleading Attack": The adversary selects a suboptimal action. It will make sure this arm is always the winner in the comparison until the times of adversarial feedback reach $C$. In this way, it will mislead the agent to believe this action is the optimal one.
\end{itemize}
\paragraph{Experiment Setup.}
For each experiment instance, we simulate the interaction with the environment for $T=2000$ rounds. In each round,  the feedback for the action pair selected by the algorithm is generated according to the defined preference model. Subsequently, the adversary observes both the selected actions and their corresponding feedback and then engages in one of the previously described adversarial attack methods. We report the cumulative regret averaged across 10 random runs.
\begin{figure*}[t!]
\centering  
    \subfigure[Greedy Attack]{
    \includegraphics*[width=0.45\textwidth]{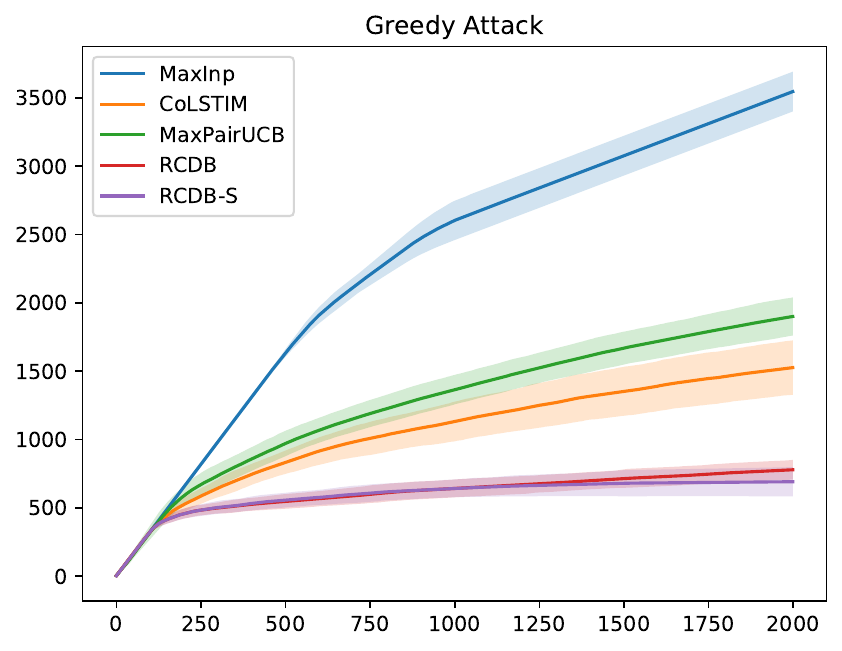}}
    \subfigure[Random Attack]{
    \includegraphics*[width=0.45\textwidth]{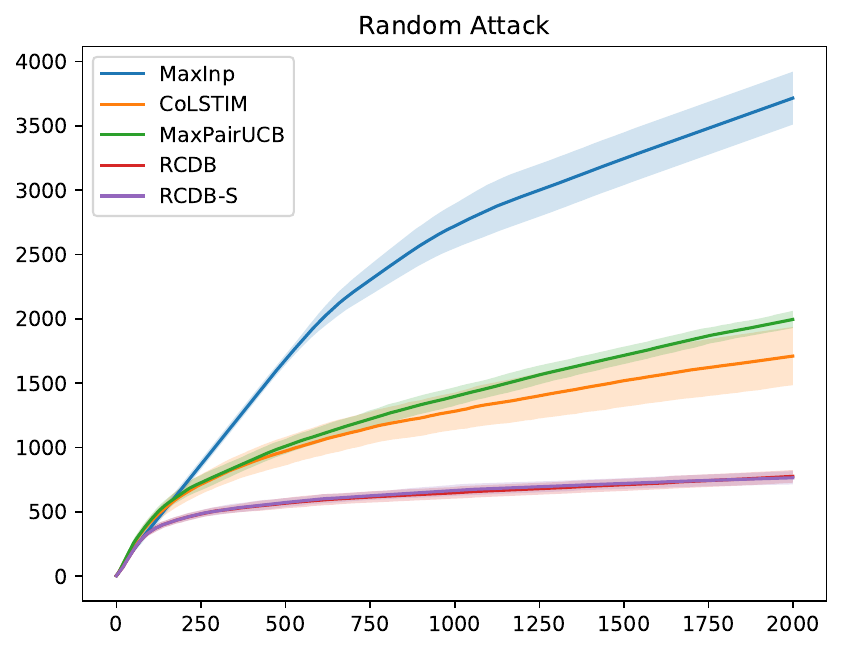}}
    \subfigure[Adversarial Attack]{
    \includegraphics*[width=0.45\textwidth]{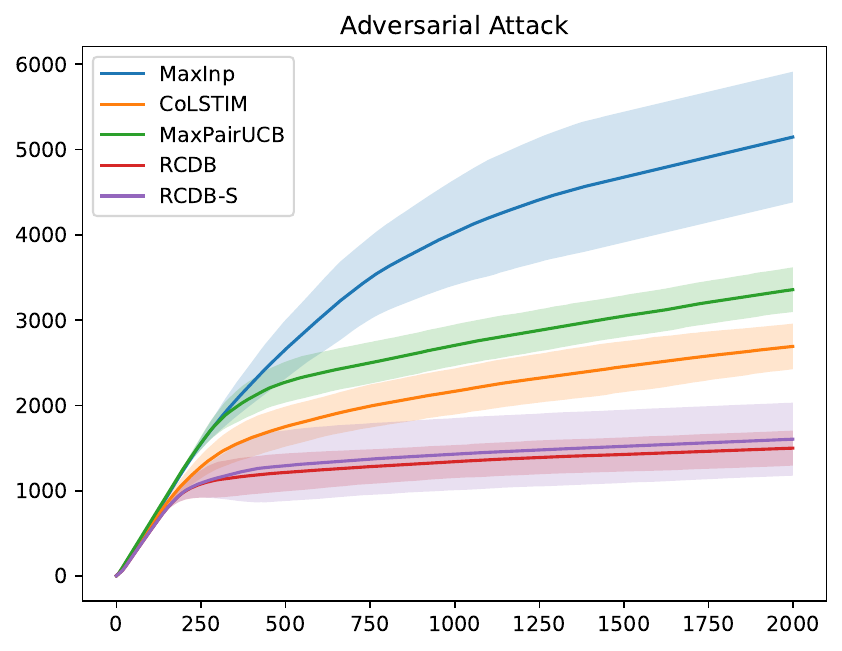}}
    \subfigure[Misleading Attack]{
    \includegraphics*[width=0.45\textwidth]{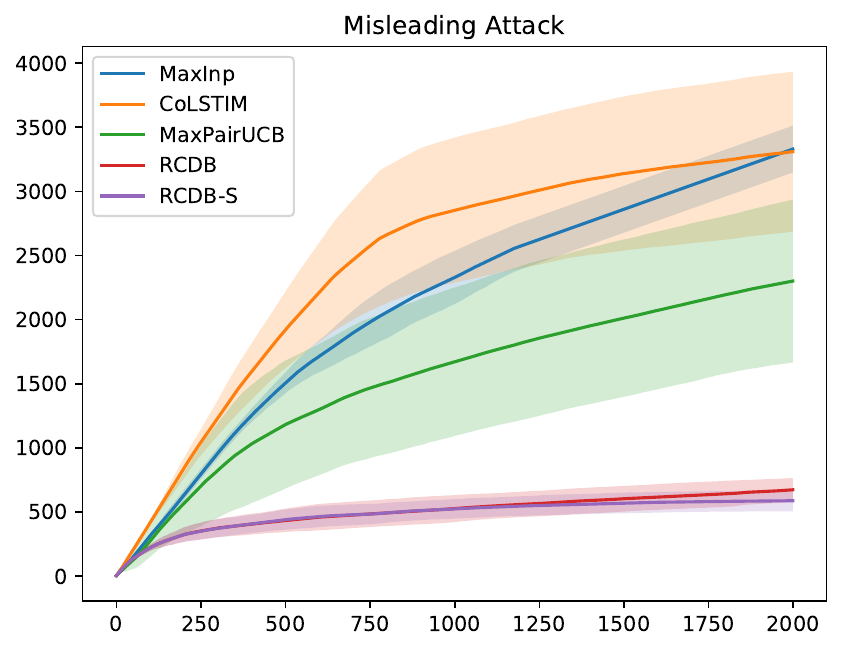}}
    \caption{Comparison of {\algo} (Our Algorithm \ref{alg:main}), \texttt{MaxInp} \citep{saha2021optimal}, \texttt{CoLSTIM} \citep{bengs2022stochastic} and \texttt{MaxPairUCB} \citep{di2023variance}. We report the cumulative regret with various adversarial attack methods (Greedy, Random, Adversarial, Misleading). For the baselines, the parameters are carefully tuned to achieve better results with different attack methods. The total number of adversarial feedback is $C=\lceil\sqrt{T}\rceil$.\label{figure:1}}
\end{figure*}

\subsection{Performance Comparison}
We first introduce the algorithms studied in this section.
\begin{itemize}[leftmargin=*]
\setlength\itemsep{-0.1em}
  \item 
  \texttt{MaxInP}: Maximum Informative Pair by \citet{saha2021optimal}. It involves maintaining a standard MLE. With the estimated model, it then identifies a set of promising arms possible to beat the rest. The selection of arm pairs is then strategically designed to maximize the uncertainty in the difference between the two arms within this promising set, referred to as ``maximum informative''. 
  \item
  \texttt{CoLSTIM}: The method by \citet{bengs2022stochastic}. It involves maintaining a standard MLE for the estimated model. Based on this model, the first arm is selected as the one with the highest estimated reward, implying it is the most likely to prevail over competitors. The second arm is selected to be the
first arm’s toughest competitor, with an added uncertainty bonus.
  \item 
  \texttt{MaxPairUCB}: This algorithm was proposed in \citet{di2023variance}. It uses the regularized  MLE to estimate the parameter $\btheta^*$. Then it selects the actions based on a symmetric action selection rule, i.e. the actions with the largest estimated reward plus some uncertainty bonus.
  \item 
  \algo: Algorithm \ref{alg:main} proposed in this paper. The key difference from the other algorithms is the use of uncertainty weight in the calculation of MLE \eqref{eq:MLE}. The we use the same symmetric action selection rule as \texttt{MaxPairUCB}. Our experiment results show that the uncertainty weight is critical in the face of adversarial feedback.
\end{itemize}

Our results are demonstrated in Figure \ref{figure:1}. In Figure \ref{figure:1}(a) and Figure \ref{figure:1}(b), we observe scenarios where the adversary is ``weak'' due to the lack of access to information regarding the selected actions and the underlying preference model. Notably, in these situations, our algorithm {\algo} outperforms all other baseline algorithms, demonstrating its robustness. Among the other algorithms, \texttt{CoLSTIM} performs as the strongest competitor. 

In Figure \ref{figure:1}(c), the adversary employs a 'stronger' adversarial method. Due to the inherent randomness of the model, some labels may naturally be 'incorrect'. An adversary with knowledge of the selected actions and the preference model can strategically neglect these naturally incorrect labels and selectively flip the others. This method proves catastrophic for algorithms to learn the true model, as it results in the agent encountering only incorrect preference labels at the beginning. Our results indicate that this leads to significantly higher regret. However, it's noteworthy that our algorithm {\algo} demonstrates considerable robustness.

In Figure \ref{figure:1}(d), the adversary employs a strategy aimed at misleading algorithms into believing a suboptimal action is the best choice. The algorithm \texttt{CoLSTIM} appears to be the most susceptible to being cheated by this method. Despite the deployment of 'strong' adversarial methods, as shown in both Figure \ref{figure:1}(c) and Figure \ref{figure:1}(d), our algorithm, {\algo}, consistently demonstrates exceptional robustness against these attacks. A significant advantage of {\algo} lies in that our parameter is selected solely based on the number of adversarial feedback $C$, irrespective of the nature of the adversarial methods employed. This contrasts with other algorithms where parameter tuning must be specifically adapted for each distinct adversarial method.
\begin{figure}[ht]
\label{figure:2}
\centering 
\includegraphics[width=0.4\textwidth]{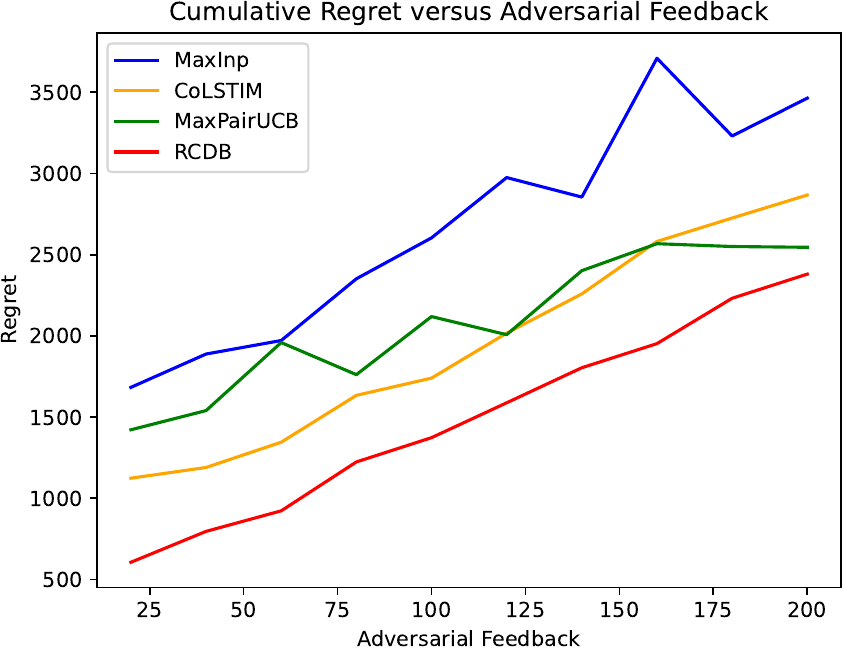}
\caption{The relationship between cumulative regret and the number of adversarial feedback $C$. For this specific experiment, we employ the "greedy attack" method to generate the adversarial feedback. $C$ is selected from the set $ [20,40,60,80,100,120,140,160,180,200]$ (10 adversarial levels).}
\end{figure}
\subsection{Robustness to Different Numbers of Adversarial Feedback}
In this section, we test the performance of algorithms with increasing times of adversarial feedback. As shown in Figure \ref{figure:2}, our algorithm has a linear dependency on the number of adversarial feedback $C$, which is consistent with the theoretical results we have proved in Theorem \ref{main thm}. In comparison to other algorithms, {\algo} demonstrates superior robustness against adversarial feedback, as evidenced by its notably smaller regret.
\section{Auxiliary Lemmas}
\begin{lemma}[Azuma–Hoeffding inequality, \citealt{cesa2006prediction}]\label{lem:azuma}
Let $\{\eta_k\}^{K}_{k
=1}$ be a
martingale difference sequence with respect to a filtration $\{\cF_t\}$ satisfying $|\eta_t| \le R$ for some constant
$R$, $\eta_t$ is $\cF_{t+1}$-measurable, $\EE
[
\eta_t|\cF_t]
= 0$. Then for any $0 < \delta < 1$, with probability at least $1 - \delta$, we have
\begin{align*}
    \sum_{t=1}^{T}\eta_t \le R\sqrt{2T\log{1/\delta}}.
\end{align*}
\end{lemma}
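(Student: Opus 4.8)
The plan is to prove this classical concentration bound by the exponential (Chernoff) method, using the martingale-difference structure to peel off one increment at a time.

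First I would fix $\lambda > 0$, write $S_T = \sum_{t=1}^T \eta_t$, and apply Markov's inequality to the nonnegative random variable $e^{\lambda S_T}$ to get $\PP(S_T \ge u) \le e^{-\lambda u}\,\EE[e^{\lambda S_T}]$ for any $u > 0$. The task then reduces to bounding the moment generating function $\EE[e^{\lambda S_T}]$.

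The core step is to control this MGF by iterated conditioning. Since $\eta_t$ is $\cF_{t+1}$-measurable, $S_{t-1}$ is $\cF_t$-measurable, so conditioning on $\cF_T$ and using the tower property gives $\EE[e^{\lambda S_T}] = \EE\big[e^{\lambda S_{T-1}}\,\EE[e^{\lambda \eta_T}\mid \cF_T]\big]$. Here I would invoke Hoeffding's lemma: if $X$ satisfies $\EE[X\mid \cF]=0$ and $|X|\le R$ almost surely, then $\EE[e^{\lambda X}\mid\cF]\le e^{\lambda^2 R^2/2}$; this is itself obtained by bounding $e^{\lambda x}$ on $[-R,R]$ by the chord joining its endpoints (convexity) and a second-order Taylor expansion of the logarithm of the resulting bound. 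Applying this to $\eta_T$ and then iterating down through $t=T-1,\dots,1$ yields $\EE[e^{\lambda S_T}]\le e^{T\lambda^2 R^2/2}$.

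Finally I would combine the two displays to obtain $\PP(S_T\ge u)\le \exp(-\lambda u + T\lambda^2 R^2/2)$, optimize the exponent over $\lambda$ by choosing $\lambda = u/(TR^2)$, which gives the sub-Gaussian tail $\PP(S_T\ge u)\le \exp(-u^2/(2TR^2))$, and then set the right-hand side equal to $\delta$ and solve $u = R\sqrt{2T\log(1/\delta)}$. The main obstacle --- a standard one --- is establishing Hoeffding's lemma and being careful with the filtration bookkeeping in the iterated conditioning (that $\eta_t$ is $\cF_{t+1}$-measurable while $\EE[\eta_t\mid\cF_t]=0$, so each inner conditional expectation really is bounded by $e^{\lambda^2 R^2/2}$ and the outer factor can be pulled out of the expectation); everything else is routine algebra and optimization.
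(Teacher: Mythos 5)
Your proposal is correct: it is the standard Chernoff-bound proof of the Azuma--Hoeffding inequality, i.e., Markov's inequality applied to $e^{\lambda S_T}$, iterated conditioning via the tower property (using that $S_{t-1}$ is $\cF_t$-measurable while $\EE[\eta_t\mid\cF_t]=0$), the conditional Hoeffding lemma to bound each factor by $e^{\lambda^2R^2/2}$, and optimization over $\lambda$. The paper does not prove this lemma at all---it imports it by citation from \citet{cesa2006prediction} as an auxiliary result---so there is nothing to diverge from; your argument is precisely the textbook proof that the citation points to, and the filtration bookkeeping you flag is handled correctly.
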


\begin{lemma}[Lemma 9 \citealt{abbasi2011improved}]
\label{lem:concen}
    Let $\{\epsilon_t\}^T_{t=1}$ be a real-valued stochastic
process with corresponding filtration $\{\cF_t\}^T_{t=0}$ such that $\epsilon_t$ is $\cF_t$-measurable and $\epsilon_t$ is conditionally
$R$-sub-Gaussian, i.e.
\begin{align*}    
\forall \lambda \in \RR,
\EE[e^{\lambda\epsilon_t}|\cF_{t-1}]
\le \exp \Big(\frac{\lambda^2R^2}{2}\Big)
.
\end{align*}
Let $\{\xb_t\}^T_{t=1}$ be an $\RR^d$-valued stochastic process where $\xb_t$ is $\cF_{t-1}$-measurable and for any $t \in [T]$,
we further define $\bSigma_t = \lambda \Ib +
\sum^t
_{i=1} \xb_i\xb^\top_i$ . Then with probability at least $1-\delta$, for all $t \in [T]$, we
have
\begin{align*}
    \bigg\|\sum_{i=1}^T \xb_i \eta_i\bigg\|^2_{\bSigma^{-1}_t}
\le 2R^2 \log \bigg(\frac{\det(\bSigma_t)^{1/2}\det(\bSigma_0)^{-1/2}}{\delta}\bigg).
\end{align*}
\end{lemma}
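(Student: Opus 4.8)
The plan is to prove this by the \emph{method of mixtures} (pseudo-maximization), following Abbasi-Yadkori, Pál and Szepesvári. First I would fix a deterministic $v \in \RR^d$ and introduce the scalar process
\begin{align*}
M_t^v = \exp\left(\la v, S_t\ra - \frac{R^2}{2}\, v^\top V_t v\right), \qquad S_t = \sum_{i=1}^t \xb_i \epsilon_i, \quad V_t = \sum_{i=1}^t \xb_i \xb_i^\top .
\end{align*}
Since $\xb_t$ is $\cF_{t-1}$-measurable and $\epsilon_t$ is conditionally $R$-sub-Gaussian, applying the sub-Gaussian bound with the ($\cF_{t-1}$-measurable) scalar $\la v,\xb_t\ra$ gives $\EE[\exp(\la v,\xb_t\ra \epsilon_t)\mid \cF_{t-1}] \le \exp(\tfrac{R^2}{2}\la v,\xb_t\ra^2)$, hence $\EE[M_t^v \mid \cF_{t-1}] \le M_{t-1}^v$; thus $(M_t^v)_t$ is a nonnegative supermartingale with $\EE[M_0^v] = 1$.

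Next I would mix over $v$ against the Gaussian density $h(v) \propto \exp(-\tfrac{R^2}{2}\, v^\top \bSigma_0 v)$ with $\bSigma_0 = \lambda \Ib$, setting $\bar M_t = \int_{\RR^d} M_t^v\, h(v)\, \mathrm{d}v$. By nonnegativity of all integrands, Tonelli's theorem lets one interchange the integral with conditional expectation, so $\bar M_t$ is again a nonnegative supermartingale with $\EE[\bar M_0] = \int h(v)\,\mathrm{d}v = 1$. Completing the square in the exponent $\la v, S_t\ra - \tfrac{R^2}{2} v^\top \bSigma_t v$, where $\bSigma_t = \bSigma_0 + V_t = \lambda \Ib + \sum_{i=1}^t \xb_i\xb_i^\top$, evaluates the Gaussian integral in closed form and yields
\begin{align*}
\bar M_t = \left(\frac{\det \bSigma_0}{\det \bSigma_t}\right)^{1/2} \exp\left(\frac{1}{2R^2}\,\|S_t\|^2_{\bSigma_t^{-1}}\right),
\end{align*}
so the self-normalized quantity of interest is recovered exactly as a monotone transform of the value of the mixed supermartingale. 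The crucial bookkeeping point here is that the prior's precision matrix must be taken to be $R^2\bSigma_0$ so that $V_t$ plus the prior term reassembles into $\bSigma_t$.

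Finally I would invoke Ville's maximal inequality for nonnegative supermartingales: $\PP(\sup_{t\ge 0}\bar M_t \ge 1/\delta) \le \delta\,\EE[\bar M_0] \le \delta$. On the complementary event, $\bar M_t < 1/\delta$ holds for all $t$ simultaneously; taking logarithms and rearranging gives $\|S_t\|^2_{\bSigma_t^{-1}} \le 2R^2 \log\!\big(\det(\bSigma_t)^{1/2}\det(\bSigma_0)^{-1/2}/\delta\big)$ for every $t \in [T]$, which is the stated bound (read with $\eta_i = \epsilon_i$ and the summation truncated at the same index $t$ as $\bSigma_t$, which is how the lemma is used elsewhere in the paper). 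The main obstacle is the mixture step itself: recognizing that integrating the one-parameter family $\{M_t^v\}$ against a correctly normalized Gaussian prior again produces a supermartingale, and that this forces precisely the $\det(\bSigma_t)^{1/2}\det(\bSigma_0)^{-1/2}$ factor; the remaining pieces — the conditional-expectation/integration interchange and the closed-form Gaussian integral — are routine. A peeling or $\epsilon$-net union bound is a viable alternative but produces worse constants and does not reproduce this exact determinant ratio, which is what later arguments in the paper rely on.
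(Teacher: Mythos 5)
Your proof is correct and is essentially the original method-of-mixtures argument of Abbasi-Yadkori, P\'al and Szepesv\'ari; the paper itself does not prove this lemma but imports it verbatim from that reference, so there is nothing to diverge from. The supermartingale construction, the Gaussian mixture with precision $R^2\bSigma_0$, the closed-form determinant ratio, and the appeal to Ville's inequality are all exactly as in the cited source, and you correctly resolve the statement's minor notational slip (the $\eta_i$ versus $\epsilon_i$ mismatch and the truncation of the sum at the same index $t$ as $\bSigma_t$).
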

\begin{lemma}[Lemma 11, \citealt{abbasi2011improved}]\label{lem:abbasi}
    For any $\lambda > 0$ and sequence $\{\xb_t\}_{t=1}^T \subseteq \RR^d$ for $t \in [T]$, define $\Zb_{t} = \lambda\Ib + \sum_{i=1}^{t-1}\xb_i\xb_i^\top$. Then, provided that $\|\xb_t\|_2 \leq L$ holds for all $t \in [T]$, we have
    \begin{align*}
        \sum_{t=1}^T \min\Big\{1,\|\xb_t\|_{\Zb_t^{-1}}^2\Big\} \leq 2d \log (1+TL^2/(d\lambda)).
    \end{align*}
\end{lemma}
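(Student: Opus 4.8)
The plan is to run the standard \emph{elliptical potential} argument. First I would pass from the truncated quadratic form to a logarithm using the elementary scalar inequality $\min\{1,x\}\le 2\log(1+x)$, which holds for all $x\ge 0$ (check it at $x=0$ and $x=1$ and use concavity/monotonicity). Applying it with $x=\|\xb_t\|_{\Zb_t^{-1}}^2$ gives
\begin{align*}
\sum_{t=1}^T \min\big\{1,\|\xb_t\|_{\Zb_t^{-1}}^2\big\} \le 2\sum_{t=1}^T \log\big(1+\|\xb_t\|_{\Zb_t^{-1}}^2\big).
\end{align*}

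Next I would recognize each summand as a one-step increment of $\log\det$. With the indexing here, $\Zb_t=\lambda\Ib+\sum_{i=1}^{t-1}\xb_i\xb_i^\top$, so $\Zb_{t+1}=\Zb_t+\xb_t\xb_t^\top$ and $\Zb_t$ is positive definite. Writing $z=\Zb_t^{-1/2}\xb_t$ and using the rank-one determinant identity $\det(\Ib+zz^\top)=1+\|z\|_2^2$, one gets $\det(\Zb_{t+1})=\det(\Zb_t)\big(1+\|\xb_t\|_{\Zb_t^{-1}}^2\big)$, hence $\log\big(1+\|\xb_t\|_{\Zb_t^{-1}}^2\big)=\log\det(\Zb_{t+1})-\log\det(\Zb_t)$. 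Summing telescopes, and since $\Zb_1=\lambda\Ib$ this yields $\sum_{t=1}^T \log\big(1+\|\xb_t\|_{\Zb_t^{-1}}^2\big)=\log\det(\Zb_{T+1})-d\log\lambda$.

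Finally I would bound $\det(\Zb_{T+1})$ by AM--GM on its eigenvalues: $\det(\Zb_{T+1})\le \big(\text{Tr}(\Zb_{T+1})/d\big)^d=\big((d\lambda+\sum_{t=1}^T\|\xb_t\|_2^2)/d\big)^d\le \big((d\lambda+TL^2)/d\big)^d$, using $\|\xb_t\|_2\le L$. Combining the three steps gives $\sum_{t=1}^T \min\{1,\|\xb_t\|_{\Zb_t^{-1}}^2\}\le 2d\log\big(1+TL^2/(d\lambda)\big)$, which is exactly the claim. I do not expect a real obstacle: the only care points are matching the indexing convention so that the update from $\Zb_t$ to $\Zb_{t+1}$ is precisely $\xb_t\xb_t^\top$, and justifying the scalar inequality $\min\{1,x\}\le 2\log(1+x)$. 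Since this is Lemma 11 of \citet{abbasi2011improved}, one could alternatively just cite it verbatim.
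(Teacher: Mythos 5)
Your proof is correct and is precisely the standard elliptical potential argument used in Lemma 11 of \citet{abbasi2011improved}, which the paper simply cites without reproving: the scalar bound $\min\{1,x\}\le 2\log(1+x)$, the rank-one determinant identity giving a telescoping $\log\det$ sum, and the trace/AM--GM bound on $\det(\Zb_{T+1})$ all check out, including the indexing so that $\Zb_{t+1}=\Zb_t+\xb_t\xb_t^\top$. No gaps.
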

\begin{lemma}[Theorem 1 in \citealt{faury2020improved}]\label{lemma:faury}
    Let $\{\cF_t\}_{t=1}^{\infty}$ be a filtration. Let $\{\xb_t\}_{t=1}^{\infty}$ be a stochastic process, such that $\xb_t$ is $\cF_t$-measurable. Suppose $\|\xb_t\|_2 \le 1$. Furthermore, let $\{\epsilon_{t}\}_{t=1}^{\infty}$ be a stochastic process which is $\cF_{t+1}$-measurable. Assume $|\epsilon_t| \le 1$. $\EE[\epsilon_t|\cF_t] = 0$. $\EE[\epsilon_t^2|\cF_t] = \sigma^2_t$. Let $\lambda > 0$ and for any $t \ge 1$ define: 
    \begin{align*}
        \Hb_t := \sum_{s=1}^{t} \sigma_s^2 \xb_s \xb_s^\top +\lambda \Ib.
    \end{align*}
    Then with probability at least $1-\delta$, for all $t \in [T]$, we have
    \begin{align*}
        \bigg\|\sum_{i=1}^{t} \xb_i \epsilon_i\bigg\|_{\Hb_t^{-1}} \le \frac{\sqrt{\lambda}}{2} + \frac{2}{\sqrt{\lambda}}\log \bigg(\frac{\det(\Hb_t)^{1/2} \lambda^{-d/2}}{\delta}\bigg) + \frac{2}{\sqrt{\lambda}} d\log(2).
    \end{align*}  
\end{lemma}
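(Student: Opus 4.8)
\textbf{Proof proposal for Lemma~\ref{lemma:faury}.} This is a Bernstein-type self-normalized concentration bound, and the plan is to establish it by the method of mixtures (pseudo-maximization), modified so that the exponent tracks the conditional variances $\sigma_s^2$ rather than treating $\epsilon_s$ merely as a bounded noise. Write $S_t=\sum_{s=1}^t\xb_s\epsilon_s$ and $\bar{\Hb}_t:=\Hb_t-\lambda\Ib=\sum_{s=1}^t\sigma_s^2\xb_s\xb_s^\top$. The first step is a pointwise exponential supermartingale: I would use the elementary inequality $e^{zu}\le 1+zu+\psi(z)u^2$, valid for all $z\in\RR$ and $|u|\le 1$ with $\psi(z):=e^{|z|}-1-|z|$ (which follows by comparing power series, the ratio $(e^{zu}-1-zu)/u^2$ being maximized over $|u|\le1$ at $u=\operatorname{sign}(z)$), applied with $u=\epsilon_s$ and $z=\langle\mathbf{v},\xb_s\rangle$; using $\EE[\epsilon_s\mid\cF_s]=0$ and $\EE[\epsilon_s^2\mid\cF_s]=\sigma_s^2$ this gives, for each fixed $\mathbf{v}\in\RR^d$,
\begin{align*}
\EE\big[e^{\langle\mathbf{v},\xb_s\rangle\epsilon_s}\mid\cF_s\big]\le 1+\psi(\langle\mathbf{v},\xb_s\rangle)\sigma_s^2\le\exp\!\big(\psi(\langle\mathbf{v},\xb_s\rangle)\sigma_s^2\big),
\end{align*}
so $M_t(\mathbf{v}):=\exp\!\big(\langle\mathbf{v},S_t\rangle-\textstyle\sum_{s=1}^t\psi(\langle\mathbf{v},\xb_s\rangle)\sigma_s^2\big)$ is a nonnegative supermartingale with $\EE[M_0(\mathbf{v})]=1$.

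Next I would take a \emph{localized} Gaussian mixture over $\mathbf{v}$. Since $\|\xb_s\|_2\le1$, on the ball $\{\|\mathbf{v}\|_2\le1\}$ we have $|\langle\mathbf{v},\xb_s\rangle|\le1$, and there $\psi(z)\le z^2$ (as $z\mapsto\psi(z)/z^2$ is increasing with $\psi(1)=e-2<1$), hence $M_t(\mathbf{v})\ge\exp\!\big(\langle\mathbf{v},S_t\rangle-\mathbf{v}^\top\bar{\Hb}_t\mathbf{v}\big)$ there. Take $\mu$ to be $\mathcal N(\mathbf{0},\tfrac1{2\lambda}\Ib)$ truncated to that ball; by Tonelli, $\bar M_t:=\int M_t(\mathbf{v})\,\mathrm{d}\mu(\mathbf{v})$ is a nonnegative supermartingale with $\EE[\bar M_t]\le1$, and completing the square in the $(\text{Gaussian}\times\text{quadratic-exponent})$ integrand one obtains $\bar M_t\ge 2^{-d}\det(\Ib+\tfrac1\lambda\bar{\Hb}_t)^{-1/2}\exp\!\big(\tfrac14\|S_t\|_{\Hb_t^{-1}}^2\big)$, the $2^{-d}$ accounting for the truncation to the unit ball (controlled by comparing the prior mass near the quadratic maximizer $\mathbf{v}^\star=\tfrac12\Hb_t^{-1}S_t$ to the total mass, e.g.\ a ball/half-ball volume ratio, with a peeling step over $\|S_t\|_{\Hb_t^{-1}}$ for the regime where $\mathbf{v}^\star$ nears the boundary).

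Finally, Ville's maximal inequality gives $\PP(\exists t:\bar M_t\ge1/\delta)\le\delta$; on the complementary event, taking logarithms in the bound above and using $\det(\Ib+\tfrac1\lambda\bar{\Hb}_t)=\det(\Hb_t)\lambda^{-d}$ yields
\begin{align*}
\tfrac14\|S_t\|_{\Hb_t^{-1}}^2\le\log\tfrac1\delta+\log\!\big(\det(\Hb_t)^{1/2}\lambda^{-d/2}\big)+d\log 2 ,
\end{align*}
and then the elementary inequality $x\le\tfrac{\sqrt\lambda}{2}+\tfrac1{2\sqrt\lambda}x^2$ with $x=\|S_t\|_{\Hb_t^{-1}}$ gives exactly the stated bound $\|S_t\|_{\Hb_t^{-1}}\le\tfrac{\sqrt\lambda}{2}+\tfrac2{\sqrt\lambda}\log(\det(\Hb_t)^{1/2}\lambda^{-d/2}/\delta)+\tfrac2{\sqrt\lambda}d\log 2$.

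The main obstacle is precisely the localization in the mixture step: because $\psi$ is super-quadratic, the unrestricted Gaussian mixture diverges, so one is forced to truncate the mixing parameter to a ball (whose radius cannot exceed $1$, given $\|\xb_s\|_2\le1$) and must then quantify how much prior mass the truncation destroys, showing that it costs only the additive $d\log 2$ term in the statement rather than something that degrades with $T$ — this needs the argument that a constant (order $2^{-d}$) share of the prior concentrates near $\mathbf{v}^\star$, with the boundary regime handled separately by peeling over $\|S_t\|_{\Hb_t^{-1}}$. Everything else is the standard self-normalized-martingale / method-of-mixtures machinery (as in \citet{abbasi2011improved}).
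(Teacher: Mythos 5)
The paper gives no proof of Lemma~\ref{lemma:faury}; it is quoted verbatim (constants included) from Theorem~1 of \citet{faury2020improved}, so the only thing to compare against is that source. Your reconstruction is essentially that proof: the Bennett-type bound $e^{zu}\le 1+zu+\psi(z)u^2$ tracking the conditional variances, the supermartingale $M_t(\mathbf{v})$, a Gaussian mixture forcibly localized to the unit ball because $\psi$ is super-quadratic, Ville's inequality, and the AM--GM conversion from $\|S_t\|_{\Hb_t^{-1}}^2$ to $\|S_t\|_{\Hb_t^{-1}}$ all match, and your constants land exactly on the stated bound, including the $d\log 2$ truncation cost. The one place your sketch is genuinely thin is the step you yourself flag: a uniform ``$2^{-d}$ of the shifted Gaussian mass survives the truncation'' is false when the maximizer $\mathbf{v}^\star=\tfrac12\Hb_t^{-1}S_t$ sits far outside the ball, so the boundary/peeling argument you allude to is not optional decoration but the actual content of that step, and it would need to be written out (as in the cited proof) before this counts as a complete derivation rather than a correct blueprint.
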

\begin{lemma}[Lemma 7 in \label{lemma:ab}\citealt{abeille2021instance}]
    Let $f$ be a strictly increasing function such that $|\ddot f| \le \dot f$, and let $I$ be any bounded interval of $\RR$. Then, for all $z_1,z_2\in I$, the following inequality holds:
    \begin{align*}
        \int_{0}^1 \dot f\big(z_1+v(z_2-z_1)\big)\mathrm{d}v \ge \frac{\dot f(z)}{1+|z_1-z_2|} \text{ for }z\in \{z_1,z_2\}.
    \end{align*}
    \end{lemma}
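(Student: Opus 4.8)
\textbf{Proof proposal for Lemma \ref{lemma:ab}.}
The plan is to read the hypothesis $|\ddot f| \le \dot f$ as a one-sided Lipschitz bound on $\log \dot f$, apply it pointwise inside the integral, and then reduce the entire claim to the elementary scalar inequality $e^{s} \ge 1 + s$.

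First I would establish that $\dot f > 0$ throughout, which is needed both to form $\log \dot f$ and to divide by $\dot f(z)$ at the end. Strict monotonicity gives $\dot f \ge 0$. The hypothesis yields $\ddot f \ge -\dot f$ and $\ddot f \le \dot f$, i.e. $(\dot f\, e^{x})' = (\ddot f + \dot f)e^{x} \ge 0$ and $(\dot f\, e^{-x})' = (\ddot f - \dot f)e^{-x} \le 0$, so $\dot f\, e^{x}$ is nondecreasing and $\dot f\, e^{-x}$ is nonincreasing. If $\dot f(x_0) = 0$ at some point, the first fact forces $\dot f \le 0$ on $(-\infty, x_0]$ and the second forces $\dot f \le 0$ on $[x_0, \infty)$; combined with $\dot f \ge 0$ this gives $\dot f \equiv 0$, contradicting strict increase. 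Hence $\dot f > 0$, and dividing the hypothesis by $\dot f$ gives $\big|\tfrac{d}{dx}\log \dot f\big| = |\ddot f/\dot f| \le 1$. Integrating this between any two points produces the multiplicative estimate $\dot f(x) \ge \dot f(y)\, e^{-|x-y|}$.

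Next I would plug this estimate into the integral. Writing $\Delta = z_2 - z_1$ and taking $z = z_1$, each point $z_1 + v\Delta$ lies at distance $v|\Delta| \le |\Delta|$ from $z_1$, so $\dot f(z_1 + v\Delta) \ge \dot f(z_1)\, e^{-v|\Delta|}$. Factoring out $\dot f(z_1)$ and computing the resulting elementary integral,
\[
\int_0^1 \dot f\big(z_1 + v\Delta\big)\,\mathrm d v \;\ge\; \dot f(z_1)\int_0^1 e^{-v|\Delta|}\,\mathrm d v \;=\; \dot f(z_1)\,\frac{1 - e^{-|\Delta|}}{|\Delta|}.
\]
It then remains to verify $\tfrac{1 - e^{-s}}{s} \ge \tfrac{1}{1+s}$ with $s = |\Delta|$, which after clearing denominators is exactly $e^{s} \ge 1 + s$, the standard convexity inequality. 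For the case $z = z_2$, I would substitute $w = 1 - v$ to rewrite the same integral as $\int_0^1 \dot f(z_2 - w\Delta)\,\mathrm d w$, apply $\dot f(z_2 - w\Delta) \ge \dot f(z_2)\, e^{-w|\Delta|}$, and repeat the identical computation to obtain the bound with $\dot f(z_2)$.

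The calculation is entirely routine, so there is no substantial obstacle; the only points that require care are the positivity of $\dot f$ (handled above) and the degenerate case $z_1 = z_2$, which follows by continuity since $\tfrac{1 - e^{-s}}{s} \to 1$ as $s \to 0$, matching the common value of both sides.
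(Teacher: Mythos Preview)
Your proof is correct. The paper does not actually prove this statement: it records the lemma in the auxiliary section as a direct citation of Lemma~7 in \citet{abeille2021instance}, with no accompanying argument, so there is no in-paper proof to compare against. Your route---reading $|\ddot f|\le\dot f$ as a $1$-Lipschitz bound on $\log\dot f$, deducing $\dot f(x)\ge\dot f(y)e^{-|x-y|}$, integrating, and closing with $e^s\ge 1+s$---is the standard self-concordance argument and matches how the cited source proves it; the positivity discussion and the degenerate case are handled cleanly.
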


\end{document}